\newtheorem{theorem}{Theorem}
\newtheorem{lemma}{Lemma}
\theoremstyle{definition}
\newtheorem{definition}{Definition}
\newtheorem{remark}{Remark}
\newcommand{\change}[1]{\textcolor{black}{#1}}
\title{Collaborative Planar Pushing of Polytopic Objects with Multiple Robots in Complex Scenes}
\author{Zili Tang, Yuming Feng and Meng Guo$^*$ \\ College of Engineering, Peking University
  \thanks{$^*$Corresponding author: Meng Guo, {\tt\small meng.guo@pku.edu.cn}.}

}
\begin{document}
\maketitle
\thispagestyle{empty}
\pagestyle{empty}


\begin{abstract}
  Pushing is a simple yet effective skill for robots to interact with and further
  change the environment.
  Related work has been mostly focused on utilizing 
  it as a non-prehensile manipulation primitive for a
  robotic manipulator.
  However, it can also be beneficial for low-cost mobile robots that are not equipped with a manipulator.
  This work tackles the general problem of controlling a team of mobile robots
  to push collaboratively polytopic objects within complex obstacle-cluttered environments.
  It incorporates several characteristic challenges for contact-rich tasks such as the
  hybrid switching among different contact modes and under-actuation due to constrained contact forces.
  The proposed method is based on hybrid optimization over a sequence of possible modes 
  and the associated pushing forces,
  where (i) a set of sufficient modes is generated with a 
  multi-directional feasibility estimation,
  based on quasi-static analyses for general objects and any number of robots;
  (ii) a hierarchical hybrid search algorithm is designed to iteratively
  decompose the navigation path via arc segments 
  and select the optimal parameterized mode;
  and (iii) a nonlinear model predictive controller is proposed to track the 
  desired pushing velocities
  adaptively online for each robot.
  The proposed framework is complete under mild assumptions.
  Its efficiency and effectiveness are validated in high-fidelity simulations and hardware experiments.
  Robustness to motion and actuation uncertainties is also demonstrated.
  More examples and videos are available at \url{https://zilitang.github.io/Collaborative-Pushing}.
\end{abstract}

\section{Introduction}\label{sec:intro}
Non-prehensile manipulation skills such as pushing are essential
when humans interact with objects, as an important complementary
to prehensile skills such as stable grasping.
Existing work can be found that exploits this aspect for robotic systems,
mostly for a single manipulator within simple environments,
e.g.,~\citet{goyal1989limit, lynch1992manipulation, hogan2020reactive, xue2023guided}.
However, pushing can also be beneficial for low-cost mobile robots
that are {not} equipped with a manipulator,
e.g., obstacles could be pushed away,
or target objects could be pushed to destinations.
As illustrated in Fig.~\ref{fig:intro},
a fleet of such robots could be more efficient
by collaboratively and simultaneously pushing the same object
at different points with different forces.
Theses combinations generate a rich set of pushing modes,
e.g., long-side, short-side, diagonal, and caging,
which lead to different motions of the object such translation and rotation,
however with varying quality.
Despite of its intuitiveness, the collaborative pushing task incorporates
several challenges that are typical for contact-rich tasks,
i.e., the hybrid dynamic system due to switching
between different contact modes,
under-actuation due to constrained contact forces,
and tight kinematic and geometric constraints
(such as narrow passages and sharp turns),
yielding it a difficult problem not only for
modeling and planning, but also for optimal control.

\begin{figure}[t]
  \centering
  \includegraphics[width=0.98\linewidth]{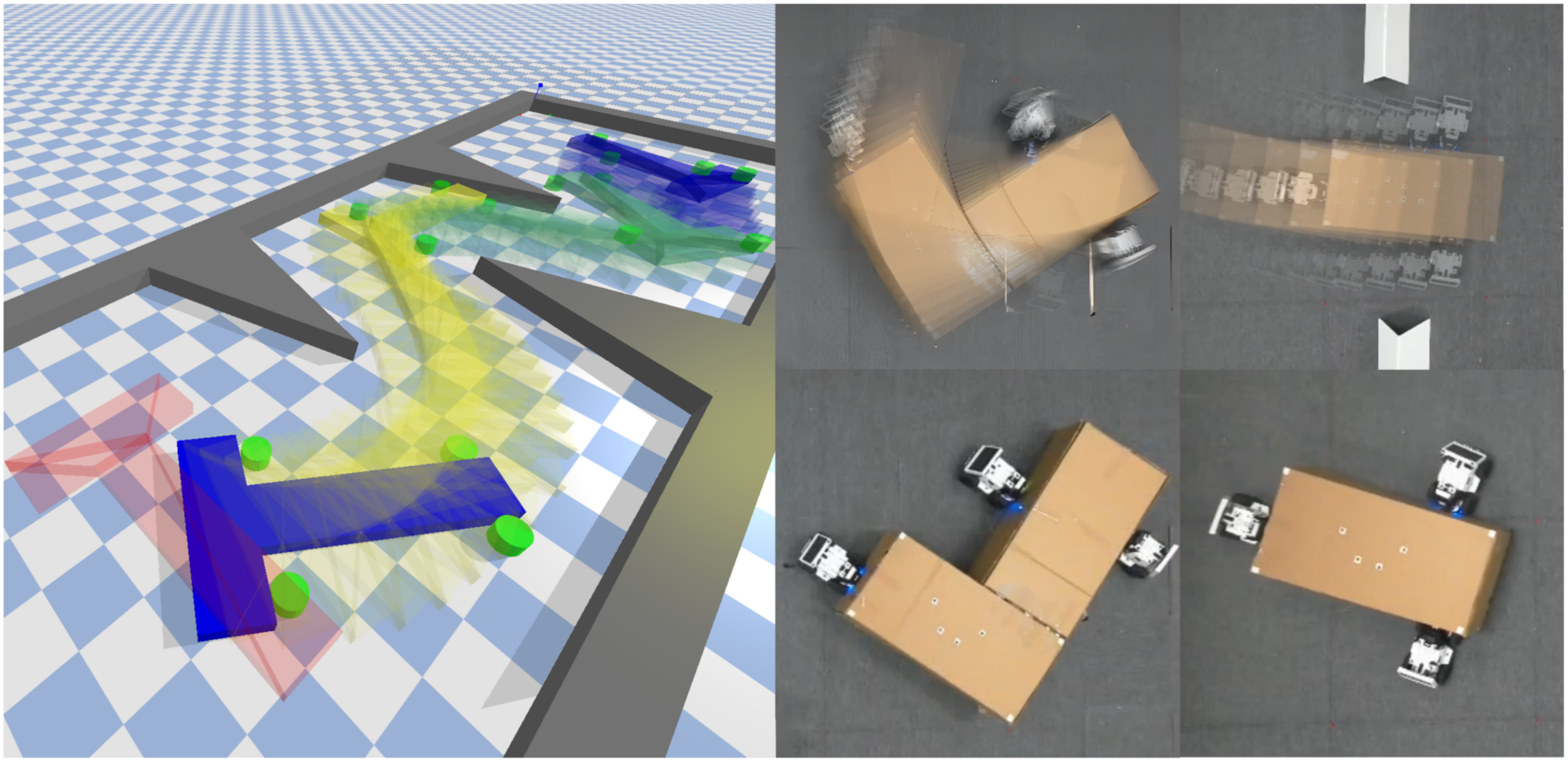}
  \caption{Snapshots of the collaborative pushing
  task in simulation and hardware experiments.}
  \label{fig:intro}
  \vspace{-3mm}
\end{figure}

\subsection{Related Work}\label{subsec:intro-related}

The task and motion planning problem
for prehensile manipulation is the predominant research direction with a large body of literature.
It has a strong focus on the grasping policies for different objects,
and the sequence of manipulating these objects for a long-term manipulation task,
such as assembly in~\citet{toussaint2015logic,guo2021geometric},
re-arrangement in~\citet{kim2019learning} and construction in~\citet{22-hartmann-TRO}.
On the other hand, non-prehensile planar manipulation is a challenging problem
on its own for model-based planning and control without stable grasping.
The \emph{single-pusher-single-slider} system has been introduced as the classic
model from~\citet{goyal1989limit},
where both the discrete decision for interaction modes and continuous optimization
for control inputs are essential.
However, the inclusion of discrete variables drastically increases the planning complexity,
i.e., exponential to the planning horizon.
For a small number of pre-defined modes such as sticking and sliding,
this decision can be directly formulated as an integer variable in
the motion planning scheme from~\citet{hogan2020reactive}
thus solved via integer programs.
In addition, various search methods can be applied to explore the statespace
e.g., via sampling in~\citet{wang2021learning},
multi-bound tree search in~\citet{toussaint2018differentiable},
and backtracking in~\citet{kaelbling2011hierarchical}.
To further alleviate the complexity,
human demonstrations are used in~\citet{xue2023guided,le2021learning}
to guide the choice of contact modes,
while learning-based methods are proposed
to predict the contact sequence in~\citet{simeonov2021long} or
even raw inputs in~\citet{zeng2018learning} for short-term tasks.
However, these methods are mostly developed for a single manipulator
and fix-shaped objects,
of which the control and planning techniques are not applicable to
multi-robot fleets.

Collaborative pushing belongs to a larger application domain of multi-robot systems:
cooperative object transportation, see~\citet{tuci2018cooperative},
which includes pushing, grasping and caging as different behaviors.
The pioneer work in~\citet{kube1997task} designs a multi-layer
state machine for~$5$ robots to
push a rectangular box without direct communication in free space.
A simple yet effective strategy is proposed in~\citet{chen2015occlusion}
to transport any convex object in obstacle-free space,
i.e., the robots only push the object at positions
where the direct line to the goal is occluded by the object.
A combinatorial-hybrid optimization problem is formulated
in~\citet{tang2023combinatorial} for several objects
but with pre-defined pushing primitives.
Fleets of heterogeneous robots are deployed in~\citet{vig2006multi} to clear
movable obstacles via pushing, however via a pre-defined sequence of steps.
Model-free approaches can be found in~\citet{xiao2020learning}
that map directly the scene to control inputs but lacks the theoretical guarantees on completeness.
Other related work neglects the control aspect and focuses only on the
high-level task assignment, e.g.,~\citet{garcia2013scalable}.

\subsection{Our Method}\label{subsec:intro-our}
This work addresses the general problem of controlling a team of mobile robots
to push collaboratively a polytopic object to a goal position in a complex environment,
without any pre-defined contact modes or primitives.
To begin with,
a multi-directional feasibility analysis is conducted for a quasi-static pushing condition,
given the desired object motion and a parameterized contact mode.
Then, the notion of sufficient contact modes is introduced for pushing the object
along a desired trajectory, based on its finite segmentation into arcs.
Thus, a hierarchical hybrid search algorithm is proposed for
finding a feasible sequence of modes and the associated force profiles.
It iteratively decomposes the desired trajectory via key-frames and expands the search tree
via generating suitable modes between consecutive frames,
in order to minimize the aforementioned feasibility loss, transition cost and control efforts.
Lastly, a nonlinear model predictive controller (MPC) is proposed to track these arc segments
online with the given mode, while accounting for motion uncertainties and undesired slips adaptively.
Theoretical guarantees on completeness and performance are provided.
Extensive high-fidelity simulations and hardware experiments are conducted
to validate its efficiency and robustness.

Main contribution of this work is two-fold:
(i) the novel hierarchical hybrid-search algorithm
for the multi-robot planar pushing problem of general polytopic objects,
without any pre-defined contact modes or primitives;
(ii) the theoretical condition for feasibility and completeness,
w.r.t. an arbitrary number of robots, any polytopic shape and any desired object trajectory.
To the best of our knowledge, this is the first work that provides such results.



\section{Problem Description}\label{sec:problem}

\subsection{Model of Workspace and Robots}\label{subsec:ws}

Consider a team of~$N$ robots~$\mathcal{R}\triangleq \{R_n,\,n\in \mathcal{N}\}$
that collaborate in a shared 2D workspace~$\mathcal{W}\subset \mathbb{R}^2$,
where~$\mathcal{N}\triangleq \{1,\cdots,N\}$.
The robots are homogeneous with a circular or rectangular shape.
The state of each robot at time~$t\geq 0$ is defined by its position
and orientation,
i.e.~$\mathbf{x}_n(t)\triangleq(x_{n},y_{n})$ is the center
coordinate,~$\psi_{n}$ is the rotation,~$\mathbf{v}_n(t)\triangleq(v_{nx},v_{ny})$
is the linear velocity,
and~$\omega_{n}$ is the angular velocity,
$\forall n \in \mathcal{N}$.
Let~$R_n(t)\subset \mathcal{W}$ also denote the area occupied by robot~$R_n$
at time~$t\geq 0$.
Moreover, each robot has a feedback controller for velocity tracking.
Thus, the robots follow a simple first-order dynamics in the free space,
i.e., $(\dot{\mathbf{x}}_n,\dot{\psi}_n)= (\mathbf{v}_n,\omega_n)\triangleq \mathbf{u}_n$.
In addition, the workspace is cluttered with a set of
static obstacles, denoted by~$\mathcal{O} \subset \mathcal{W}$.
Their shape and position are known a priori.
Thus, the freespace is defined
by~$\widehat{\mathcal{W}}\triangleq \mathcal{W}\backslash \mathcal{O}$.

Lastly, there is a single target object~$\Omega\subset \widehat{\mathcal{W}}$,
of which the shape is an arbitrary polygon formed by
$V\geq 3$ ordered vertices ${p_1p_2\cdots p_V}$.
Similarly, its state at time~$t\geq 0$ is defined by its position
and orientation,
i.e.~$\mathbf{x}(t)\triangleq(x,y)\in \widehat{\mathcal{W}}$ is the center of mass
and~$\psi \in [-\pi,\pi]$ is the rotation,
$\mathbf{v}(t)\triangleq(v_x,v_y)$ is the linear velocity
and~$\omega$ is the angular velocity,
and its occupied area is denoted by~$\Omega(t)$.
For the ease of notation,
let~$\mathbf{s}(t)\triangleq(x,y,\psi)\in \mathcal{S}$
denote its full state
and~$\mathbf{p}(t)\triangleq (v_x,v_y,\omega)$
denote its generalized velocity.
In addition,
its intrinsics are known a priori, including its mass~$M$,
the moment of inertia~$\mathcal{I}$ about the center of mass,
the pressure distribution at bottom surface,
the coefficient of lateral friction~$\mu_c>0$,
and the coefficient of ground friction~$\mu_s>0$.



\begin{figure*}
  \centering
  \includegraphics[width=1\linewidth]{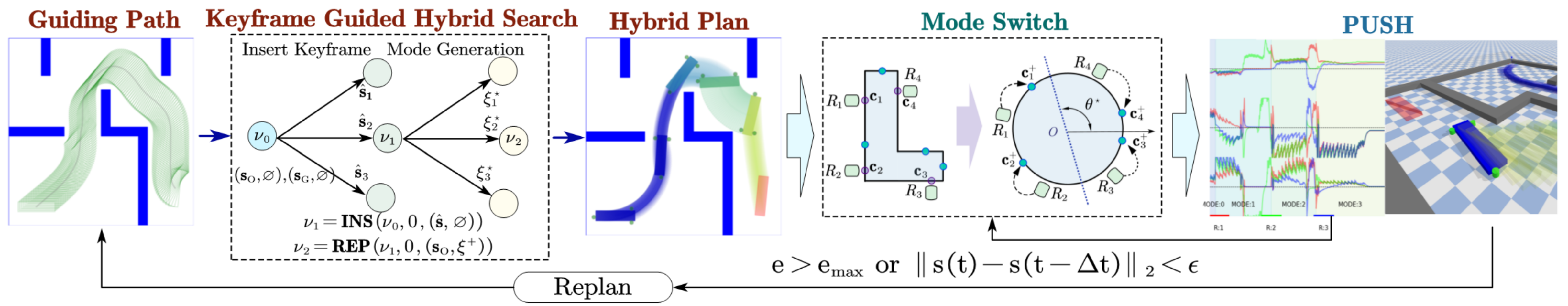}
  \vspace{-4mm}
  \caption{\change{Overall framework, including the hybrid plan generation in Sec.~\ref{subsec:hybrid}
      and the online execution in Sec.~\ref{subsec:control}}.}
  \label{fig:overall}
  \vspace{-2mm}
\end{figure*}

\subsection{Interaction Modes and Coupled Dynamics}\label{ss:interaction_mode}
\def\sss{\scriptscriptstyle{}}
The robots can collaboratively move the target by
making contacts with the target at different contact points,
called \emph{interaction modes}.
More specifically, an interaction mode is defined
by~$\xi\triangleq \mathbf{c}_1\mathbf{c}_2\cdots\mathbf{c}_N$,
where~$\mathbf{c}_n\in \partial \Omega$ is the contact point on the boundary
of the target for the $n$-th robot,
$\forall n\in \mathcal{N}$.
Since the set of contact points are \emph{not} pre-defined,
the complete set of all interaction modes
is potentially infinite, denoted by~$\Xi$.
Moreover, if any robot~$R_n$ is not in contact with the target,
it is called a transition mode, denoted by~$\xi_0$.
Thus, given a particular interaction mode,
the robots can apply pushing forces in different directions
with different magnitude.
Denote by~$\mathbf{f}_1 \mathbf{f}_2 \cdots \mathbf{f}_N$,
where~$\mathbf{f}_n\in \mathbb{R}^2$ is the contact force of robot~$R_n$ at
contact point~$\mathbf{c}_n$, $\forall n\in \mathcal{N}$.
Furthermore, each force~$\mathbf{f}_n$ can be decomposed
in the directions of the normal vector~$\mathbf{n}_n$ and the tangent vector~$\bm{\tau}_{n}$
w.r.t.~the target surface at the contact point~$\mathbf{c}_n$,
i.e.,~$\mathbf{f}_n\triangleq \mathbf{f}^{\texttt{n}}_n + \mathbf{f}^{\texttt{t}}_n
\triangleq f^{\texttt{n}}_n \mathbf{n}_n + f^{\texttt{t}}_n \bm{\tau}_n$.
Due to the Coulomb law of friction see~\cite{kao2016contact},
it holds that:
\begin{equation}\label{eq:force-limit}
0\leq f^{\texttt{n}}_n \leq f_{n,\max};
\; 0\leq |f^{\texttt{t}}_n| \leq \mu_c f^{\texttt{n}}_n,
\end{equation}
where~$f_{n,\max}> 0$ is the maximum force each robot can apply;
and~$\mu_c$ is the coefficient of lateral friction defined earlier.
Then, these decomposed forces can be re-arranged by:
\begin{equation}\label{eq:F-xi}
\mathbf{F}_\xi\triangleq (\mathbf{F}^{\texttt{n}}_{\xi},\,\mathbf{F}^{\texttt{t}}_{\xi})
\triangleq (f^\texttt{n}_{1},\cdots,f^\texttt{n}_{\sss{N}},
f^\texttt{t}_{1},\cdots,f^\texttt{t}_{\sss{N}})\in\mathbb{R}^{\sss{2N}},
\end{equation}
and further~$\mathcal{F}_{\xi}\triangleq \{\mathbf{F}_{\xi}\}$
denotes the set of all forces within each mode~$\xi\in \Xi$.
Furthermore, the combined generalized
force~$\mathbf{q}_{\xi}\triangleq (f_x^\star,f_y^\star,m^\star)$
as also used in~\citet{lynch1992manipulation} is given by:
\begin{equation}\label{eq:generalized_force}
    (f_x^\star,\, f_y^\star)\triangleq \textstyle\sum_{n=1}^N{\mathbf{f}_n};\;
    m^\star\triangleq \sum_{n=1}^N{(\mathbf{c}_n-\mathbf{x}_n)\times \mathbf{f}_n},
\end{equation}
where~$\times$ is the cross product and~$m^\star$ is the resulting torque from all robots.
It can be written in matrix form~$\mathbf{q}_{\xi}\triangleq \mathbf{J}\mathbf{F}_\xi$,
where $\mathbf{J}\triangleq \nabla_{\sss{\mathbf{F}_\xi}} \mathbf{q}_{\xi}$ is
a~$3\times 2N$ Jacobian matrix.
Similarly, let $\mathrm{Q}_{\xi}\triangleq \{\mathbf{q}_{\xi}\}$
denote the set of all \emph{allowed} combined generalized forces within each mode~$\xi\in \Xi$.
Given~$\mathbf{q}_{\xi}\in \mathrm{Q}_{\xi}$, the coupled dynamics of the robots and the target
under mode $\xi$ can be determined as follows:
\begin{subequations}\label{eq:multi_body}
  \begin{align}
    &\mathbf{M}\dot{\mathbf{p}}=\mathbf{q}_{\xi}+\mathbf{q}_{\mu}
    \triangleq \zeta (\mathbf{p},\mathbf{u}_{\mathcal{N}})+\eta(\mathbf{p});
    \label{eq:target_equation}\\
    &\dot{\mathbf{x}}_n =\mathbf{v}_n=\mathbf{v}
    +\bm{\omega}\times(\mathbf{c_n}-\mathbf{x}),
    \quad \forall n\in \mathcal{N}; \label{eq:robot_equation}
  \end{align}
\end{subequations}
where~$\mathbf{M} \triangleq \text{diag}(M,M,\mathcal{I})$;
$\mathbf{q}_{\xi}\triangleq \zeta(\mathbf{p},\mathbf{u}_{\mathcal{N}})$
is the generalized pushing force in~\eqref{eq:generalized_force}
given the combined control inputs~$\mathbf{u}_{\mathcal{N}}\triangleq\mathbf{u}_1 \cdots \mathbf{u}_{N}$
and the velocity of the target~$\mathbf{p}$;
and~$\mathbf{q}_{\mu}\triangleq \eta(\mathbf{p})=(f_x,f_y,m)$
is the ground friction, determined by the velocity~$\mathbf{p}$
and intrinsics of the target.
Namely,~\eqref{eq:target_equation} describes how the target moves
under the external forces $\mathbf{q}_{\xi}$ and $\mathbf{q}_\mu$
within mode~$\xi$;
and~\eqref{eq:robot_equation} describes how
the robot velocity is determined by the target velocity
under the non-slipping constraints.
\begin{remark}\label{remark:analysis}
Due to the complex nature of the multi-body physics under contact,
both the pushing forces~$\mathbf{q}_{\xi}$ and friction~$\mathbf{q}_{\mu}$
in~\eqref{eq:target_equation} lack an analytical form,
yielding difficulties in state prediction and control.
However, it is certain that~$\mathbf{q}_{\xi}$
must be \emph{allowed} in mode~$\xi$,
i.e.,~$\mathbf{q}_{\xi} \in \mathrm{Q}_{\xi}$.
\hfill $\blacksquare$
\end{remark}

\subsection{Problem Statement}\label{subsec:objective}
The planning objective is to compute a \emph{hybrid plan},
including a target trajectory~$\mathbf{s}(t)$,
a sequence of interaction modes~$\mathbf{\xi}(t)$ and the associated pushing
forces~$\mathbf{q}_{\xi}(t)$,
such that the target is moved from any initial state~$\mathbf{s}_{0}$
to a given goal state~$\mathbf{s}_{\texttt{G}}$.
Meanwhile, the robots and the target should avoid collision with all obstacles at all time.
More precisely, it is formulated as a hybrid optimization problem:
\begin{equation}\label{eq:problem}
\begin{split}
&\underset{{\{\mathbf{\xi}(t),\mathbf{q}_{\xi}(t),\mathbf{s}(t)\},T}}{\textbf{min}}\;
\Big{\{}T+\alpha \sum_{t=0}^{T} J\big(\xi(t),\mathbf{q}_{\xi}(t),{\mathbf{s}}(t)\big)\Big{\}} \\
&\textbf{s.t.}\quad \mathbf{s}(0)= \mathbf{s}_{0},\, \mathbf{s}(T)= \mathbf{s}_{\texttt{G}}; \\
& \quad \quad \, \Omega(t)\subset\widehat{\mathcal{W}},\,
  R_n(t)\subset\widehat{\mathcal{W}},\; \forall n\in \mathcal{N},\, \forall t \in \mathcal{T}; \\
& \quad \quad \, \xi(t)\in\Xi,\,
  \mathbf{q}_{\xi(t)} \in \mathrm{Q}_{\xi(t)},\; \forall t \in \mathcal{T};\\
\end{split}
\end{equation}
where~$T>0$ is the task duration to be optimized;
$\mathcal{T}\triangleq \{0,1,\cdots,T\}$;
$J:\Xi \times \mathbb{R}^3 \times \mathbb{R}^{3} \rightarrow \mathbb{R}_{+}$ is
a given function to measure the feasibility, stability and control cost
of choosing a certain mode and the applied forces given the desired target trajectory;
and~$\alpha>0$ is the weight parameter to balance the task duration
and the control performance.

\begin{remark}\label{remark:J}
  The design of function~$J(\cdot)$ is non-trivial
  and technically involved, thus explained further in the sequel.
  \hfill $\blacksquare$
\end{remark}

\begin{remark}\label{remark:complexity}
  The hybrid search space of~\eqref{eq:problem} above is
  over the timed sequence of pushing modes, forces and the target trajectory,
  which has dimension~$(N+6)\cdot T_{\max}$ with~$T_{\max}$ being the maximum duration.
  \hfill $\blacksquare$
\end{remark}

\begin{remark}\label{remark:modes}
  Existing work in~\citet{wang2006multi,goyal1989limit,chen2015occlusion,tang2023combinatorial}
  considers rectangular objects in
  freespace or assumes a set of a hand-picked modes
  and contact points.
  In contrast, general polytopic objects in cluttered workspace
  are adopted here by allowing multiple simultaneous contacts
  without any pre-defined modes.
  \hfill $\blacksquare$
\end{remark}

\section{Proposed Solution}\label{sec:solution}
\change{As illustrated in Fig.~\ref{fig:overall},
the proposed solution tackles the above collaborative pushing problem
with an efficient keyframe-guided hybrid search algorithm
over the timed sequence of pushing modes, forces and target trajectories.}
\change{The resulting hybrid plan is then executed by
a mode switching strategy and a NMPC controller.
A re-planning module governs the execution performance
and triggers the adaptation of high-level hybrid plan as needed.}

\def\sss{\scriptscriptstyle{}}
\def\bsss{\sss{\mathrm{B}}}
\subsection{Mode Generation in Freespace}\label{subsec:loss}
This section first presents how to evaluate the feasibility of a given mode
to push the target at a desired velocity in the quasi-static condition,
which is then extended to an arc transition in the freespace.
Based on this measure, a sparse optimization algorithm is proposed to
generate a set of effective modes given the desired target motion.

\subsubsection{Quasi-static Analyses}\label{subsubsec:quasi-static}
As described in Sec.~\ref{ss:interaction_mode},
it is difficult to derive an analytical form of
the coupled dynamics in~\eqref{eq:multi_body}.
As also adopted in~\citet{lynch1992manipulation},
\change{
the quasi-static analyses assume that the motion of the target is sufficiently slow,
such that its acceleration is approximately zero and the inertia forces can be neglected.
Consequently, the target can be constantly treated as in static equilibrium, 
i.e.,~$\mathbf{q}_\mu+\mathbf{q}_{\xi}=\mathbf{0}$.
Under quasi-static assumption,
the friction~$\mathbf{q}_\mu =(f_x,f_y,m)\in \mathbb{R}^3$
is constrained on a limit surface which is approximated by the following ellipsoid:
\begin{equation}\label{eq:limit_surface}
    (f_x/f_{\max})^2+(f_y/f_{\max})^2+(m/m_{\max})^2= 1,
\end{equation}}
where~$f_{\max}$ and~$m_{\max}$ are the maximum ground friction
and moment, defined at the center of the target.
Then, the relation between~$\mathbf{q}_{\mu}$
and~$\mathbf{p}$ is given by
$(f_x,f_y,m)=-\kappa (v_x, v_y,c^2\omega)$,
where $c \triangleq m_{\max}/f_{\max}$ and $\kappa>0$.
Consequently,
given the desired velocity~$\mathbf{p}^\star$ of the target,
the generalized friction force~$\mathbf{q}_\mu$
is computed as:
\begin{equation}\label{eq:friction_force}
  \mathbf{q}_\mu \triangleq \widetilde{\eta}(\mathbf{p}^\star)=
  -\|\mathbf{D}_{1}\mathbf{D}_{2}\mathbf{p}\|^{-1}_{_2}\mathbf{D}_{2}\mathbf{p}^\star,
\end{equation}
where~$\mathbf{D}_1\triangleq \text{diag}(f^{-1}_{\sss{\max}},f^{-1}_{\sss{\max}},m^{-1}_{\sss{\max}})$,
$\mathbf{D}_2\triangleq \text{diag}(1,1,c^2)$,
and~$\widetilde{\eta}(\cdot)$ is the quasi-static approximation
of~$\eta(\cdot)$ in~\eqref{eq:target_equation}.
Thus, assuming that no slipping happens during pushing,
the coupled dynamics in~\eqref{eq:multi_body}
under control inputs~$\mathbf{u}_\mathcal{N}$ and interaction mode~$\xi$
is approximated by:
\begin{subequations}\label{eq:simplified_couple_dynamics}
  \begin{align}
    &\mathbf{q}_{\xi}=-\mathbf{q}_\mu=-\widetilde{\eta}(\mathbf{p}^\star)\in \mathrm{Q}_{\xi}; \label{eq:force_constraint}\\
    &\mathbf{v}+\bm{\omega}\times(\mathbf{c}_n-\mathbf{x})  = \mathbf{u}_n,\,  \forall n\in \mathcal{N}; \label{eq:velocity_constraint}
  \end{align}
\end{subequations}
where~\eqref{eq:force_constraint} is a condition to check if a velocity~$\mathbf{p}$
is \emph{allowed} within mode~$\xi$.
This quasi-static equilibrium can be maintained
if there exists a velocity~$\widetilde{\mathbf{p}}^\star$
that satisfies both~\eqref{eq:force_constraint}
and~\eqref{eq:velocity_constraint},
in which case the target moves at
velocity~$\widetilde{\mathbf{p}}^\star$
with each robot pushing at velocity~$\mathbf{u}_{n}$.
If not, the quasi-static condition is violated
and the desired target motion can not be maintained.

\subsubsection{Multi-directional Feasibility Estimation}
\label{subsubsec:multi-directional-feasibility-estimation}
For further clarification,
let~$\mathbb{S}$ denote the ground coordinate system
and~$\mathbb{B}$ denote the body coordinate system fixed
at the center of the target~$\mathbf{x}(t)$.
The rotation matrix from~$\mathbb{B}$ to~$\mathbb{S}$ is denoted by~$\mathbf{R}_{\mathrm{B}\rightarrow \mathrm{S}}$,
which is time-varying as the target moves.
Then, the generalized velocity of the target in~$\mathbb{B}$
is defined as~$\mathbf{p}^{\bsss}\triangleq (v_x^{\bsss},v_y^{\bsss},\omega^{\bsss})
=\mathbf{R}_{\mathrm{B}\rightarrow \mathrm{S}}^{-1}\,\mathbf{p}$.
Moreover, the coordinate~$\mathbb{B}$ provides a \emph{rotation-invariant}
description of the mode~$\xi$.
Denote by~$\mathrm{Q}_{\xi}^{\bsss}\triangleq
\{\mathbf{R}_{\mathrm{B}\rightarrow \mathrm{S}}^{-1}\mathbf{q}_{\xi},
\forall \mathbf{q}_{\xi}\in \mathrm{Q}_{\xi}\}$
the set of allowed forces~$\mathrm{Q}_{\xi}$ transformed to~$\mathbb{B}$.
It can be seen that~$\mathrm{Q}_{\xi}^{\bsss}$ is a fixed set
for a given mode~$\xi$
and independent of the target rotation~$\psi$.
Similarly, denote by~$\mathcal{P}^{\bsss}_\xi\triangleq \{\mathbf{p}^{\bsss}|-\eta(\mathbf{p}^{\bsss})\in \mathrm{Q}^{\bsss}_\xi\}$
the set of allowed velocities in mode $\xi$ in $\mathbb{B}$,
which is also fixed.

The feasibility of pushing the target
at a generalized velocity~$\mathbf{p}^{\bsss}$ in a specific mode~$\xi$
can be evaluated by solving the following constrained optimization problem:
\begin{equation}\label{eq:feasibility}
  \mathrm{J}_{\texttt{F}}(\xi,\mathbf{p}^{\bsss})\triangleq
  \min_{\mathbf{q}_{\xi}^{\bsss}\in \mathrm{Q}_{\xi}^{\bsss}}\left\|\mathbf{q}_{\xi}^{\bsss}
  +\eta(\mathbf{p}^{\bsss})\right\|_{1},
\end{equation}
where~$\mathbf{q}_{\xi}^{\bsss}$ is the combined generalized force of all robots,
and~$\eta(\mathbf{p}^{\bsss})$ is the generalized friction of the target.
Note that~$\mathbf{q}_{\xi}^{\bsss}\in \mathrm{Q}_{\xi}^{\bsss}$ is a linear constraint,
and the~$\|\cdot\|_1$ norm can be transferred to a linear objective by adding auxiliary variables.
Thus,~\eqref{eq:feasibility} can be readily solved by LP solvers,
e.g.,~\texttt{CVXOPT} from~\citet{diamond2016cvxpy}.
If~$\mathrm{J}_\texttt{F}=0$, it means that the generalized
force~$\mathbf{q}^{\bsss}_{\mathcal{\xi}}=-\eta(\mathbf{p}^{\bsss})$ is allowed in mode~$\xi$,
i.e., the target can maintain the quasi-static condition at~$\mathbf{p}^{\bsss}$,
yielding a necessary condition for feasibility.

However, due to unmodeled noises and uncertainties during the contact-rich pushing process,
the target motion may deviates from the expected trajectory.
Therefore, to facilitate online adaptation during execution,
it is {essential} to estimate the feasibility \emph{in multiple directions}.
Denote by~$\mathcal{D}$ the set of directions,
which contains the desired velocity~$\mathbf{p}^{\bsss}$ as the {main} direction,
and several orthogonal {auxiliary} directions.
Since~$\dim(\mathbf{p})=3$, five auxiliary directions are chosen to span the velocity space
with only positive weights, i.e.,
$\mathbf{p}^{\bsss}_{\sss{[1]}}\triangleq \mathbf{p}^{\bsss}, \mathbf{p}^{\bsss}_{\sss{[2]}}\triangleq \mathbf{e}_3 \times \mathbf{p}^{\bsss}_{\sss{[1]}},
\mathbf{p}^{\bsss}_{\sss{[3]}}\triangleq \mathbf{p}^{\bsss}_{\sss{[1]}}\times \mathbf{p}^{\bsss}_{\sss{[2]}},
\mathbf{p}^{\bsss}_{\sss{[4]}}\triangleq -\mathbf{p}^{\bsss}_{\sss{[1]}},
\mathbf{p}^{\bsss}_{\sss{[5]}}\triangleq -\mathbf{p}^{\bsss}_{\sss{[2]}},
\mathbf{p}^{\bsss}_{\sss{[6]}}\triangleq -\mathbf{p}^{\bsss}_{\sss{[3]}}$,
where $\mathbf{e}_3\triangleq (0,0,1)$ and~$\mathcal{D}\triangleq \{1,\cdots,6\}$.
Consequently, the multi-directional feasibility estimation is the weighted sum
of feasibility in each direction, i.e.,
\begin{equation}\label{eq:multi-directional-feasibility}
  \mathrm{J_{\texttt{MF}}}(\xi,\, \mathbf{p}^{\bsss})\triangleq \sum_{d\in \mathcal{D}}
  w_d \mathrm{J_{\texttt{F}}}(\xi,\, \mathbf{p}^{\bsss}_{\sss{[d]}}),
\end{equation}
where~the weights~$w_d>0$, $\forall d\in \mathcal{D}$ with~$w_0$ being the maximum.
As illustrated in Fig.~\ref{fig:MDF},
these two modes (all from one side or each at different sides)
have an equally-small $\mathrm{J}_{\text{F}}$  in certain directions.
But the second mode has a significantly lower $\mathrm{J}_{\text{MF}}$
in a wide range of directions \emph{around} the desired direction,
of which the balance between different directions
is tunable by the weights~$\{w_d\}$.

\begin{remark}\label{remark:auxiliary-direction}
  Note that given enough number of robots,
  a mode similar to the caging strategy in~\citet{lynch1992manipulation} is preferred
  by~\eqref{eq:multi-directional-feasibility},
  i.e.,~surrounding the target from multiple sides instead of all pushing from the same side.
  This allows the robots to potentially rectify the target motion online
  whenever it deviates from the desired velocity during execution.
  \hfill $\blacksquare$
\end{remark}

\begin{remark}\label{remark:pull}
  The above analyses still apply if the robots can \emph{pull} the target.
  The only difference is that the contact force $\mathbf{f}_n$ is no longer constrained
  by the friction cone in~\eqref{eq:friction_cone},
  which often leads to a smaller~$J_{\texttt{MF}}$ and higher feasibility.
  \hfill $\blacksquare$
\end{remark}

\begin{figure}[t]
  \centering
  \includegraphics[width=0.9\linewidth]{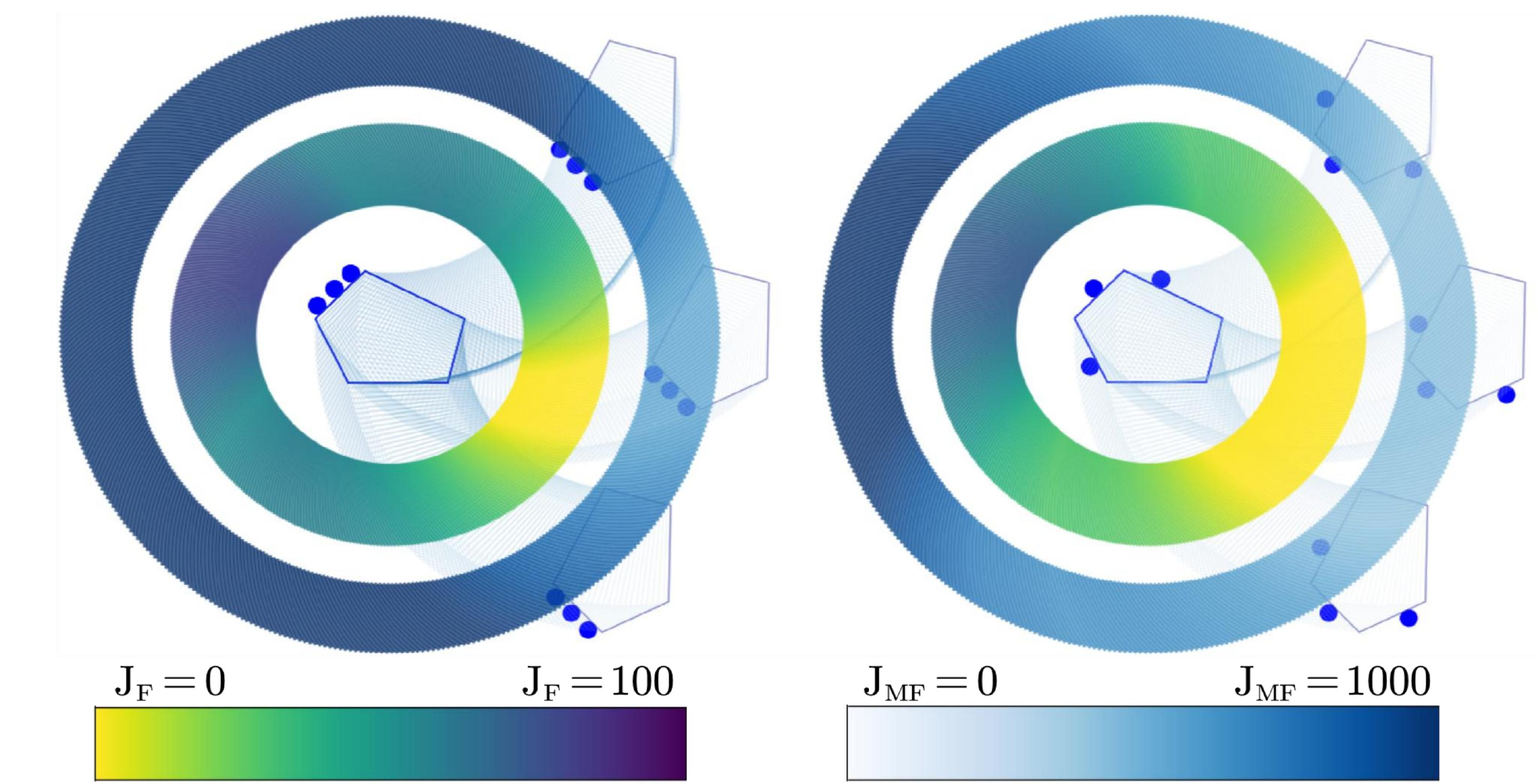}
  \caption{
    Comparison of
    $\mathrm{J}_\texttt{F}$ in~\eqref{eq:feasibility} and
    $\mathrm{J}_{\texttt{MF}}$ in~\eqref{eq:multi-directional-feasibility}
    for different directions
    between different modes (robots as blue dots and object as a white polygon).
  }
  \vspace{-4mm}
  \label{fig:MDF}
\end{figure}

\subsubsection{Loss Estimation for Arc Transition}
\label{subsubsec:arc_approximation}
Given any allowed target
velocity~$\mathbf{p}^{\bsss}= (\mathbf{v}^{\bsss},\,\omega^{\bsss})\in \mathcal{P}^{\bsss}_\xi$
in mode~$\xi$,
the resulting motion of the target is analyzed as follows.
Assuming that the robots push the target with
velocity~$\mathbf{p}(t)=\mathbf{R}_{\mathrm{B}\rightarrow \mathrm{S}}(t)\mathbf{p}^{\bsss}$
for time duration~$\bar{t}>0$,
the quasi-static condition is ensured
as~$\mathbf{p}^{\bsss} = \mathbf{R}_{\mathrm{B}\rightarrow \mathrm{S}}^{-1}(t)
\mathbf{p}(t)\in\mathcal{P}_\xi^{\bsss}$ holds.
Consequently, the change of target state within~$\mathbb{S}$
during period~$[0,\, \bar{t}]$ is given by:
\begin{equation}\label{eq:arc_trajectory}
  \begin{split}
  &\Delta \mathbf{s}\triangleq \Big{(}\int_{t=0}^{\bar{t}} \mathbf{R}_{\mathrm{B}\rightarrow \mathrm{S}}(t) dt\Big{)}\,
  \mathbf{p}^{\bsss}\\
  &=\left[ \begin{matrix}
    \lim\limits_{\Delta\psi\rightarrow \Delta \Psi}
    \mathbf{Rot}\left( \Delta\psi+\psi_0 -\frac{\pi}{2} \right)|_{0}^{\Delta\psi}\frac{1}{\Delta\psi} &		0\\
    0&		1\\
  \end{matrix} \right]  \bar{t}\ \mathbf{p}^{\bsss}\\
  &=\mathbf{\Phi}(\psi_0,\, \Delta \Psi) \ \bar{t}\ \mathbf{p}^{\bsss},
  \end{split}
\end{equation}
where~$\mathbf{Rot}: (-\pi,\pi] \rightarrow \mathbb{R}^{2\times 2}$
is the standard rotation matrix of a given angle;
$\Delta \Psi=\omega^{\bsss}\bar{t}$ is the change of target orientation.
Note that the limit operation above is employed
due to the singularity at~$\omega^{\bsss}=0$,
in which case the limit is given by~$\mathbf{Rot}(\psi_0)$,
yielding a pure translational
motion~$\Delta \mathbf{s}=\mathbf{R}_{\mathrm{B}\rightarrow \mathrm{S}}(0)\mathbf{p}^{\bsss}\bar{t}$.
Thus, the final state of the target is given
by~$\mathbf{s}_{\bar{t}}\triangleq \mathbf{s}_0+\Delta \mathbf{s}$
and its trajectory by
$\mathbf{Trj}(\mathbf{s}_0,\mathbf{p}^{\bsss},\bar{t})$.

\begin{lemma}\label{lm:velocity_to_arc}
Given a fixed velocity~$\mathbf{p}^{\bsss}$, the resulting
trajectory~$\mathbf{Trj}(\mathbf{s}_0,\mathbf{p}^{\bsss},\bar{t})$ is an arc
with radius~$\rho=\|\mathbf{v}^{\bsss}\|/|\omega^{\bsss}|$ and angle~$\Delta \psi$;
or a straight line with length~$\|\mathbf{v}^{\bsss}\|\bar{t}$.
\end{lemma}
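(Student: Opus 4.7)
The plan is to integrate the world-frame velocity directly and exhibit the resulting position curve as a circle (or line), then read off its geometric parameters.

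First I would observe that since $\mathbf{p}^{\bsss}=(\mathbf{v}^{\bsss},\omega^{\bsss})$ is constant in the body frame and the planar rotation about the vertical axis satisfies $\dot{\psi}=\omega^{\bsss}$, the orientation is simply $\psi(t)=\psi_0+\omega^{\bsss} t$ for $t\in[0,\bar{t}]$. Using $\mathbf{p}(t)=\mathbf{R}_{\mathrm{B}\rightarrow \mathrm{S}}(t)\mathbf{p}^{\bsss}$, the translational state evolves as $\mathbf{x}(t)=\mathbf{x}_0+\int_0^t \mathbf{Rot}(\psi(\tau))\,\mathbf{v}^{\bsss}\,d\tau$. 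This is the starting point already implicit in the identity~\eqref{eq:arc_trajectory} of the paper, which I would re-derive cleanly to extract the geometric content.

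Next, assuming $\omega^{\bsss}\neq 0$, I would substitute $\phi=\psi_0+\omega^{\bsss}\tau$ and use the elementary antiderivative $\int \mathbf{Rot}(\phi)\,d\phi=\mathbf{Rot}(\phi-\pi/2)+C$, so that
\begin{equation*}
\mathbf{x}(t)=\mathbf{x}_0+\tfrac{1}{\omega^{\bsss}}\bigl[\mathbf{Rot}(\psi(t)-\tfrac{\pi}{2})-\mathbf{Rot}(\psi_0-\tfrac{\pi}{2})\bigr]\mathbf{v}^{\bsss}.
\end{equation*}
Defining the center $\mathbf{c}\triangleq\mathbf{x}_0-\tfrac{1}{\omega^{\bsss}}\mathbf{Rot}(\psi_0-\pi/2)\mathbf{v}^{\bsss}$, this rearranges to $\mathbf{x}(t)-\mathbf{c}=\tfrac{1}{\omega^{\bsss}}\mathbf{Rot}(\psi(t)-\pi/2)\mathbf{v}^{\bsss}$. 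Since rotation matrices are isometries, $\|\mathbf{x}(t)-\mathbf{c}\|=\|\mathbf{v}^{\bsss}\|/|\omega^{\bsss}|$ for every $t\in[0,\bar{t}]$, identifying the trajectory as a circular arc of radius $\rho=\|\mathbf{v}^{\bsss}\|/|\omega^{\bsss}|$ centered at $\mathbf{c}$. Monotonicity of $\psi(t)$ on $[0,\bar{t}]$ gives that the swept central angle equals $|\Delta\psi|=|\omega^{\bsss}|\bar{t}$, matching the claim.

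For the degenerate case $\omega^{\bsss}=0$, the rotation $\mathbf{R}_{\mathrm{B}\rightarrow \mathrm{S}}(t)$ is constant at $\mathbf{Rot}(\psi_0)$, so the integral collapses to $\mathbf{x}(t)=\mathbf{x}_0+t\,\mathbf{Rot}(\psi_0)\mathbf{v}^{\bsss}$, a straight segment of length $\|\mathbf{v}^{\bsss}\|\bar{t}$. I would also note that this agrees with the limit $\omega^{\bsss}\to 0$ of the arc formula (L'Hopital on the bracketed difference), which is precisely the limit operation used in~\eqref{eq:arc_trajectory} and thereby justifies the unified expression. The main obstacle is purely notational: keeping the rotation-matrix antiderivative and the change of variables aligned with the paper's $\mathbf{\Phi}(\psi_0,\Delta\Psi)$ so that the radius and arc-angle read off naturally; no deep step is required beyond this bookkeeping.
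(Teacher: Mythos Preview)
Your argument is correct and follows essentially the same route as the paper's own (sketch) proof: define the candidate center $\mathbf{c}=\mathbf{x}_0-\tfrac{1}{\omega^{\bsss}}\mathbf{Rot}(\psi_0-\pi/2)\mathbf{v}^{\bsss}$ and verify that $\|\mathbf{x}(t)-\mathbf{c}\|$ is the constant $\|\mathbf{v}^{\bsss}\|/|\omega^{\bsss}|$, then handle $\omega^{\bsss}=0$ separately. Your version is simply more explicit about the antiderivative and the limit, which is helpful but not a different idea.
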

\begin{proof}
(Sketch) If $\Delta \Psi\neq 0$,
consider the point $\mathbf{x}_c\triangleq \mathbf{x}_0-\mathbf{Rot}(\psi_0-\pi/2)\mathbf{v}^{\bsss}/\omega^{\bsss}$.
For any~$t\in[0,\bar{t}]$,
the target position by~\eqref{eq:arc_trajectory} is given by
$\mathbf{x}_t\triangleq \mathbf{x}_0+\Delta \mathbf{x}$.
It can be verified that $\|\mathbf{x}_t-\mathbf{x}_c\|_2$ is a constant
and equals to $\|\mathbf{v}^{\bsss}\|/|\omega^{\bsss}|$.
The rest of the proof follows similar arguments as for~\eqref{eq:arc_trajectory}.
\end{proof}

\begin{figure}[t]
  \centering
  \includegraphics[width=0.96\linewidth]{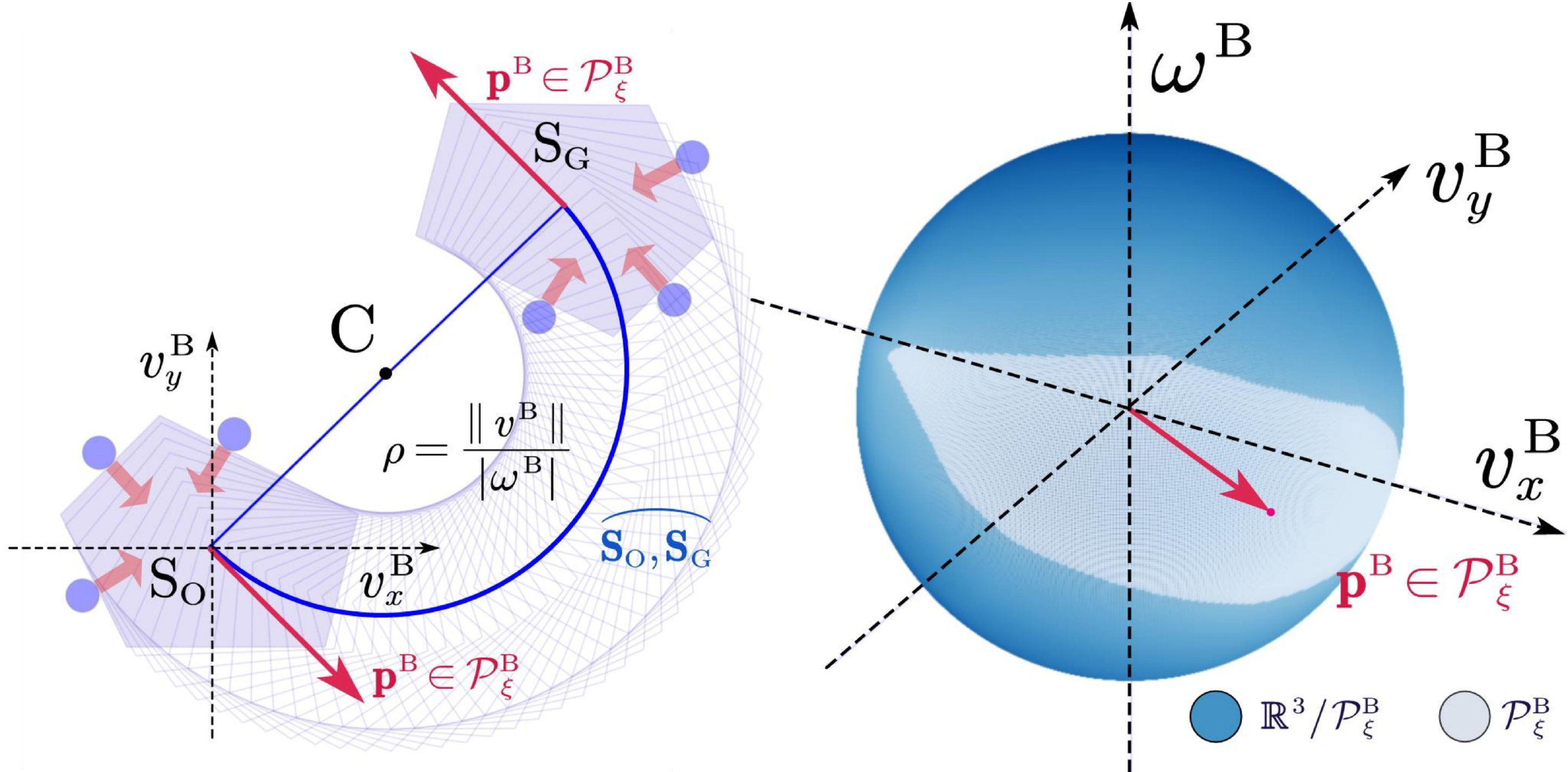}
  \caption{Arc transitions~(\textbf{Left})
  and the set of allowed velocities~$\mathcal{P}_{\xi}^{\bsss}$ (\textbf{Right})
  visualized on a unit sphere.}
  \vspace{-3mm}
\label{fig:ARC}
  \vspace{-3mm}
\end{figure}

\begin{lemma}\label{lm:arc_existence}
  \change{Any given pair of initial state~$\mathbf{s}_0$ and goal state~$\mathbf{s}_{\texttt{G}}$
  can be connected by a unique arc trajectory $\mathbf{Trj}(\mathbf{s}_0,\mathbf{p}^{\bsss},\bar{t})$
  such that $\omega^{\bsss}\,\bar{t}\in \left[-\pi,\pi \right)$.}
\end{lemma}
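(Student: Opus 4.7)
The plan is to reduce the problem to a geometric construction of the arc determined by the two endpoints and the prescribed orientation change, then verify uniqueness by counting degrees of freedom. Write $\mathbf{s}_0 = (x_0, y_0, \psi_0)$ and $\mathbf{s}_{\texttt{G}} = (x_{\texttt{G}}, y_{\texttt{G}}, \psi_{\texttt{G}})$, and let $\Delta \Psi$ be the unique representative of $\psi_{\texttt{G}} - \psi_0$ modulo $2\pi$ lying in $[-\pi, \pi)$; the constraint $\omega^{\bsss} \bar{t} \in [-\pi, \pi)$ in the statement forces this to be the total rotation along the trajectory. By Lemma~\ref{lm:velocity_to_arc}, $\mathbf{Trj}$ must then be either a straight segment (if $\Delta \Psi = 0$) or an arc of angular extent $\Delta \Psi$, so the two cases can be handled separately.

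For the line case, pick any $\bar{t} > 0$, set $\omega^{\bsss} = 0$, and choose $\mathbf{v}^{\bsss} = \mathbf{Rot}(-\psi_0)(\mathbf{x}_{\texttt{G}} - \mathbf{x}_0)/\bar{t}$ to match the translational offset; the resulting trajectory is the unique line segment between the endpoints with no rotation. For the arc case, the block structure of $\mathbf{\Phi}(\psi_0, \Delta \Psi)$ in \eqref{eq:arc_trajectory} decouples $\Delta \psi = \bar{t}\,\omega^{\bsss}$ from the planar displacement, and imposing $\bar{t}\,\omega^{\bsss} = \Delta \Psi$ reduces the remaining two equations to a $2 \times 2$ linear system for $\bar{t}\,\mathbf{v}^{\bsss}$. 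The key computation is to evaluate the determinant of that block and check that it is proportional to $1 - \cos \Delta \Psi$, which is strictly positive on $[-\pi, \pi) \setminus \{0\}$, so inversion yields a unique $\bar{t}\,\mathbf{v}^{\bsss}$. Equivalently, one can invoke the geometric picture from Lemma~\ref{lm:velocity_to_arc}: the center $\mathbf{x}_c$ must lie on the perpendicular bisector of the chord $\mathbf{x}_0 \mathbf{x}_{\texttt{G}}$ at signed distance $\|\mathbf{x}_{\texttt{G}} - \mathbf{x}_0\|/(2 \tan(\Delta \Psi / 2))$, which pins down the center, the radius, and the swept angle simultaneously and thus the arc itself.

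The main subtlety, and the step that needs the most care, is the meaning of ``unique.'' The parameters $(\mathbf{p}^{\bsss}, \bar{t})$ enjoy a one-parameter rescaling invariance $(\mathbf{p}^{\bsss}, \bar{t}) \mapsto (\mathbf{p}^{\bsss}/k, \, k\bar{t})$ that leaves $\mathbf{Trj}(\mathbf{s}_0, \mathbf{p}^{\bsss}, \bar{t})$ unchanged as a curve in $\mathcal{S}$, so the claim has to be read as uniqueness of this curve rather than of the pair itself; this is the natural interpretation for the subsequent keyframe-based search, where only the geometric trajectory enters the feasibility loss. A final sanity check is continuity as $\Delta \Psi \to 0$, where the arc construction degenerates to the straight-line case through the same limit already carried out inside \eqref{eq:arc_trajectory}.
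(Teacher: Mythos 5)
Your proof is correct and follows essentially the same route as the paper's: fix $\Delta\Psi=\psi_{\texttt{G}}-\psi_0$ in $[-\pi,\pi)$, invert $\mathbf{\Phi}(\psi_0,\Delta\Psi)$ to determine $\mathbf{p}^{\bsss}\bar{t}$ uniquely, and observe that the rescaling $(\mathbf{p}^{\bsss},\bar{t})\mapsto(\mathbf{p}^{\bsss}/k,k\bar{t})$ leaves the curve unchanged, so uniqueness is of the trajectory. Your added details---the explicit determinant $\propto 1-\cos\Delta\Psi$ justifying the invertibility the paper merely asserts, the separate $\Delta\Psi=0$ case, and the perpendicular-bisector picture---are correct refinements of the same argument rather than a different approach.
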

\begin{proof}
  \change{
  (Sketch)
  Let $\Delta\Psi$ in~\eqref{eq:arc_trajectory} be $\psi_{\texttt{G}}-\psi_{0}$ for the arc transition.
  Since the matrix $\mathbf{\Phi}(\psi_0,\Delta \Psi)$
  is invertible,
  it implies that~$\mathbf{p}^{\bsss}\bar{t}=
  \mathbf{\Phi}^{-1}(\psi_0,\Delta \Psi)
  (\mathbf{s}_{\texttt{G}}-\mathbf{s}_{0})$ is uniquely determined.
  Moreover, scaling the velocity~$\mathbf{p}^{\bsss}$ by a factor~$k\in\mathbb{R}^+$
  leads to the same arc trajectory with duration~$\bar{t}/k$.
  Thus, the trajectory~$\mathbf{Trj}(\mathbf{s}_0,\mathbf{p}^{\bsss},\bar{t})$
  is uniquely determined.}
\end{proof}

With Lemma~\ref{lm:arc_existence},
we denote by~$\overgroup{\mathbf{s}_0\mathbf{s}_{\texttt{G}}}\triangleq \mathbf{Trj}(\mathbf{s}_0,\mathbf{p}^{\bsss},\bar{t})$
the unique arc transition from~$\mathbf{s}_0$ to~$\mathbf{s}_{\texttt{G}}$,
of which the arc length is given by
$\|\overgroup{\mathbf{s}_0\mathbf{s}_{\texttt{G}}}\|\triangleq\|\mathbf{p}^{\bsss}\bar{t}\|_2$.
Consequently, a loss estimation
is proposed for choosing a mode~$\xi$
under an arc transition~$\overgroup{\mathbf{s}_0\mathbf{s}_{\texttt{G}}}$:
\begin{equation}\label{eq:loss_estimation}
  \mathrm{J_{\texttt{MF}}}(\xi,\, \overgroup{\mathbf{s}_0\mathbf{s}_{\texttt{G}}})
  =\mathrm{J_{\texttt{MF}}}(\xi,\, \mathbf{p}^{\bsss})\,\|\mathbf{p}^{\bsss}\bar{t}\|_2,
\end{equation}
where $\mathrm{J_{\texttt{MF}}}(\cdot)$ is
the multi-directional feasibility estimation in~\eqref{eq:multi-directional-feasibility}.
In other words, the loss estimation for choosing a mode under an arc transition
is estimated by the feasibility of the associated velocity
but weighted by the arc length.

\begin{figure}[t]
  \centering
  \includegraphics[width=1.0\linewidth]{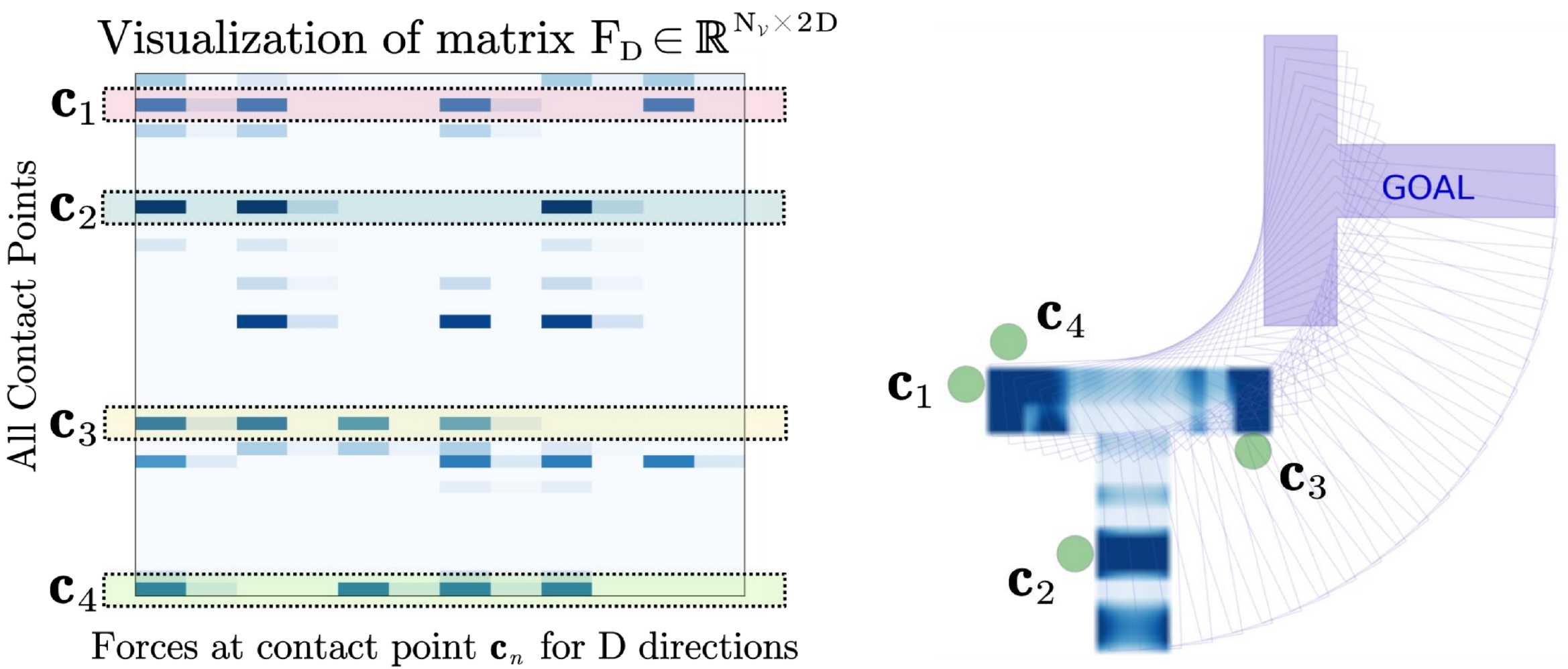}
  \caption{Illustration of the mode generation process via sparse
  optimization in~\eqref{eq:sparse-opt}.
  \textbf{Left}: the optimized matrix~$\mathbf{F}^\star_{\mathbf{D}}$ of dimention $N_{\mathcal{V}}\times 2D$,
    where the magnitude of elements is represented by the color intensity;
  \textbf{Right}: the best mode selected for the given arc transition
  within $\mathbf{\Xi}(\protect \overgroup{\mathbf{s}_0\mathbf{s}_{\bar{t}}})$.}
  \label{fig:MDF-SO}
  \vspace{-6mm}
\end{figure}
\subsubsection{Mode Generation for Arc Transition}\label{subsec:mode_generation}

With the proposed loss function in~\eqref{eq:loss_estimation},
the {optimal} mode~$\xi^\star$
for a desired arc transition~$\overgroup{\mathbf{s}_0\mathbf{s}_{\texttt{G}}}$
can be determined.
However, as described earlier, the set of all possible modes~$\Xi$
in this work is infinite,
of which the parameter space grows exponentially with the number of robots
and the boundary length of the target.
To tackle this issue,
a fast and scalable method to generate suitable modes for a given arc transition is proposed,
called~\emph{mode generation via sparse optimization} (MG-SO).
To begin with,
each side~$(p_v,p_{v+1})$ of the target is divided into~$N_{v}>0$ segments,
$\forall v\in\{1,\cdots,V\}$.
The center of each segment is denoted by~$p_{v,i}$, $\forall i\in\{1,\cdots,N_{v}\}$.
Then, consider a virtual mode~$\xi^\star$ that
employs~$\{p_{v,i},\forall i, \forall v\}$ as the contact points,
i.e., $\xi^\star \triangleq p_{\sss{1,1}}\cdots p_{\sss{1,N_1}}\cdots p_{\sss{V,1}}
\cdots p_{\sss{V,N_V}}$.
Note that the total number of contact points is~$N_\mathcal{V}=\sum_{v=1}^V N_v$,
which is much larger than the number of robots~$N$.
The associated force~$\mathbf{F}_{\xi^\star}\triangleq (\mathbf{F}_{\texttt{n}},\, \mathbf{F}_{\texttt{t}})$
follows the definition in~\eqref{eq:F-xi},
among which the forces along the multiple directions~$\mathbf{p}^{\bsss}_{\sss{[d]}}$
in~\eqref{eq:multi-directional-feasibility}
are given by~$\mathbf{F}_{\xi^\star}^{\sss{[d]}}\triangleq (\mathbf{F}^{\sss{[d]}}_{\texttt{n}},\,
\mathbf{F}^{\sss{[d]}}_{\texttt{t}})$, $\forall d\in \mathcal{D}$.
Subsequently, the following sparse optimization problem is formulated:
\begin{subequations}\label{eq:sparse-opt}
  \begin{align}
    \underset{{\mathbf{F}_{\mathbf{D}}}}{\textbf{min}}
    &\sum_{n\in \mathcal{N}_{\mathcal{V}}}
  (\|\mathbf{F}^n_{\mathbf{D}}\|_{\infty}+\|\mathbf{F}^n_{\mathbf{D}}\|_{\sss{1}})
    + \sum_{d \in \mathcal{D}}  \|\mathbf{q}^{\bsss}_{[d]}+\eta(\mathbf{p}^{\bsss}_{\sss{[d]}})\|_{\sss{1}} \label{eq:multi-feasibility} \\
    &\textbf{s.t.}\quad
  \mathbf{q}^{\bsss}_{[d]}=\mathbf{J} \mathbf{F}_{\xi^\star}^{\sss{[d]}}, \ \mathbf{F}_{\xi^\star}^{\sss{[d]}}\in\mathcal{F}_{\xi^\star},\;
    \forall d\in \mathcal{D}; \label{eq:friction_cone}
  \end{align}
\end{subequations}
where the decision variables~$\mathbf{F}_{\mathbf{D}}\triangleq
\mathbf{F}^{\sss{[1]}}_{\texttt{n}} \mathbf{F}^{\sss{[1]}}_{\texttt{t}}\cdots
  \mathbf{F}^{\sss{[D]}}_{\texttt{n}} \mathbf{F}^{\sss{[D]}}_{\texttt{t}}\in
  \mathbb{R}^{\sss{N}_{\mathcal{V}}\times\sss{2D}}$;
$\mathbf{F}^n_{\mathbf{D}}\in \mathbb{R}^{\sss{2D}}$ is the $n$-th row of $\mathbf{F}_{\mathbf{D}}$;
and the first term in~\eqref{eq:multi-feasibility} contains
both the $\mathcal{L}_1$ norm and $\mathcal{L}_\infty$ norm of~$\mathbf{F}^n_{\mathbf{D}}$,
while the second term is the feasibility estimation in~\eqref{eq:feasibility} for each direction;
the constraint of friction cone in~\eqref{eq:friction_cone} is defined in~\eqref{eq:force-limit}.
Note the first term in~\eqref{eq:multi-feasibility}
is designed to improve the {sparsity} of matrix~$\mathbf{F}_{\mathbf{D}}$, i.e.,
$\sum_{n\in \mathcal{N}_{\mathcal{V}}}\|\mathbf{F}^n_{\mathbf{D}}\|_{1}$
penalizes the number of non-zero elements in $\mathbf{F}_{\mathbf{D}}$,
while $\sum_{n\in \mathcal{N}_{\mathcal{V}}} \|\mathbf{F}^n_{\mathbf{D}}\|_{\infty}$ penalizes
the number of non-zero rows in $\mathbf{F}_{\mathbf{D}}$,
i.e., the \emph{row sparsity} in~\citet{leon2006linear}.
This is crucial as a row of zeros indicates that forces at the corresponding
contact point are not required.
In other words,  only key contact points are associated with
the non-zero rows of~$\mathbf{F}_{\mathbf{D}}$.

The above optimization problem is again a linear program (LP),
which can be solved efficiently with a LP solver.
Denote by~$\mathbf{F}^\star_{\mathbf{D}}$ the optimal solution,
which might contain more than~$N$ non-zero rows.
Thus, a set of modes for this arc transition,
denoted by~$\mathbf{\Xi}(\overgroup{\mathbf{s}_0\mathbf{s}_{\texttt{G}}})$,
can be generated as follows.
All rows of~$\mathbf{F}^\star_{\mathbf{D}}$ are sorted in descending order
by the value of $\|\mathbf{F}^n_{\mathbf{D}}\|_{\infty}+\|\mathbf{F}^n_{\mathbf{D}}\|_{1}$.
Then, the first mode is given by the collecting the first~$N$ rows,
and the subsequent mode is generated by combing the first~$N-1$ rows with a random row.
This process is repeated until a certain size
of~$\mathbf{\Xi}(\overgroup{\mathbf{s}_0\mathbf{s}_{\texttt{G}}})$ is reached.
An example of the results is shown in Fig.~\ref{fig:MDF-SO}.
It can be seen that the sparsity optimization is effective as
most elements of the matrix~$\mathbf{F}^\star_{\mathbf{D}}$ are zero,
while the non-zero elements are concentrated in few rows,
as visualized by the color intensity.

\begin{remark}\label{remark:mg-so}
  Due to the polynomial complexity of solving linear programs,
  the above mode generation method is scalable to
  arbitrary number of robots and any polytopic target shape.
  \change{For non-polytopic shapes, its exterior contour should be discretized
first to obtain the set of potential contact points,
which are then filtered by whether there exists enough room for a robot.
Given these contact points, the same optimization in~\eqref{eq:sparse-opt}
can be adopted to generate modes for an arc transition.
  More case studies of are provided in Sec.~\ref{sec:experiments}.}
  \hfill $\blacksquare$
\end{remark}

\begin{figure}
  \centering
  \includegraphics[width=1.0\linewidth]{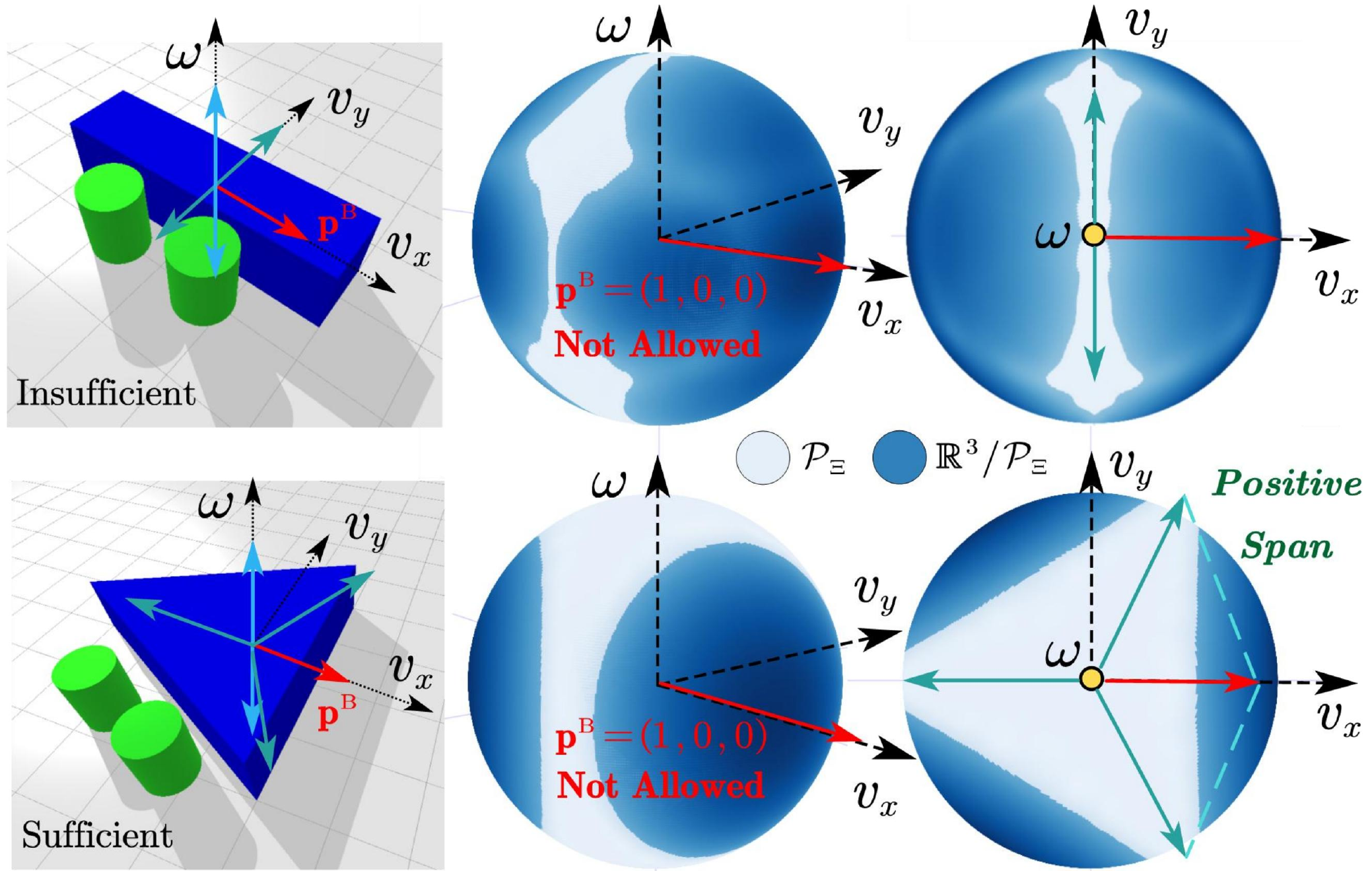}
  \vspace{-3mm}
  \caption{Given all modes~$\Xi$ for two robots
  and different objects (\textbf{Left}),
  the set of allowed velocities $\mathcal{P}_{\Xi}$ is shown (in white)
  on a unit sphere of $v_x-v_y-\omega$ velocities (\textbf{Middle}) with
  the top-down projection (\textbf{Right}) from the angular velocity~$\omega$.}
  \label{fig:sufficient}
  \vspace{-5mm}
\end{figure}

\def\sss{\scriptscriptstyle{}}
\def\bsss{\sss{\mathrm{B}}}
\subsection{Sufficient Modes for General Trajectory}
\label{subsubsec:trajectory_approximation}
The previous section provides an efficient method
to generate a set of modes for an arc transition in the freespace,
i.e., via a constant velocity $\mathbf{p}^{\bsss}$ in $\mathbb{B}$.
However, this velocity may not be allowed within~$\Xi$,
i.e., $\mathbf{p}^{\bsss}\notin \bigcup_{\xi\in\Xi}\mathcal{P}^{\bsss}_\xi$,
or this arc might intersect with obstacles.
Consequently, given a general collision-free trajectory $\overline{\mathbf{S}}$,
the main idea is to approximate it
by a sequence of \emph{allowed} arc transitions, i.e.,
\begin{equation}\label{eq:approximate-arc}
\widetilde{\mathbf{S}}\triangleq \bigcup_{\ell\in\mathcal{L}}\overgroup{\mathbf{s}_{\ell-1}\mathbf{s}_{\ell}},
\end{equation}
where~$\mathcal{L}\triangleq \{1,\cdots,L\}$ and $L>0$ is the number of arcs.
The approximation error is measured by the Hausdorff
distance~$\mathcal{H}(\overline{\mathbf{S}},\, \widetilde{\mathbf{S}})$
from~\citet{rockafellar2009variational}.

To begin with, when any generalized velocity~$\mathbf{p}$ is allowed in
at least one mode within $\Xi$,
i.e., $\mathbf{p}\in \bigcup_{\xi\in\Xi} \mathcal{P}_\xi$
holds for all~$\mathbf{p}\in\mathbb{R}^3$,
the above approximation can be achieved by dividing $\overline{\mathbf{S}}$ into sufficiently small segments,
each of which can be approximated by an arc.
However, this condition is not easily met,
e.g., when the number of robots is small,
or certain points on the edge can not be contact points,
or the combined forces are not sufficient in certain directions.
As shown in the Fig.~\ref{fig:sufficient},
when the maximum pushing force of each robot is only
half of the maximum friction force,
the above condition can not be satisfied.
More importantly, verifying this condition requires traversing
all velocities,
and calling MG-SO for each direction to verify
whether it is allowed in at least one mode,
which is computationally expensive.
Thus, a weaker and easier-to-check notion of sufficiency is proposed as follows:
\begin{definition}\label{def:sufficient}
The set of all possible modes~$\Xi$ is called \emph{sufficient}
if two conditions hold:
(i) any generalized velocity~$\mathbf{p}$ can be positively decomposed
into several key velocities,
i.e., $\mathbf{p}=\sum_{i\in \mathcal{I}} \lambda_i \mathbf{p}_i$,
where $\lambda_i\geq 0$;
(ii) $\mathbf{p}_i$ is allowed in at least one mode $\xi\in\Xi$, $\forall i\in \mathcal{I}$,
i.e., $\mathbf{p}_i\in \mathcal{P}_{\Xi}\triangleq\bigcup_{\xi\in\Xi} \mathcal{P}_\xi$.
\hfill $\blacksquare$
\end{definition}

In other words,
only the key velocities need be verified as allowed in at least one mode,
rather than all possible velocities.
For instance,
the six directions~$\{\mathbf{p}_{[d]}^{\bsss}\}$
in~\eqref{eq:multi-directional-feasibility} can be chosen as the key velocities,
for which the above conditions holds by definition.
Furthermore, it is shown below that a set of sufficient modes can generate
arc transitions to approximate any general trajectory of the target
with arbitrary accuracy.
As shown in Fig.~\ref{fig:sufficient},
the allowed velocities $\mathcal{P}_{\Xi}$ for all possible modes~$\Xi$
of two robots and a triangular object has a triangular distribution,
thus easier to span all directions.
In contrast for rectangular objects,
the allowed velocities are concentrated on $v_y-\omega$ plane,
i.e., difficult to span in the $v_x$ direction.

\begin{figure}
  \centering
  \includegraphics[width=1.0\linewidth]{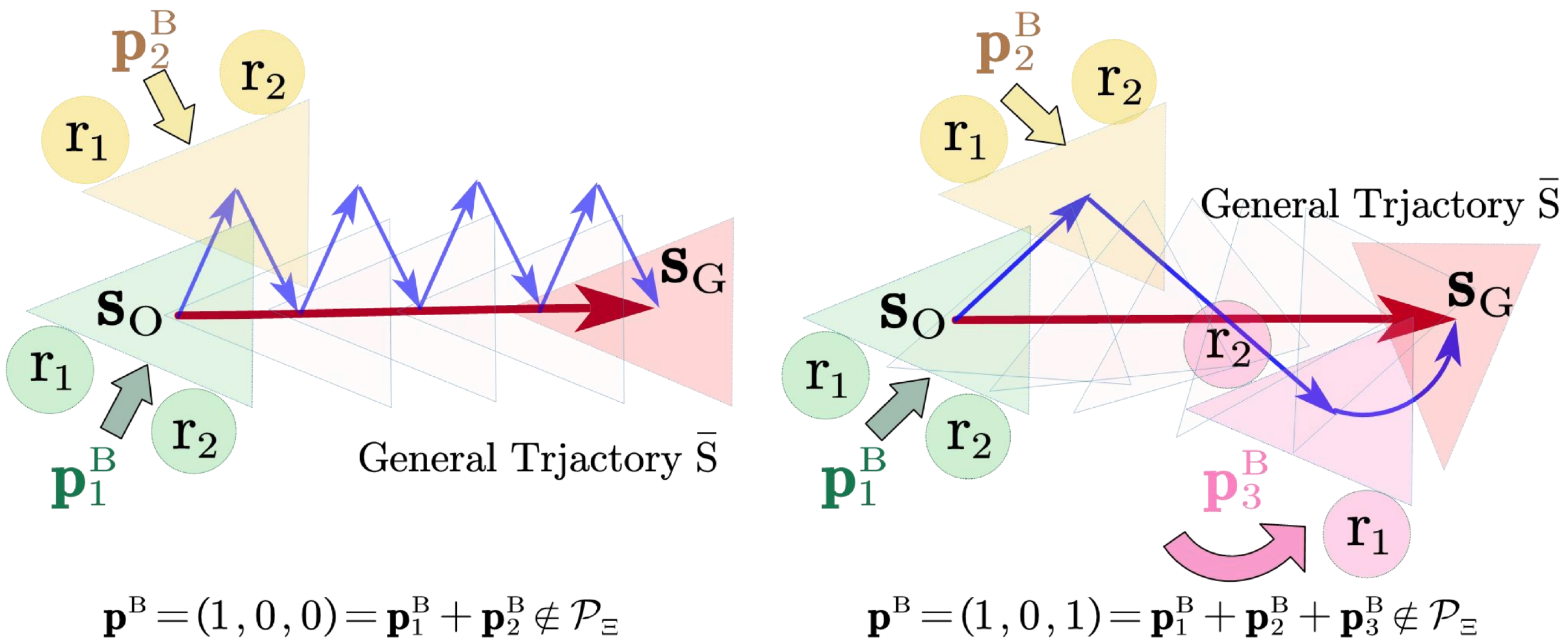}
  \vspace{-4mm}
  \caption{Trajectory approximation via a set of sufficient modes.}
  \label{fig:trajectory_approximation}
  \vspace{-3mm}
\end{figure}

\begin{theorem}\label{thm:feasibility}
  If the set of all possible modes~$\Xi$ is sufficient,
  then for any collision-free trajectory $\overline{\mathbf{S}}$
  and any error bound $e>0$, there exists a sequence of allowed arc transitions
  $\widetilde{\mathbf{S}}$ by~\eqref{eq:approximate-arc} such that
  $\mathcal{H}(\overline{\mathbf{S}},\,\widetilde{\mathbf{S}})\leq e$.
\end{theorem}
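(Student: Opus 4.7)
The plan is to prove Theorem~\ref{thm:feasibility} by a refinement argument: partition $\overline{\mathbf{S}}$ into increasingly fine segments, approximate each segment first by a single arc via Lemma~\ref{lm:arc_existence}, then replace that arc by a concatenation of allowed micro-arcs whose body-frame velocities come from the positive decomposition guaranteed by Definition~\ref{def:sufficient}, and finally show that the Hausdorff error vanishes as the partition is refined. Concretely, I would parametrize $\overline{\mathbf{S}}$ on $[0,T]$ and pick $L$ keyframes $\mathbf{s}_0,\ldots,\mathbf{s}_L$ lying on $\overline{\mathbf{S}}$. Starting from the actually reached endpoint $\tilde{\mathbf{s}}_{\ell-1}$ of the previous step, Lemma~\ref{lm:arc_existence} yields a unique arc to $\mathbf{s}_\ell$ with body-frame velocity $\mathbf{p}^{\bsss}_\ell$ and duration $\bar{t}_\ell$. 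If $\mathbf{p}^{\bsss}_\ell\in\mathcal{P}_\Xi$ the arc is kept; otherwise, Definition~\ref{def:sufficient} gives $\mathbf{p}^{\bsss}_\ell=\sum_{i\in\mathcal{I}}\lambda_{\ell,i}\mathbf{p}^{\bsss}_i$ with $\lambda_{\ell,i}\geq 0$ and each $\mathbf{p}^{\bsss}_i$ allowed in some mode in $\Xi$, and the arc is replaced by the concatenation of micro-arcs that execute $\mathbf{p}^{\bsss}_i$ for duration $\lambda_{\ell,i}\bar{t}_\ell$. Collecting all of these produces $\widetilde{\mathbf{S}}$ in the form of~\eqref{eq:approximate-arc}.

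The next step is to bound $\mathcal{H}(\overline{\mathbf{S}},\widetilde{\mathbf{S}})$. From~\eqref{eq:arc_trajectory}, executing a constant body-frame velocity $\mathbf{p}^{\bsss}$ for a small duration $\tau$ produces a displacement $\mathbf{R}_{\mathrm{B}\rightarrow\mathrm{S}}(\psi_0)\mathbf{p}^{\bsss}\tau+O(\tau^{2})$, with the leading term linear in $\mathbf{p}^{\bsss}$. Consequently, the concatenation of micro-arcs reaches the endpoint of the original arc up to a residual of order $O(\bar{t}_\ell^{2})$, and every intermediate state lies within $O(\bar{t}_\ell)$ of $\tilde{\mathbf{s}}_{\ell-1}$. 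Since $\mathbf{s}_\ell$ lies on $\overline{\mathbf{S}}$, continuity of $\overline{\mathbf{S}}$ implies that the corresponding reference piece is also $O(\bar{t}_\ell)$-close to $\mathbf{s}_{\ell-1}$. Combining the local estimates across all segments yields $\mathcal{H}(\overline{\mathbf{S}},\widetilde{\mathbf{S}})\leq C\bar{t}_{\max}$ for some constant $C$ depending on $\overline{\mathbf{S}}$, where $\bar{t}_{\max}=\max_\ell\bar{t}_\ell=O(1/L)$. Choosing $L$ large enough so that $C\bar{t}_{\max}\leq e$, and also below the obstacle clearance of $\overline{\mathbf{S}}$ to preserve collision-freeness, completes the argument.

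The hard part is the non-commutativity of rigid-body motion on $\mathrm{SE}(2)$: although $\mathbf{p}^{\bsss}_\ell=\sum_i\lambda_{\ell,i}\mathbf{p}^{\bsss}_i$ holds exactly as a vector identity in the body frame, the body frame itself rotates between micro-steps, so the concatenated micro-arcs do not exactly trace the single arc predicted by Lemma~\ref{lm:arc_existence}. The key technical step is to show that this discrepancy is of order $O(\bar{t}_\ell^{2})$, which relies on the smoothness of $\mathbf{R}_{\mathrm{B}\rightarrow\mathrm{S}}$ and on the boundedness of the decomposition weights $\{\lambda_{\ell,i}\}$, itself inherited from the fact that the key velocities $\{\mathbf{p}^{\bsss}_i\}$ positively span $\mathbb{R}^{3}$ by Definition~\ref{def:sufficient}. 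Once this local second-order estimate is in place, the cumulative drift across $L$ segments remains $O(1/L)$, and the refinement argument concludes.
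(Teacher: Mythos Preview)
Your proposal is correct and follows essentially the same strategy as the paper's proof: partition $\overline{\mathbf{S}}$ into $L$ segments, re-aim at each keyframe $\mathbf{s}_\ell$ from the actually reached endpoint via Lemma~\ref{lm:arc_existence}, decompose the resulting body-frame velocity into allowed key velocities via Definition~\ref{def:sufficient}, and exploit the first-order linearity of~\eqref{eq:arc_trajectory} to show an $O(\tau^{2})$ endpoint residual and an $O(\tau)$ Hausdorff error. The paper additionally invokes $\dim(\mathcal{P})=3$ to reduce each decomposition to exactly three micro-arcs, whereas you keep the full index set $\mathcal{I}$ but are more explicit about the boundedness of the weights $\{\lambda_{\ell,i}\}$ and the $\mathrm{SE}(2)$ non-commutativity underlying the second-order estimate; these are cosmetic differences, not a different route.
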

\begin{proof}
  First, $\overline{\mathbf{S}}$ can be divided into $L$ segments evenly by length,
  such that the $\ell$-th segment $(\mathbf{s}_{\ell-1},\mathbf{s}_{\ell})$
  is denoted as $\overline{\mathbf{S}}_{\ell}$.
  Assuming that $\overline{\mathbf{S}}_{\ell-1}$ is already approximated by
  an arc that ends at~$\mathbf{s}^+_{\ell-1}$.
  Then, the desired state change~$\Delta \mathbf{s}_{\ell}=\mathbf{s}_{\ell}-\mathbf{s}^+_{\ell-1}$
  can be approximated by the arc
  associated with velocity~$\mathbf{p}^{\bsss}_{\ell}$ and duration $T_{\ell}$,
  i.e., $\Delta \mathbf{s}_{\ell}=
  \mathbf{\Phi}(\psi_{\ell-1},\omega^{\bsss}_{\ell} T_{\ell})\,\mathbf{p}^{\bsss}_{\ell}\,T_{\ell}$.
  Second, since~$\Xi$ is sufficient,
  it holds that~$\mathbf{p}_{\ell}^{\bsss}=\sum_{i\in \mathcal{I}} \lambda_i \mathbf{p}_i$,
  where $\lambda_i \geq 0$ and $\mathbf{p}_i\in \mathcal{P}_{\Xi}$.
  Since $\mathrm{\dim}(\mathcal{P})=3$,
  there exists $\lambda_j^\star\geq 0$ and $\mathbf{p}_1^\star,\mathbf{p}_2^\star,\mathbf{p}_3^\star\in \mathcal{P}_{\sss{\Xi}}$
  such that $\mathbf{p}^{\bsss}_\ell=\sum_{j=1}^3 \lambda_{j}^\star\mathbf{p}^\star_j$.
  Consider the sequence of arcs $(\mathbf{p}^\star_1,t_1)(\mathbf{p}^\star_2,t_2)(\mathbf{p}^\star_3,t_3)$
  that starts from $\mathbf{s}_{\ell-1}^+$,
  where $t_j=\lambda_j^\star t$ is the duration of velocity $\mathbf{p}^\star_j$.
  The resulting change of state
  and its derivative at $t=0$ 
  is given by:
  \begin{equation}
      \Delta\mathbf{s}^{\texttt{seq}}_{\texttt{3}}\triangleq
      \sum_{j=1}^3\mathbf{\Phi}(\psi_{j-1},\omega_j t_j) \mathbf{p}^\star_j\, t_j;
      \ \frac{\mathrm{d}\mathbf{s}^{\texttt{seq}}_{\texttt{3}}}{\mathrm{d} t}=
      \sum_{j=1}^{3} \lambda_j \mathbf{p}^\star_j,
  \end{equation}
  where $\psi_{j}\triangleq \psi_0+\sum_{k=1}^j \omega_k t_k$.
  Denote by $\widetilde{\mathbf{S}}_{\ell}$ the trajectory from this $3$-step motion,
  for the $\ell$-th segment~$\overline{\mathbf{S}}_{\ell}$.
  Their Hausdorff distance is given by:
  $ h_{\ell}
    \leq (\max_{j,t}\big\{|\mathbf{p}_j^{\bsss}-\frac{\mathrm{d}\overline{\mathbf{S}}(t)}{\mathrm{d}t}|\big\})
    T_{\ell}+\epsilon_{\ell-1}
    \leq \mathcal{O}(\tau)+\epsilon_{\ell-1}$,
  where $\tau\triangleq \frac{1}{L}$ and
  $\epsilon_{\ell-1}\triangleq \|\mathbf{s}^+_{\ell-1}-\mathbf{s}_{\ell-1}\|_2$.
  The difference in the final state between~$\overline{\mathbf{S}}_{\ell}$
  and $\widetilde{\mathbf{S}}_{\ell}$ is given by
  $\epsilon_{\ell}=\Delta \mathbf{s}_{\ell} -\Delta \mathbf{s}_{\texttt{3}}^{\texttt{seq}}
  = \mathbf{p}^{\bsss} T_{\ell}-\mathbf{p}^{\bsss} T_{\ell}+\mathcal{O}(T_{\ell}^2)
  =\mathcal{O}(\tau^2)$.
  Moreover, via the same approximation for~$\overline{\mathbf{S}}_{\ell-1}$,
  the error $\epsilon_{\ell-1}$ is also bounded by $\mathcal{O}(\tau^2)$,
  meaning that the distance~$h_{\ell}\leq \mathcal{O}(\tau)+\mathcal{O}(\tau^2)=\mathcal{O}(\tau)$.
  Thus, the Hausdorff distance between $\overline{\mathbf{S}}$
  and $\widetilde{\mathbf{S}}$ is bounded by
  $
  \mathcal{H}(\overline{\mathbf{S}},\,
  \bigcup_{l\in\mathcal{L}}\overgroup{\mathbf{s}_{\ell-1}\mathbf{s}_{\ell}})
  \leq \max_{\ell\in\mathcal{L}}\{h_{\ell}\}
  \leq \mathcal{O}(\tau)$.
  Thus, for any error bound~$e>0$, there exits a sufficiently large~$L$
  such that $\mathcal{H}(\overline{\mathbf{S}},\,\widetilde{\mathbf{S}})<e$ holds.
\end{proof}

\begin{remark}
  As shown in Fig.~\ref{fig:trajectory_approximation},
  the proof above also provides a general method
  for approximating a collision-free trajectory
  by a sequence of allowed arcs.
  However, due to the uniform discretization,
  it requires a large number of arcs for a high accuracy,
  yielding over-frequent mode switches.
  Thus, a more efficient hybrid search algorithm is proposed
  in the sequel. \hfill $\blacksquare$
\end{remark}
\begin{remark}
  Note that the sufficiency in Def.~\ref{def:sufficient}
  is defined for \emph{all} possible modes~$\Xi$,
  which is determined by the multi-pusher system
  and different from the generated modes
  $\mathbf{\Xi}(\overgroup{\mathbf{s}_0\mathbf{s}_{\texttt{G}}})$
  for an arc transition via~\eqref{eq:sparse-opt}.
  However, it can be verified whether these modes are sufficient
  for a given trajectory. \hfill $\blacksquare$
\end{remark}


\subsection{Hierarchical Hybrid Search}\label{subsec:hybrid}
Given the above mode generation method,
the section presents the hierarchical hybrid search algorithm
to generate the hybrid plan as the timed sequence of modes
and target states.
In particular, it consists of two main steps:
(i) a collision-free guiding path is generated for the target given the initial and goal states;
(ii) a hybrid search algorithm that iteratively decomposes the guiding path into arc segments
and selects the optimal modes,
to minimize a balanced cost of the resulting hybrid plan.

\subsubsection{Collision-free Guiding Path}\label{subsec:guiding-path}
A collision-free guiding path for the target
from~$\mathbf{s}_{\texttt{O}}$ to~$\mathbf{s}_{\texttt{G}}$
is determined first as a sequence of states
within its statespace~$\mathcal{S}$,
such that its polygon boundary does not intersect with any obstacle at all time.
The~$\texttt{A}^\star$ algorithm as in~\citet{lavalle2006planning} is adopted with
the distance-to-goal being the search heuristic.
The cost function for edges in the search
graph~$C_{\texttt{A}^\star}:\mathcal{S} \times \mathcal{S}\rightarrow \mathbb{R}^+$
is modified to incorporate the feasibility
of generating such motion via collaborative pushing, i.e.,
\begin{equation}\label{eq:astar-heuristic}
  C_{\texttt{A}^\star}(\mathbf{s}_1,\,\mathbf{s}_2)\triangleq  \|\mathbf{s}_1-\mathbf{s}_2\|_2
  +\mathrm{J_{\texttt{MF}}}(\xi^\star,\, \overgroup{\mathbf{s}_1\mathbf{s}_2}),
\end{equation}
where~$(\mathbf{s}_1,\mathbf{s}_2)$ is any edge in the search graph;
$\mathrm{J_{\texttt{MF}}}$ is the loss estimation from~\eqref{eq:multi-directional-feasibility},
within which~$\xi^\star$ is the optimal interaction mode
for the arc transition~$\overgroup{\mathbf{s}_1 \mathbf{s}_2}$.
Note that given a fixed time step, the minimum loss~$\mathrm{J}_{\texttt{MF}}$
for different arcs within the local coordinate
can be pre-calculated and stored for online evaluation.
Denote by~$\overline{\mathbf{S}}\triangleq \mathbf{s}_0 \cdots\mathbf{s}_{L}$
the resulting path.

\subsubsection{Keyframe-guided Hybrid Search}\label{subsec:hybrid-search}
Given the guiding path of the target, a hybrid plan
is determined as a sequence of trajectory segment and the associated interaction mode
(including contact points and forces),
such that the target can be pushed along the guiding path.
\change{As shown in Fig.~\ref{fig:kghs}, a hybrid search algorithm called
{keyframe-guided hybrid search} (KG-HS) is presented in this section.
It iteratively decomposes the guiding path~$\overline{\mathbf{S}}$
into segments of arcs between keyframes,
such that each segment can be tracked by single pushing mode.}

Specifically, the search space is defined as~$\mathcal{V}\triangleq \{\nu\}$,
where each node~$\nu$ is a hybrid plan defined as a sequence of \emph{keyframes},
i.e.,~$\nu \triangleq \kappa_0 \cdots \kappa_{\ell} \cdots \kappa_{L_{\nu}}$,
with~$\kappa_{\ell}\triangleq (\mathbf{s}_{\ell},\xi_{\ell})$
being the $\ell$-th keyframe, $\forall \ell = 1,\cdots,L_{\nu}$.
Note that~$\mathbf{s}_{\ell}\in \overline{\mathbf{S}}$
is the state of the target such
that~$\mathbf{s}_0=\mathbf{s}_{\texttt{O}}$
and~$\mathbf{s}_{L_{\nu}}=\mathbf{s}_{\texttt{G}}$,
while~$\xi_{\ell}$ is the mode for the
segment~$\overgroup{\mathbf{s}_{l}\mathbf{s}_{l+1}}$.
Furthermore, the cost of a hybrid plan is estimated as follows:
\begin{equation}\label{eq:cost-plan}
  \mathrm{C}(\nu) \triangleq \sum_{l=0}^{L_{\nu}-1} \mathrm{J_{\texttt{MF}}}
  (\xi_{\ell},\,\overgroup{\mathbf{s}_{l}\mathbf{s}_{l+1}})
                  +w_{\texttt{t}}\mathrm{T}_{\texttt{sw}}(\xi_\ell,\,\xi_{\ell+1}),
\end{equation}
where the loss~$\mathrm{J_{\texttt{MF}}}(\cdot)$ is defined in~\eqref{eq:loss_estimation};
$w_{\texttt{t}}>0$ is a design parameter;
and~$\mathrm{T}_{\texttt{sw}}(\cdot)>0$ estimates the time required
for the robots to switch from mode~$\xi_{\ell}$ to $\xi_{\ell+1}$,
which is described later.
Note that the objective function~$J(\cdot)$ in the
original problem of~\eqref{eq:problem} is the
summed cost of a hybrid plan~$\nu$ by~\eqref{eq:cost-plan}.

\begin{figure}[t!]
  \centering
  \includegraphics[width=1\linewidth]{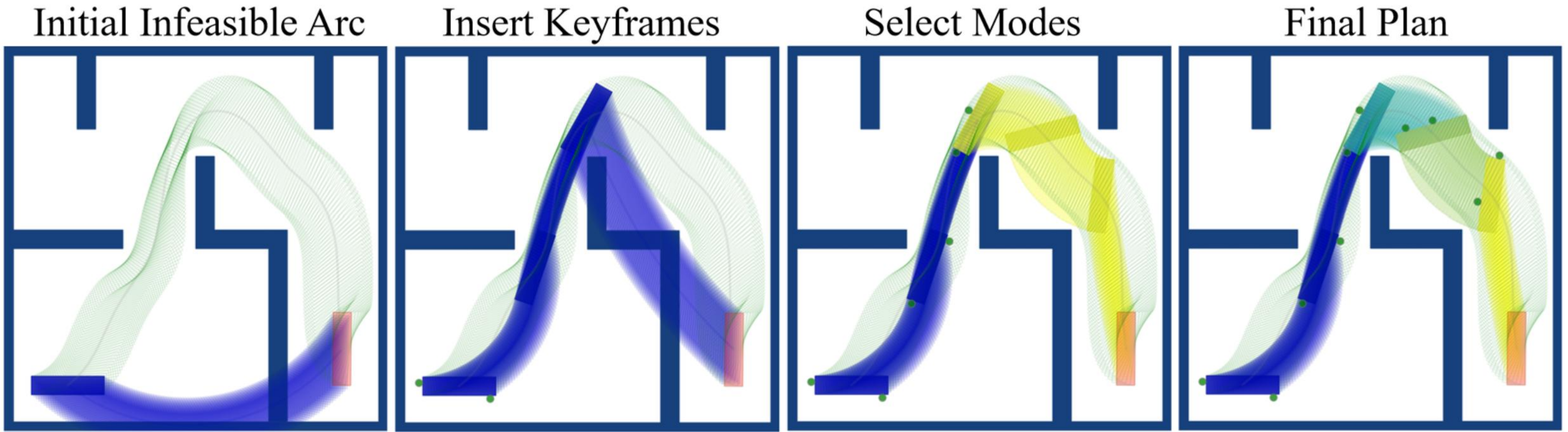}
  \caption{\change{Illustration of the KG-HS algorithm in Sec.~\ref{subsec:hybrid}.}}
  \label{fig:kghs}
  \vspace{-4mm}
\end{figure}

First, a priority queue is used to store the nodes to be visited,
which is initialized as~$Q=\{\nu_0\}$,
where~$\nu_{\texttt{O}}\triangleq (\mathbf{s}_{\texttt{O}},\mathbf{\varnothing})(\mathbf{s}_{\texttt{G}},
\mathbf{\varnothing})$
and~$\mathbf{\varnothing}$ indicates that no mode has been assigned to the arc.
Then, the search space~$\mathcal{V}$ is explored via an iterative procedure of node selection and expansion,
as summarized in Alg.~\eqref{alg:KGHS}.

(I) \textbf{Selection}.
The node with the lowest estimated cost within $Q$ is popped,
i.e.,~$\nu^\star \triangleq \textbf{argmin}_{\nu \in \mathcal{V}} \{{\mathrm{C}(\nu)+\mathrm{H}(\nu)}\}$,
where~$\mathrm{H}(\cdot)$ is an under-estimation of the cost for the segments in~$\nu$
that are not assigned, i.e.,
$\mathrm{H}(\nu)\triangleq
\sum_{\xi_{\ell}=\varnothing} \|\overgroup{\mathbf{s}_{\ell}\mathbf{s}_{\ell+1}}\|$
is the generalized length of arc~$\overgroup{\mathbf{s}_{l}\mathbf{s}_{l+1}}$
from~\eqref{eq:loss_estimation}.

\begin{algorithm}[t]
  \caption{KG-HS Algorithm.}
  \label{alg:KGHS}
  \SetAlgoLined
  \KwIn{Guiding path $\overline{\mathbf{S}}$ by~$\texttt{A}^\star$ and~\eqref{eq:astar-heuristic}.}
  \KwOut{Hybrid plan $\nu^\star$.}
  Initialize~$Q=\{\nu_{\texttt{O}}\}$; \label{alg:init}

  \While{$Q\neq \varnothing$}{
  \tcc{\textbf{Selection}}
    $\nu^\star= \textbf{argmin}_{\nu\in Q} \big\{\mathcal{C}(\nu)+\mathcal{H}(\nu)\big\}$
    by~\eqref{eq:cost-plan}; \label{alg:select}\\
    Store~$\nu^\star$ if feasible;\\
  \tcc{\textbf{Expansion}}
  $\ell=\textbf{min}\big\{\ell\in \{1,\cdots,L_{\nu^\star}\}\,|\,\xi_{\ell}=\emptyset \big\}$; \\
  \If{$\overgroup{\mathbf{s}_{\ell}\mathbf{s}_{\ell+1}}$ is collision-free}{
    $\mathbf{\Xi}=\texttt{MG-SO}(\mathbf{s}_{\ell},\, \mathbf{s}_{\ell+1})$ by~\eqref{eq:sparse-opt};\\
    $\nu^+ =\texttt{REP}\big(\nu^\star,\,\ell,\,(\mathbf{s}_\ell,\,\xi^\star)\big)$ by~\eqref{eq:replace_keyframes},\,
    $\forall \xi^\star \in \mathbf{\Xi}$;\\
    \If{$\overgroup{\mathbf{s}_\ell \mathbf{s}_{\ell+1}}$ is not allowed in~$\mathbf{\Xi}$}{
    $\bm{\kappa}_\ell =\texttt{SATA}\big(\overgroup{\mathbf{s}_\ell \mathbf{s}_{\ell+1}},\, e\big)$ by~\eqref{eq:kappa};\\
    $\nu^+=\texttt{REP}(\nu^\star,\,\ell,\,\bm{\kappa}_\ell)$ by~\eqref{eq:replace_keyframes};\\
    }
  }
  \Else{
    Select~$\mathbf{s}_{\widehat{\ell}}$ by~\eqref{eq:expand_condition}
    and compute~$\widehat{\mathbf{S}}_{\widehat{\ell}}$; \label{algline:close-by}\\
    $\nu^+=\texttt{INS}\big(\nu^\star,\,\ell,\,(\mathbf{\hat{s}},\,\emptyset)\big)$ by~\eqref{eq:add_keyframes},
    \ $\forall \mathbf{\hat{s}}\in \widehat{\mathbf{S}}_{\widehat{\ell}}$;\\
  }
  $Q\leftarrow Q\cup \{\nu^+\}$;\\
  \tcc{\textbf{Termination}}
  \If{$Q$ is empty or the time limit is reached}{
    Return the best~$\nu^\star$;\\
  }
  }
\end{algorithm}

(II) \textbf{Expansion}.
The selected node~$\nu^\star$ is expanded by finding the first keyframe $\mathbf{k}_{\ell}$ that
has not been \emph{assigned} with a mode,
i.e. $\ell=\textbf{min}\big\{\ell\in \{1,\cdots,L_{\nu^\star}\}\,|\,\xi_{\ell}=\emptyset \big\}$.
Then, the following two cases are discussed:
\textbf{(i)}
If the arc~$\overgroup{\mathbf{s}_{\ell} \mathbf{s}_{\ell+1}}$
intersects with any obstacle,
a new keyframe $(\mathbf{s}_{\widehat{\ell}},\mathbf{\varnothing})$
is generated by selecting an intermediate state $\mathbf{s}_{\widehat{\ell}}$
between $\mathbf{s}_{\ell}$ and $\mathbf{s}_{\ell+1}$ within the guiding path $\overline{\mathbf{S}}$,
which satisfies the following condition:
\begin{equation}\label{eq:expand_condition}
\big(\|\mathbf{s}_{\widehat{\ell}}-\mathbf{s}_{\ell}\|-\alpha_{\min}\big)
\big(\|\mathbf{s}_{\widehat{\ell}}-\mathbf{s}_{\ell+1}\|-\alpha_{\min}\big)>0,
\end{equation}
where~$\alpha_{\min}>0$ is a chosen parameter
such that if $\|\mathbf{s}_1-\mathbf{s}_2\|\leq\alpha_{\min}$,
the arc~$\overgroup{\mathbf{s}_1\mathbf{s}_2}$ is collision-free,
$\forall \mathbf{s}_1, \mathbf{s}_2 \in \overline{\mathbf{S}}$.
This condition ensures that
the length of both segments is either greater than $\alpha_{\min}$
or less than $\alpha_{\min}$,
thus avoiding  uneven segmentation.
Thus, a new node $\nu^+$ is generated
by inserting the keyframe~$(\mathbf{s}_{\widehat{\ell}},\,\mathbf{\varnothing})$
between~$\kappa_\ell$ and $\kappa_{\ell+1}$ within~$\nu^\star$,
i.e.,
\begin{equation} \label{eq:add_keyframes}
  \nu^+\triangleq \kappa_0 \cdots\kappa_{\ell}\, (\mathbf{s}_{\widehat{\ell}},\mathbf{\varnothing})\,
  \kappa_{\ell+1}\,\cdots\,\kappa_{L_{\nu^\star}},
\end{equation}
such that the original arc~$\overgroup{\mathbf{s}_{\ell}\mathbf{s}_{\ell+1}}$
is split into two sub-arcs~$\overgroup{\mathbf{s}_{\ell}\mathbf{s}_{\widehat{\ell}}}$
and $\overgroup{\mathbf{s}_{\widehat{\ell}}\mathbf{s}_{\ell+1}}$.
This step is encapsulated by $\nu^+\triangleq \texttt{INS}\big(\nu,\ell,(\mathbf{s}_{\widehat{\ell}},\varnothing)\big)$.
Meanwhile, to account for possible disturbances,
a set of close-by states~$\widehat{\mathbf{S}}_{\widehat{\ell}}$
are generated by adding perturbations to~$\mathbf{s}_{\widehat{\ell}}$,
e.g., small rotation and translation.
The associated set of new nodes~$\{\nu^+\}$ are all added to~$Q$;
(\textbf{ii})
If $\overgroup{\mathbf{s}_{\ell}\mathbf{s}_{\ell+1}}$
is collision-free,
a set of feasible modes is generated by solving the associated
optimization in~\eqref{eq:sparse-opt},
denoted by~$\mathbf{\Xi}(\overgroup{\mathbf{s}_{\ell}\mathbf{s}_{\ell+1}})$.
For each mode~$\xi^\star\in\mathbf{\Xi}$,
a new node is generated by replacing the keyframe~$(\mathbf{s}_{\ell}, \varnothing)\in \nu^\star$
with $(\mathbf{s}_{\ell}, \xi^\star)$, i.e.,
\begin{equation} \label{eq:replace_keyframes}
  \nu^+ \triangleq \kappa_0 \cdots \kappa_{\ell-1}\,
  (\mathbf{s}_{\ell},\,\xi^\star)\,\kappa_{\ell+1}\cdots\kappa_{L_{\nu^\star}},
\end{equation}
which is encapsulated by $\nu^+\triangleq \texttt{REP}\big(\nu,\ell,(\mathbf{s}_{\ell},\xi^\star)\big)$.
Then the set of new nodes $\{\nu^+\}$ are added to the queue~$Q$.
Meanwhile,
as discussed,
it is possible that the arc~$\overgroup{\mathbf{s}_{\ell}\mathbf{s}_{\ell+1}}$
may not be allowed in any mode~$\xi^\star\in\mathbf{\Xi}^\star$.
In this special case,
the method described in the proof of Theorem~\ref{thm:feasibility}
is applied to approximate the segment of path~$\overline{\mathbf{S}}$
by a sequence of shorter arcs with a error bound in Hausdorff distance.
For brevity, this process of approximation via sequential arc transitions
is denoted by:
\begin{equation}\label{eq:kappa}
\bm{\kappa}_\ell \triangleq \texttt{SATA}\big(\overgroup{\mathbf{s}_{\ell} \mathbf{s}_{\ell+1}},\, e\big),
\end{equation}
where~$\bm{\kappa}_\ell$ is the resulting sequence of keyframes
between~$\mathbf{s}_{\ell}$ and $\mathbf{s}_{\ell+1}$;
and~$e>0$ is the chosen bound.
Thus a new node is generated by replacing the keyframe~$(\mathbf{s}_{\ell}, \varnothing)\in \nu^\star$
with~$\bm{\kappa}_\ell$,
i.e., $\nu^+\triangleq \texttt{REP}(\nu,\,\ell,\,\bm{\kappa}_\ell)$,
which is also added to the queue~$Q$.

(III) \textbf{Termination}.
If all the keyframes in the selected node~$\nu^\star$ are assigned with modes,
it indicates that~$\nu^\star$ is a \emph{feasible} solution.
More importantly, the arc transitions in~$\nu^\star$ are collision-free,
and all allowed in the mode set~$\Xi$.
The KG-HS algorithm continues until~$Q$ is empty
or the maximum planning time is reached.
Once terminated, the current-best hybrid plan is selected and sent
to the robots for execution.

\begin{theorem}\label{thm:completeness}
    Given that the set of modes~$\Xi$ is sufficient and
    the guiding path~$\overline{\mathbf{S}}$ is collision-free,
    the proposed KG-HS algorithm finds a feasible hybrid
    plan~$\nu^\star$ in finite steps.
\end{theorem}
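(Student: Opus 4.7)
The plan is to prove completeness in two stages: (a) exhibit a feasible hybrid plan under the assumptions, and (b) show that KG-HS actually constructs such a plan after finitely many expansions. For (a), I would invoke Theorem~\ref{thm:feasibility}: sufficiency of $\Xi$ guarantees that the collision-free guiding path $\overline{\mathbf{S}}$ can be approximated by a finite sequence of allowed arc transitions with Hausdorff error below any prescribed bound, in particular below the clearance of $\overline{\mathbf{S}}$ to the obstacle set $\mathcal{O}$. That approximation is then itself collision-free, every arc is allowed in some mode, and it constitutes a valid terminal node of the search space $\mathcal{V}$.

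Next I would analyze the three possible outcomes of expanding a node $\nu$ with an unassigned segment $\overgroup{\mathbf{s}_\ell\mathbf{s}_{\ell+1}}$: (i) if the arc is not collision-free, the rule $\texttt{INS}$ inserts an intermediate keyframe chosen on $\overline{\mathbf{S}}$ under~\eqref{eq:expand_condition}; (ii) if the arc is collision-free and allowed in some mode produced by $\texttt{MG-SO}$, a $\texttt{REP}$ successor assigns that mode; (iii) otherwise, $\texttt{SATA}$ replaces the unassigned keyframe with the finite sequence of already-assigned keyframes yielded by the constructive proof of Theorem~\ref{thm:feasibility}. I would then show that case (i) cannot recur indefinitely along the same branch: the parameter $\alpha_{\min}$ is chosen so that any arc whose endpoints lie within $\alpha_{\min}$ along $\overline{\mathbf{S}}$ is automatically collision-free, and the splitting condition in~\eqref{eq:expand_condition} forces every subdivision to strictly reduce the length of at least one sub-arc until it falls below $\alpha_{\min}$. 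Hence after a bounded number of subdivisions every remaining unassigned arc belongs to case (ii) or (iii).

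Combining the two stages, along the lineage that mimics the constructive feasibility plan of Theorem~\ref{thm:feasibility}, each expansion either reduces the number of unassigned keyframes (cases (ii) and (iii)) or strictly shortens an unresolved arc towards the $\alpha_{\min}$ threshold (case (i)); since both the path length of $\overline{\mathbf{S}}$ and the per-expansion branching are finite, this lineage produces, in a bounded number of iterations, a node all of whose arcs are collision-free and all of whose modes are assigned by $\texttt{MG-SO}$ or $\texttt{SATA}$. This node is by definition feasible, is stored upon selection from the priority queue $Q$, and its cost under~\eqref{eq:cost-plan} is finite, so KG-HS returns a feasible $\nu^\star$ in finitely many steps.

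The main obstacle I expect is the bookkeeping for case (i): one has to certify a uniform lower bound on the length of any segment that remains unresolved more than once, since without such a bound the tree of $\texttt{INS}$ subdivisions could a priori branch forever. I plan to obtain this bound from the strictly positive clearance of $\overline{\mathbf{S}}$ from $\mathcal{O}$ (implied by the collision-free cost in~\eqref{eq:astar-heuristic}) together with compactness of $\widehat{\mathcal{W}}$, which jointly bound the subdivision depth by a constant depending only on $\alpha_{\min}$ and the total length of $\overline{\mathbf{S}}$; coupling this bound with the finite number of modes returned by $\texttt{MG-SO}$ per arc closes the termination argument.
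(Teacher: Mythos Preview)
Your proposal is correct and follows essentially the same route as the paper's own (sketch) proof: both hinge on the existence of the threshold $\alpha_{\min}$ that forces the \texttt{INS} subdivisions to terminate, leading to a search-depth bound proportional to $|\overline{\mathbf{S}}|/\alpha_{\min}$, and both then invoke Theorem~\ref{thm:feasibility} together with the sufficiency of $\Xi$ to guarantee that every remaining arc can be assigned a mode (directly or via \texttt{SATA}). Your treatment is in fact more careful than the paper's sketch---you make explicit the three-case expansion analysis, the finite branching from \texttt{MG-SO}, and the bookkeeping needed to rule out infinite chains of \texttt{INS} splits---but the underlying argument is the same.
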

\begin{proof}
(Sketch)
The key insight is that any infeasible arc
is split into two arcs with shorter length,
until all the arcs are collision-free.
Therefore, there must exists a minimum arc length~$\alpha_{\min}>0$
such that any segments~$\overgroup{\mathbf{s}_{\ell}\mathbf{s}_{\ell+1}}$ in~$\nu^\star$
with length less than~$\alpha_{\min}$ is collision-free.
Consequently, the search depth is bounded by~$2|\overline{\mathbf{S}}|/\alpha_{\min}$
with~$|\overline{\mathbf{S}}|$ being the total length,
yielding that the termination condition is reached in finite steps.
Lastly, since the set of modes~$\Xi$ is sufficient,
the set of arcs~$\overgroup{\mathbf{s}_{l}\mathbf{s}_{l+1}}$ is feasible
when $\alpha_{\min}$ is sufficiently small by Theorem~\ref{thm:feasibility}.
Thus, the KG-HS algorithm is guaranteed to find a feasible solution in finite steps.
\end{proof}


\subsection{Online Execution and Adaptation}\label{subsec:control}
\change{The online execution of the hybrid plan~$\nu^\star$ consists of three parts:
(i) trajectory tracking under each arc segment with the specified mode;
(ii) mode switching between segments;
and (iii) event-triggered adaptation of the hybrid plan.}

\subsubsection{Trajectory Tracking}
\label{subsubsec:trajectory}
The nonlinear model predictive control (NMPC) framework
from~\citet{allgower2004nonlinear} is adopted for the robots
such that the object pose can track the arc
transitions~$\overgroup{\mathbf{s}_\ell\mathbf{s}_{\ell+1}}$
with mode~$\xi_\ell$.
The main motivation is to account for motion uncertainties and perturbations
both in robot motion and pushing, such as slipping and sliding.
More specifically, the tracking problem is formulated
as an online optimization problem for the pushing
forces~$\mathbf{q}_{\xi}(t)$
and the refined target trajectory~$\mathbf{s}(t)$, i.e.,
\begin{equation}\label{eq:nmpc}
\begin{split}
\underset{\{\mathbf{q}_{\xi}^{\bsss}(t),\mathbf{s}(t)\}}{\textbf{min}}&
\sum_{t\in \widetilde{\mathcal{T}}}\Big(\left\|\mathbf{q}_{\xi}^{\bsss}(t)
-\eta(\mathbf{p}^{\bsss}(t))\right\|_{_1}+
        w_{\texttt{I}}(\Delta\mathbf{p}_t)^2
        \Big)\\
        \textbf{s.t.}&\quad \mathbf{s}(0)=\widetilde{\mathbf{s}}_{0},\
        \mathbf{s}(T)=\widetilde{\mathbf{s}}_{\texttt{G}},\
        \Omega(t)\subset\widehat{\mathcal{W}};\\
        &\quad \mathbf{s}(t+1)=\mathbf{s}(t)+\mathbf{R}_{\bsss}\mathbf{p}^{\bsss}(t),
        \ \forall t\in \widetilde{\mathcal{T}};\\
        &\quad \mathbf{q}_{\widetilde{\xi}}^{\bsss}(t)\in
        \mathrm{Q}_{\widetilde{\xi}}^{\bsss},\
        \forall t\in \widetilde{\mathcal{T}};
\end{split}
\end{equation}
where~$\widetilde{T}$ is the planning horizon of NMPC;
$\widetilde{\mathcal{T}}\triangleq \{1,\cdots,\widetilde{T}\}$;
$\Delta\mathbf{p}_t\triangleq \mathbf{p}^{\bsss}(t)-\mathbf{p}^{\bsss}(t-1)$
is the acceleration of the object;
$w_{\texttt{I}}>0$ is a design parameter;
$\widetilde{\mathbf{s}}_{0}$ is current state of the target;
and $(\widetilde{\mathbf{s}}_{\texttt{G}},\, \widetilde{\xi})$
are the local reference state and the associated mode,
selected from the hybrid plan~$\nu^\star$.
The second objective measures the smoothness of the resulting trajectory,
which is ignored in quasi-static analysis but considered here.


The optimization above can be solved by a nonlinear optimization solver,
e.g.,~\texttt{IPOPT} within CasADi from~\citet{andersson2019casadi}.
Denote by~$\mathbf{S}^\star(t)\triangleq \mathbf{s}_1\cdots \mathbf{s}_{\widetilde{T}}$
the resulting trajectory, of which the first state
is chosen as the subsequent desired state~$\widehat{\mathbf{s}}$.
The associated desired contact point and orientation
for robot~$R_n$ are given by~$\widehat{\mathbf{c}}_n$
and~$\widehat{\psi}_n$, $\forall n \in \mathcal{N}$.
Thus, their control inputs are given by
$\mathbf{v}_n=K_{v}(\widehat{\mathbf{\mathbf{c}}}_n-\mathbf{c}_n)$
and~$\omega_n=K_{\psi}(\widehat{\psi}_n-\psi_n)$,
where~$K_{v}$ and $K_{\psi}$ are positive gains;
and $\mathbf{c}_n, \psi_n$ are the current contact
point and orientation of~$R_n$.
With simple P-controllers and a small step size,
the robots can push the target along~$\mathbf{S}^\star(t)$,
which is updated by solving~\eqref{eq:nmpc} recursively
online at a lower rate.

\subsubsection{Mode Switching}
\label{subsubsec:switch}
Initially, since the robots are homogeneous,
they are assigned to their closest contact
points given the first mode~$\xi_0$.
Afterwards, as the segment~$\overgroup{\mathbf{s}_\ell\mathbf{s}_{\ell+1}}$ is traversed,
the fleets switch from mode~$\xi_\ell$ to~$\xi_{\ell+1}$.
To reduce switching time and avoid possible collisions,
the switching strategy is designed as follows.
Let the set of old contact points be~$\{\mathbf{c}_{n}\}$ for mode~$\xi_{\ell}$ and
the new set~$\{\mathbf{c}^+_n\}$ for mode~$\xi_{\ell+1}$.
First, the periphery of the object is transformed into a circle
such that the relative ordering of the contact points is preserved.
Second, an diameter associated with
angle $\theta^\star\in [0,\, \pi]$ is chosen
such that the number of old contact points
and the number of new contact points
on both side of the diameter are equal.
Consequently, the points in~$\{\mathbf{c}^+_n\}$
are numbered clockwise starting from~$\theta=\theta^\star$,
while the same applies for the robots at~$\{\mathbf{c}_{n}\}$.
Lastly, each robot is assigned to the new contact point
with the same number and a local path is planned for navigation.
It can be shown that no collisions between the robots can happen
and the length of each path is
not longer than half of the perimeter,
if 
the minimum distance between the target and obstacles
is greater than the robot radius;
and all robots move at similar speed,
as illustrated in Fig.~\ref{fig:overall}.
\change{\subsubsection{Online Adaptation} \label{subsubsec:adaptation}
  Due to model uncertainties such as communication delays
  and actuation noises or slipping,
  the NMPC scheme in~\eqref{eq:nmpc} might not be able to rectify
  the system towards the reference state,
  i.e., the object can not follow the refined trajectory~$\mathbf{S}^\star$.
  In this case, a re-planning mechanism for the hybrid plan~$\nu^\star$ is necessary
  to find a new guiding path and a new associated hybrid plan.
  The triggering condition is rule-based:
  (i) the target object deviates from the planned trajectory beyond a given threshold;
  (ii) the distance between the object and obstacles is less than the size of the robot;
  (iii) the object is stuck within an area for a certain period;
  (iv) new static obstacles are detected;
  or (iv) one or several robots fail and can not participate in the task anymore.
  Then, as shown in Fig.~\ref{fig:overall}, the KG-HS algorithm is executed again
  with the current system state as the initial state and the functional robots,
  yielding a new hybrid plan~$\nu^\star$.
  The NMPC is recomputed to generate new reference position for the robots.
  Similar approaches can be applied to the case of dynamic obstacles,
  where the safety constraint in the NMPC can modify the~$\mathbf{S}^\star$
  given the perceived or predicted motion of the obstacles.
  As validated by numerical results in Sec.~\ref{sec:experiments},
  this mechanism is effective and essential for improving the overall robustness,
  especially in uncertain environments or during robot failures.
  Last but not least, if measurements of the object state are not always available
  or temporary lost. An extended Kalman filter (EKF) can be adopted to estimate the object
  state at all time, i.e., given the desired target velocity,
  the contact points,
  and the state of all robots.
  Detailed formulations are omitted here for brevity.}

  \begin{figure}[t!]
    \centering
    \includegraphics[width=0.45\textwidth]{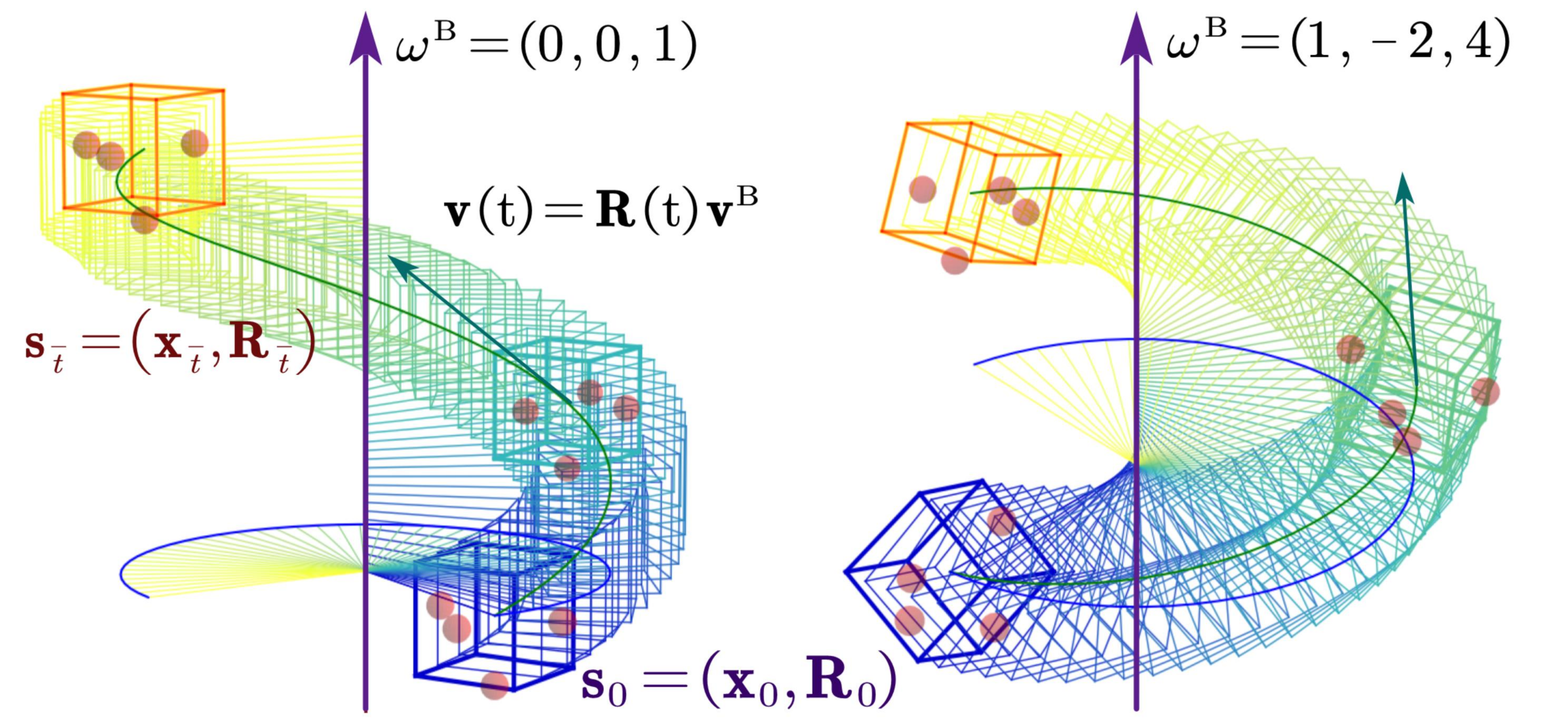}
    \caption{\change{Illustration of the spiral transition in 3D workspace}.}
    \label{fig:spiral_trajectory}
    \vspace{-5mm}
  \end{figure}

\begin{figure*}[t!]
        \centering
        \includegraphics[width=1\linewidth]{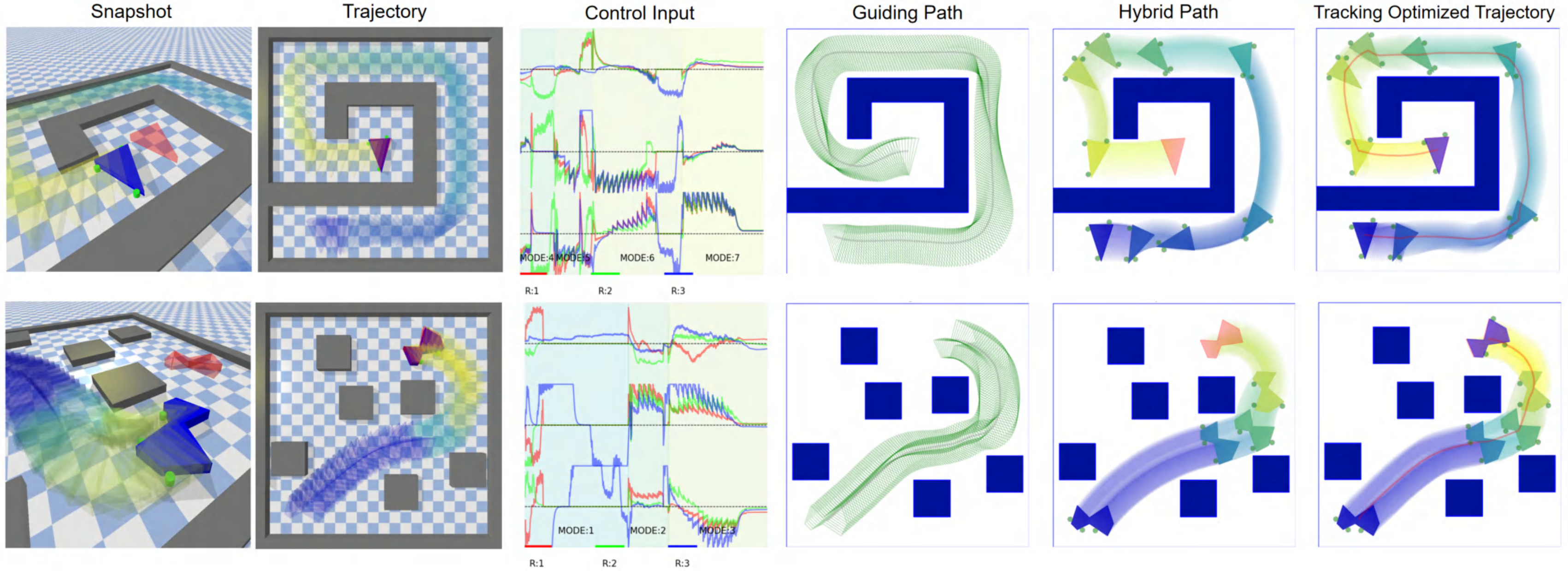}
        \caption{\change{Simulation results of three robots pushing
        different target objects (goal state in red),
        including the guiding path~$\overline{\mathbf{S}}$,
        the hybrid plan~$\nu^\star$ with keyframes and modes,
        the final trajectory~$\mathbf{S}^\star$ after execution,
        and the control inputs of the robots.}}
        \label{fig:sim_results}
        \vspace{-6mm}
\end{figure*}


\subsection{Discussion}\label{subsec:overall}

\subsubsection{Computation Complexity}
\label{subsubsec:complexity}
The time complexity of the proposed KG-HS algorithm
is mainly determined by the mode generation, the collision checking
and the size of the search space.
\change{The MG-SO method in~\eqref{eq:sparse-opt} for mode generation is a LP,
with the variable size proportional to the number of all contact points $N_{\mathcal{V}}$.}
Thus, it has a complexity of~$\mathcal{O}(N_ {\mathcal{V}}^{3.5})$ from~\citet{terlaky2013interior}.
The time complexity of the GJK algorithm for 2D collision checking
from~\citet{bergen1999fast}
is $\mathcal{O}(V+V_O)$,
where~$V_O$ is the maximum number of vertices for the obstacles.
Meanwhile, for each iteration of node expansion in Alg.~\ref{alg:KGHS},
at most~$W_{k}$ keyframe variations in the first case or $W_{\xi}$ modes
in the second case are generated.
\change{
Meanwhile,
the computation of the loss $J_{\text{MF}}$ in Eq.~\eqref{eq:cost-plan}
is a LP with the variable size proportional to the number of robots,
so the complexity is $\mathcal{O}(N^{3.5})$.
Recall that the search depth is bounded by~$2\alpha/\alpha_{\min}$
from~Theorem~\ref{thm:completeness}.
Therefore, the total time complexity of the hybrid search algorithm
is approximated by~$\mathcal{O}\Big((N_ {\mathcal{V}}^{3.5}+N^{3.5}+V+V_{O})\cdot \big(\max\{W_\xi,W_k\}\big)^{2\alpha/\alpha_{\min}}\Big)$.
More numerical analyses on how the proposed algorithm scales
across different fleet sizes can be found in Sec.~\ref{subsubsec:scale}.}

\subsubsection{Additional Heuristics}
\label{subsubsec:discussion}
Some additional heuristics are applied to further
improve the search performance:
{(I) Selection of new keyframes}.
Instead of simply choosing the middle point between~$\mathbf{s}_{\ell}$
and~$\mathbf{s}_{\ell+1}$ in~\eqref{eq:add_keyframes},
the new keyframe~$\mathbf{s}_{\hat{\ell}^\star}$ can be selected
by minimizing the distance between the arc transitions
($\overgroup{\mathbf{s}_{\ell}\mathbf{s}_{\hat{\ell}^\star}}$,
$\overgroup{\mathbf{s}_{\hat{\ell}^\star}\mathbf{s}_{\ell+1}}$)
and the obstacles,
typically yielding the tuning points on the guiding path;
{(II) Local reachability}.
The close-by states~$\widehat{\mathbf{S}}_{\widehat{\ell}}$
in Line~\ref{algline:close-by} of Alg.~\ref{alg:KGHS}
are first filtered to ensure all states inside are reachable
via a collision-free arc from $\mathbf{s}_{\ell}$,
to avoid adding infeasible nodes in the search tree;
{(III) Inflation of obstacles}.
The static obstacles are inflated by at least the robot radius
when generating the guiding path.
Moreover,  a differentiable penalty is added
in the objective of~\eqref{eq:nmpc}
for a sufficient safety margin of the robots.
\subsubsection{Generalization}\label{subsubsec:general}
The proposed method can be generalized to different scenarios:
\change{{(I) Heterogeneous robots}.
When the robots are heterogeneous with different capabilities,
the interaction mode should be defined as
$\xi\triangleq(\mathbf{c}_1,R_{c_1})\cdots(\mathbf{c}_N,R_{c_N})$,
which assigns specific robot to each contact point~$\mathbf{c}_n$
based on the associated force constraints.
For instance, robots with larger maximum forces are assigned to
contact points that require larger forces,
while robots with smaller forces are assigned to assistive contact points
that provide additional geometric constraints;
{(II) 3D pushing tasks}.
If the task is to push a target object with 6D pose in
a complex~3D scene,
the proposed framework can be adopted with minor modifications,
under the condition that
the workspace is zero-gravity and resistant,
due to the quasi-static assumption.
More specifically,
consider a target object of which the state is denoted
by~$\mathbf{s}(t)\triangleq (\mathbf{x}(t),\mathbf{R}(t))$,
where~$\mathbf{x}(t)\in \mathbb{R}^3$ is the position
and $\mathbf{R}(t)\in \mathbb{R}^{3\times 3}$ is the rotation matrix
for orientation.
Additionally, the robots are modeled as spheres with radius~$r$,
with the translational velocity~$\mathbf{v}_n\in \mathbb{R}^3$
as control inputs.
When moving dynamically,
the target experiences a force of resistance that
is proportional to its generalized velocity, i.e.,
$\mathbf{q}_{\texttt{res}} \triangleq
- \mathbf{K}_{\texttt{res}}\mathbf{p} \in \mathbb{R}^6$
with~$\mathbf{K}_{\texttt{res}}$ being a diagonal matrix,
  and $\mathbf{p}=(\mathbf{v},\bm{\omega})=(v_x,v_y,v_z,\omega_x,\omega_y,\omega_z)\in \mathbb{R}^6$ as the generalized velocity.
  Similar to the 2D case,
  consider that the robots push the target with
  velocity~$\mathbf{p}(t)=
  \text{diag}(\mathbf{R}(t),\mathbf{R}(t))\mathbf{p}^{\bsss}$
  for a time duration~$\bar{t}>0$.
  Then, the arc transition in~\eqref{eq:arc_trajectory}
  can be extended to~3D workspace by:
  \begin{equation}\label{eq:spiral_trajectory}
    \begin{split}
    \text{d}\mathbf{R}(t)& \triangleq \bm{\omega}(t) \times \mathbf{R} (t)\,\text{d}t=(\mathbf{R}_0\,\bm{\omega}^{\bsss})\times \mathbf{R}(t)\,\text{d}t,\\
    \mathbf{R}(t) &= \mathbf{P}_{\bm{\omega}}
          \left[ \begin{matrix}
          \mathbf{Rot}\left( \|\bm{\omega}^{\bsss}t\|_2 \right) &		0\\
          0&		                                                    1\\
          \end{matrix} \right]
    \mathbf{P}_{\bm{\omega}}^{-1} \mathbf{R}_0,\\
    \Delta \mathbf{x} &\triangleq \big(\int_{t=0}^{\bar{t}} \mathbf{R}(t) dt\big)\, \mathbf{v}^{\bsss}
    =\mathbf{\Phi}_{\texttt{3D}}(\mathbf{R}_0,\, \bm{\omega}^{\bsss}\bar{t})\ \bar{t}\ \mathbf{v}^{\bsss},
    \end{split}
  \end{equation}
  where $\mathbf{P}_{\bm{\omega}}\in \mathbb{R}^{3\times 3}$ is an unitary matrix
  of which the $3$-th column is aligned with the rotation axis $\bm{\omega}$,
   i.e. $\mathbf{P}_{\bm{\omega}}(3)=\mathbf{R}_{0}\bm{\omega}^{\bsss}$.
   As shown in Fig.~\ref{fig:spiral_trajectory},
   this motion results in a spiral trajectory in 3D space,
   i.e.,~a combination of a constant rotation about the axis $\bm{\omega}$
   and a translation along $\bm{\omega}$.
\begin{lemma}\label{lm:spiral_existence}
  For any given pair of initial state~$\mathbf{s}_0$ and goal state~$\mathbf{s}_{\texttt{G}}$ of the object in the 3D workspace,
  there exists a spiral trajectory $\mathbf{Trj}(\mathbf{s}_0,\mathbf{p}^{\bsss},\bar{t})$
  such that $\|\bm{\omega} \bar{t}\|_2\in \left[0,\pi \right]$.
\end{lemma}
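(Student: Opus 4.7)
(Proposal sketch) The plan is to mirror the argument of Lemma~\ref{lm:arc_existence} by decoupling the problem into a rotational part and a translational part, and then appealing to the invertibility of the generalized $\mathbf{\Phi}_{\texttt{3D}}$ mapping in~\eqref{eq:spiral_trajectory}.

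First, I would handle the orientation. The desired change of orientation is the fixed matrix $\Delta\mathbf{R} \triangleq \mathbf{R}_{\texttt{G}}\mathbf{R}_0^{-1} \in SO(3)$. By the standard axis--angle decomposition, there exist a unit axis $\hat{\bm{\omega}} \in \mathbb{R}^3$ and an angle $\theta \in [0,\pi]$ such that $\Delta\mathbf{R} = \exp(\theta\,[\hat{\bm{\omega}}]_{\times})$. From the second line of~\eqref{eq:spiral_trajectory}, this identifies the product $\bm{\omega}\bar{t} = \theta\,\hat{\bm{\omega}}$ with $\|\bm{\omega}\bar{t}\|_2 = \theta \in [0,\pi]$, and hence the body-frame counterpart $\bm{\omega}^{\bsss}\bar{t} = \mathbf{R}_0^{-1}\bm{\omega}\bar{t}$.

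Second, with $\bm{\omega}\bar{t}$ fixed, I would solve the translational equation $\mathbf{x}_{\texttt{G}} - \mathbf{x}_0 = \mathbf{\Phi}_{\texttt{3D}}(\mathbf{R}_0,\bm{\omega}^{\bsss}\bar{t})\,\bar{t}\,\mathbf{v}^{\bsss}$ for $\mathbf{v}^{\bsss}$. Using the factorization from~\eqref{eq:spiral_trajectory},
\begin{equation*}
\int_0^{\bar{t}} \mathbf{R}(t)\,dt
= \mathbf{P}_{\bm{\omega}}
\begin{pmatrix}
\int_0^{\bar{t}}\mathbf{Rot}(\|\bm{\omega}^{\bsss}\|t)\,dt & 0\\
0 & \bar{t}
\end{pmatrix}
\mathbf{P}_{\bm{\omega}}^{-1}\mathbf{R}_0.
\end{equation*}
The off-axis $2\times 2$ block integrates to $\tfrac{1}{\|\bm{\omega}\|}\bigl(\begin{smallmatrix}\sin\theta & \cos\theta-1\\ 1-\cos\theta & \sin\theta\end{smallmatrix}\bigr)$ with determinant $\tfrac{2(1-\cos\theta)}{\|\bm{\omega}\|^2}$, while the along-axis $1\times 1$ block equals $\bar{t}$. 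Both $\mathbf{P}_{\bm{\omega}}$ and $\mathbf{R}_0$ are orthogonal, so $\mathbf{\Phi}_{\texttt{3D}}$ is nonsingular whenever $\theta \in (0,\pi]$ and $\bar{t}>0$, which allows $\mathbf{v}^{\bsss}$ to be recovered uniquely by inversion. Finally, combining $\mathbf{v}^{\bsss}$ with the already-determined $\bm{\omega}^{\bsss}$ yields the generalized velocity $\mathbf{p}^{\bsss}$ and thus the desired spiral trajectory $\mathbf{Trj}(\mathbf{s}_0,\mathbf{p}^{\bsss},\bar{t})$.

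The main obstacles are the degenerate cases. When $\theta = 0$, the axis $\hat{\bm{\omega}}$ is undefined, but in this regime $\int_0^{\bar{t}}\mathbf{R}(t)\,dt = \bar{t}\,\mathbf{R}_0$ is trivially invertible and the trajectory degenerates to a pure translation, which is handled separately as a limit (analogous to the $\omega^{\bsss}=0$ case in~\eqref{eq:arc_trajectory}). When $\theta = \pi$, the axis is only determined up to sign, but $2(1-\cos\pi)=4\neq 0$, so $\mathbf{\Phi}_{\texttt{3D}}$ remains invertible and either choice of sign yields a valid (non-unique) spiral. Uniqueness therefore holds strictly on $\theta\in[0,\pi)$, and mere existence on the closed interval $[0,\pi]$, which is exactly what the lemma claims.
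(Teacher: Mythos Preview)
Your argument is correct and is essentially a self-contained, constructive version of the paper's proof. The paper's own proof is a two-line sketch that simply invokes Chasles's theorem (every rigid displacement in $\mathbb{R}^3$ is a screw motion) and then says the desired $\mathbf{p}^{\bsss}$ can be read off from~\eqref{eq:spiral_trajectory}. What you do---axis--angle decomposition of $\Delta\mathbf{R}=\mathbf{R}_{\texttt{G}}\mathbf{R}_0^{-1}$ to fix $\bm{\omega}\bar{t}$ with $\|\bm{\omega}\bar{t}\|_2\in[0,\pi]$, followed by inversion of $\mathbf{\Phi}_{\texttt{3D}}$ to recover $\mathbf{v}^{\bsss}$---is exactly the explicit construction underlying Chasles's theorem, so the two proofs coincide in substance. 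Your version buys additional rigor: the determinant computation $2(1-\cos\theta)/\|\bm{\omega}\|^2$ for the off-axis block, and the separate handling of the degenerate cases $\theta=0$ (pure translation) and $\theta=\pi$ (axis sign ambiguity, existence but not uniqueness), are details the paper's sketch leaves implicit but which are needed to make the claim precise.
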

\begin{proof}
  (Sketch) This is a straightforward corollary of the Chasles's theorem~\cite{whittaker1964treatise}.
  Analogous to the 2D case,
  the desired velocity $\mathbf{p}^{\bsss}=(\mathbf{v}^{\bsss},\bm{\omega}^{\bsss})$ can be determined
  by~\eqref{eq:spiral_trajectory},
  yielding the spiral trajectory $\mathbf{Trj}(\mathbf{s}_0,\mathbf{p}^{\bsss},\bar{t})$.
\end{proof}
Moreover,
in total $D_{\texttt{3D}}=12$ directions are chosen to evaluate the
multi-directional feasibility~$J_{\texttt{MF}}$.
Similarly, the mode generation in~\eqref{eq:sparse-opt}
should adapt to the constraints resulting from the~3D friction cone,
by allowing~$3$ degrees of freedom for the contact force at each point.
Consequently, the number of optimization variables
becomes~$N_{\mathcal{V}}\times 3D_{\texttt{3D}}$.
Note that the hybrid search algorithm KG-HS still applies,
with the 6D statespace~$\mathcal{S}$.
Lastly, the control input of each robot~$R_n$ is adjusted to be
$\mathbf{v}_n=K_{v}(\widehat{\mathbf{\mathbf{c}}}_n-\mathbf{c}_n)$,
$\forall n\in \mathcal{N}$.
}
 \begin{figure*}[t!]
  \centering
  \includegraphics[width=0.96\linewidth]{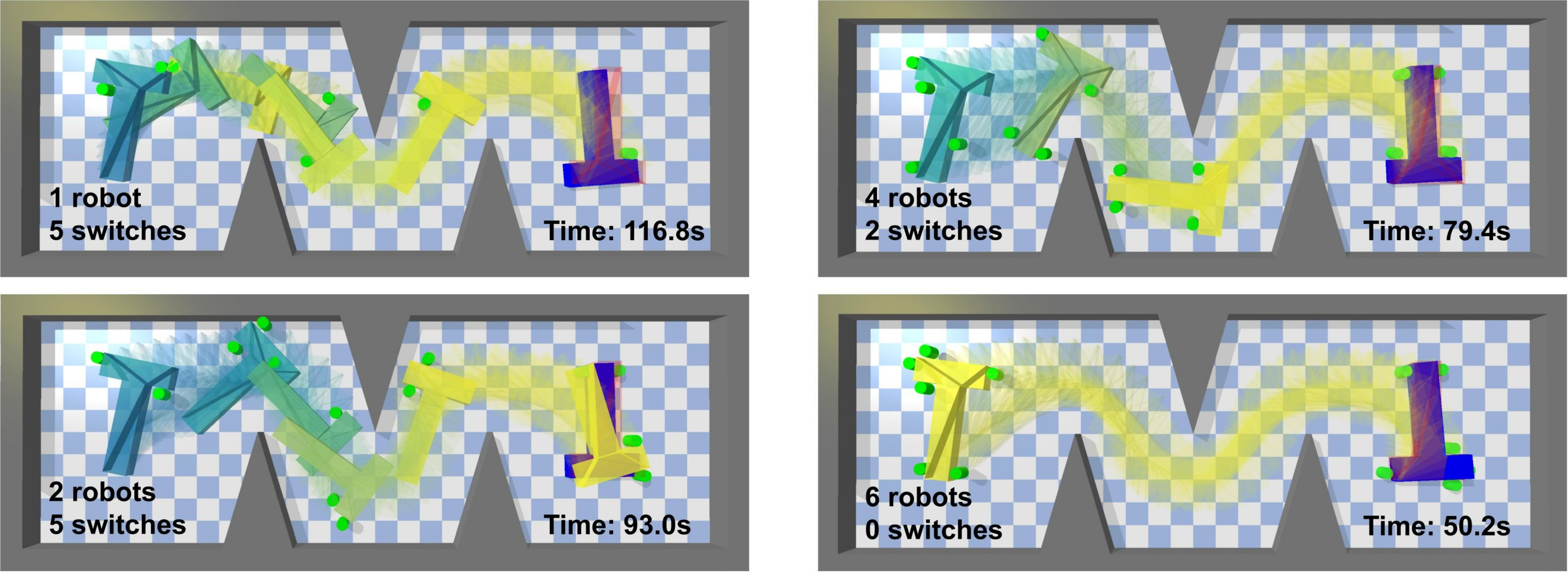}
  \vspace{-0mm}
  \caption{
    \change{Final trajectories when different number of robots
      are deployed,
      yielding different modes and task durations.}}
  \label{fig:robotnum}
  \vspace{-2mm}
\end{figure*}
\begin{figure*}[t!]
  \centering
  \includegraphics[width=0.96\linewidth]{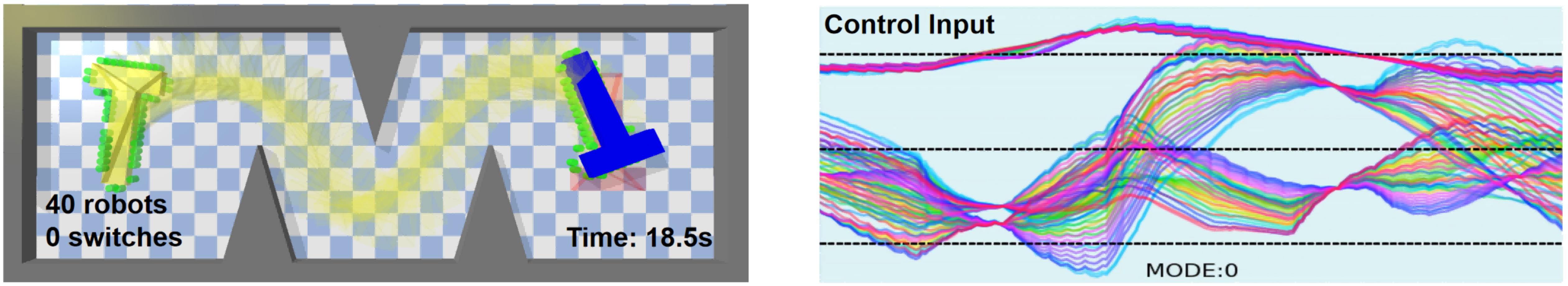}
  \vspace{-0mm}
  \caption{\change{Scalability analysis for~$40$ robots: final trajectories (\textbf{Left}) and control inputs of all robots (\textbf{Right}).}}
  \label{fig:scalability_40robots}
  \vspace{-4mm}
\end{figure*}


\section{Numerical Experiments} \label{sec:experiments}
To further validate the proposed method,
extensive numerical simulations and hardware experiments
are presented in this section.
The proposed method is implemented in Python3 and tested on a
laptop with an Intel Core i7-1280P CPU.
Numerical simulations are conducted in the \texttt{PyBullet} environment
as a high-fidelity physics engine from~\citet{coumans2019},
while~\texttt{ROS1} is adopted for the hardware experiments.
The \texttt{GJK} package from~\citet{bergen1999fast} is used for collision checking,
and the \texttt{CVXOPT} package from~\citet{diamond2016cvxpy} for solving liner programs.
Simulation and experiment videos are available
in the supplementary material.

\subsection{Numerical Simulations}

\subsubsection{System Setup}
\change{
Three distinct scenarios of size~$20m \times 20m$
with concave and convex obstacles
are considered,
including the first scenario as shown in Fig.~\ref{fig:overall},
and the second and third scenarios in Fig.~\ref{fig:sim_results}.
The target object has different shapes and sizes in each scenario,
with a mass of $10kg$,
a ground friction coefficient of $0.5$,
a side friction coefficient of $0.2$,
and a maximum thrust of $30\text{N}$.
Three homogeneous robots with a diameter of $0.25m$ are adopted for the pushing task.
}
More specifically,
a long rectangular object should be pushed through
a narrow passage in the first scenario;
a triangular object should be pushed following
a spiral corridor in the second scenario;
and a concave object should pass through
cluttered rectangular pillars in the last scenario.
The task is considered completed if the distance between the target object
and the goal is less than $0.2m$,
after which the robots can continue improving the accuracy.
The Pybullet simulator runs at a frequency of $240$Hz.
The MPC-based online optimization for object trajectory
updates at a frequency of $10Hz$, with a time horizon of $2s$
and an average speed of $0.5m/s$.
The frequency to update the intermediate reference for MPC is $1Hz$.
Other parameters are set as follows:
the weight for multi-directional feasibility
  $[w_{d}]=[5,1,1,1,1,1]$ in~\eqref{eq:feasibility},
 the weight $w_{\texttt{t}}=10$ in~\eqref{eq:cost-plan},
 the weight $w_{\texttt{I}}=10^4$ in~\eqref{eq:nmpc},
 the minimum split length~$\alpha_{\min}=0.1m$ in~\eqref{eq:feasibility}.

\subsubsection{Results}
As is shown in Fig.~\ref{fig:overall} and Fig.~\ref{fig:sim_results},
the proposed method successfully completed all tasks
in three distinctive scenarios.
The intermediate results are also shown including the guiding path,
the hybrid plan (with keyframes and modes highlighted in different colors),
the resulting trajectory of tracking the hybrid plan
via the proposed online NMPC,
and the control inputs of all robots during execution.
The planning time of three scenarios is $12.3s$, $13.2s$ and $21.1s$,
while the execution time is $79.2s$, $117s$ and $64.1s$.
The three tasks take $3$, $7$ and $3$ mode switches,
which is due to the fact that more rotations are required
for the second scenario.
The proposed NMPC can effectively track the optimized trajectory,
with the average tracking errors being $0.025m$,$0.026m$,and $0.048m$.
More metrics are discussed in the comparisons below.
Moreover, it is interesting to note that despite the NMPC problem in~\eqref{eq:nmpc}
can be solved in approximately~$40ms$,
yielding a maximum rate of $25Hz$,
higher updating rates for NMPC does not significantly improve control performance.
Instead,
the tracking control is more sensitive to the update rate
of the intermediate goal selected from the hybrid plan.
If the goal is chosen too close,
the resulting trajectory might have
drastic changes of the velocity,
while a too far goal might lead to a delayed response to errors.
Therefore, a lower rate to update the intermediate goal
would allow a more gradual convergence.
Lastly, it is worth mentioning that in all three scenarios
the online execution of the hybrid plan does not require any replanning,
different from the hardware experiments.

\subsubsection{\change{Complexity and Scalability}}\label{subsubsec:scale}
\change{To evaluate the computational complexity and scalability
of the proposed method,}
a larger environment of $30m\times10m$ is considered
where the target is a T-shape object with a mass of~$5kg$
and a ground friction coefficient of~$0.5$.
Since the maximum pushing force of one robot is~$30N$,
a single robot can move this object.
As shown in Fig.~\ref{fig:robotnum},
the proposed method is applied to robotic fleets with $1$, $2$, $4$ and $6$ robots
for the same task,
where the robots are randomly initialized.
The planning time is $10.3s$, $13.5s$, $15.2s$ and $18.1s$,
while the execution time is $116.8s$, $93.0s$, $79.4s$ and $50.2s$.
The resulting hybrid plans and trajectories are shown,
where the required number of modes are $6$, $6$, $3$ and $1$, respectively;
and the average tracking errors are $0.08m$, $0.05m$, $0.01m$ and~$0.01m$.
It indicates that fewer robots often require more mode switches
to complete the task, while more than~$6$ robots only need one mode.


\usetikzlibrary{patterns}
\pgfplotstableread[col sep=comma]{
	type,   Guiding Path,  KG-HS,    MPC,   Execution
	2,      3.78,          2.45,    9.32,   63.5
  5,      3.80,          3.67,    0.68,   42.4
	10,     3.75,          4.16,    0.06,   28.4
	20,     4.09,          3.82,    0.07,   24.5
  40,     3.79,          6.35,    0.08,   18.5
	}\mytable
\pgfplotsset{/pgfplots/error bars/error bar style={thick, black}}
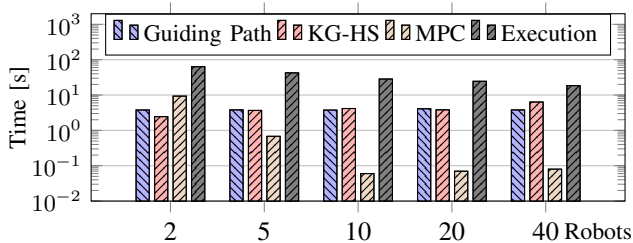
\begin{figure}[t!]

  \begin{subfigure}{}
    \centering
    \begin{tikzpicture}
      \begin{semilogyaxis}[
        width=0.98\linewidth,
        ybar,
        bar width=5pt,
        log origin = infty,
        ymin=0.01,
        ymax=2000,
        ytick={0.01,0.1, 1, 10, 100, 1000},
        yticklabel style = {font=\small},
        minor y tick num=9,
        height=0.46\linewidth,
        ymajorgrids=true,
        enlarge x limits={abs=30pt},
        legend style={at={(0.5,1.00),font=\small},
        anchor=north,legend columns=4, font=\small},
        ylabel={Time [s]},
        y label style={at={(axis description cs:0.075,0.5)},anchor=south, font=\small},
        symbolic x coords={2, 5,10, 20, 40},
        xlabel={Robots},
        x label style={at={(axis description cs:0.95,0.145)},font=\small},
        xtick=data,
      ]

          \addplot+[draw=black,fill,postaction={
            pattern=north west lines
        }] plot [error bars/.cd, y dir=both, y explicit,
                 error mark options={line width=0.25pt,rotate=90}]
           table[x=type,y=Guiding Path]{\mytable};
          \addlegendentry{Guiding Path}

          \addplot+[draw=black,fill,postaction={
            pattern=north east lines}]
            plot [error bars/.cd, y dir=both, y explicit,
                 error mark options={line width=0.25pt,rotate=90}]
           table[x=type,y=KG-HS]{\mytable};
          \addlegendentry{KG-HS}

          \addplot+[draw=black,fill,postaction={
            pattern=north east lines}]
            plot [error bars/.cd, y dir=both, y explicit,
                 error mark options={line width=0.25pt,rotate=90}]
            table[x=type,y=MPC]{\mytable};
          \addlegendentry{MPC}

          \addplot+[draw=black,fill,postaction={
            pattern=north east lines}]
            plot [error bars/.cd, y dir=both, y explicit,
                 error mark options={line width=0.25pt,rotate=90}]
            table[x=type,y=Execution]{\mytable};
          \addlegendentry{Execution}

        \end{semilogyaxis}
      \end{tikzpicture}
    \end{subfigure}
    \vspace{-4mm}
    \caption{
      \change{Computation time of each component across different fleet sizes.}
    }
    \label{tab:computation_time}
    \vspace{-2mm}
\end{figure}

\change{Furthermore, to demonstrate the
applicability to even larger robot teams,
the radius of the robots is reduced to~$0.15m$
and the maximum pushing force to~$15N$.
Under the same setting,
the proposed method is applied again to $5$, $10$, $20$ and $40$ robots,
of which the computation time of the main components are summarized in Fig.~\ref{tab:computation_time}.
As expected, to search for the guiding path is independent of the fleet size (around $3.7s$).
The computation time of KG-HS aligns with our complexity analysis in Sec.~\ref{subsubsec:complexity},
which takes $2.5s$ for $2$ robots and $6.4s$ for $40$ robots.
The cost of MG-SO and GJK is independent of the number of robots,
so the increase in time primarily stems from the $O(N^{3.5})$
complexity of computing~$J_{\text{MF}}$.
Lastly, it is interesting to note that the computation time for NMPC
decreases as the number of robots increases.
This is because the arc transition~$\mathbf{Trj}(\mathbf{s}_0,\mathbf{p}^{\bsss},\bar{t})$
for any planned mode can already be tracked accurately with enough robots,
which complies with the intuition that the pushing task is easier with more robots.
This leads to an offline planning time of only around $20s$ for~$40$ robots
and near-zero online planning time, with a high tracking accuracy
as shown in Fig.~\ref{fig:scalability_40robots}.
}

\begin{figure}[t!]
  \centering
  \includegraphics[width=0.95\linewidth]{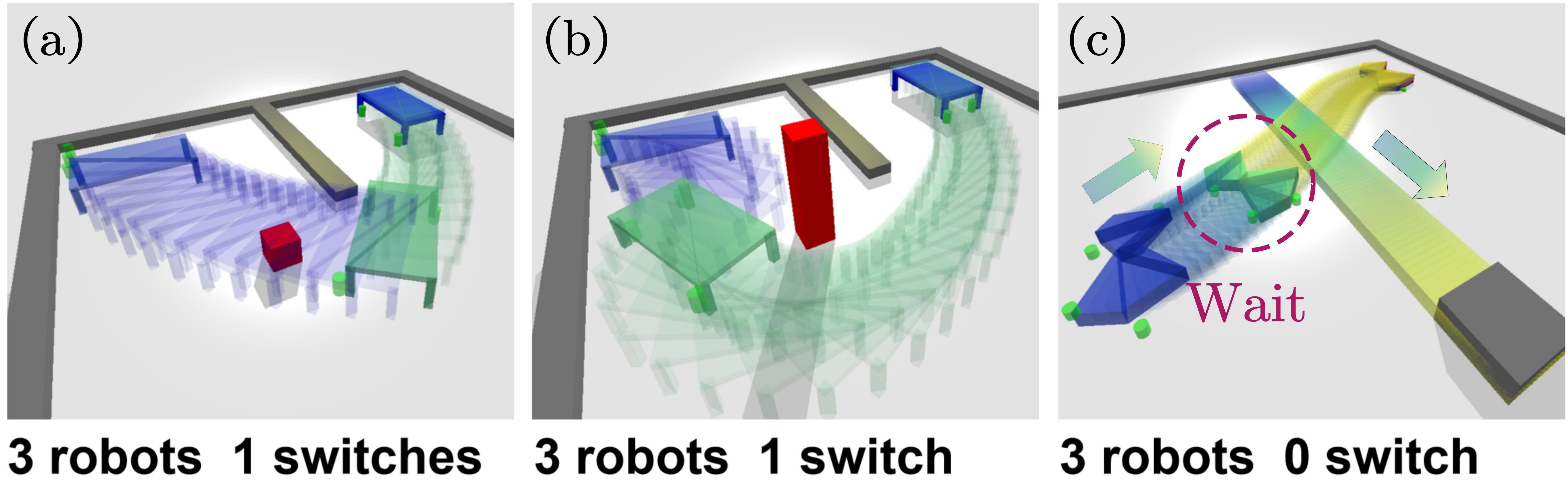}
  \vspace{-0mm}
  \caption{\change{Online adaptation under unknown obstacles (in red)
      and dynamic obstacles (in shaded grey).}}
  \label{fig:unknown_and_moving}
  \vspace{-5mm}
\end{figure}

\subsubsection{\change{Adaptability}}
\change{To further demonstrate the adaptability of the proposed method,
 the following scenarios are evaluated.}

\change{{(I) \textbf{Unknown and dynamic obstacles}}.
Since some obstacles might be unknown during offline planning,
it is important to adapt to them during online execution.
Fig.\ref{fig:unknown_and_moving} shows two cases that might appear:
(a) the guided path and refined trajectory of the object does not collide
with the small obstacle;
and (b) a potential collision is detected with a larger obstacle,
and a new hybrid plan is obtained
via the adaptation scheme described in Sec.~\ref{subsubsec:adaptation}.
Moreover, in case of a dynamic obstacle,
the current pose and predicted trajectory of the obstacle is incorporated
in the NMPC constraints in~\eqref{eq:nmpc}.
Consequently, as shown in Fig.\ref{fig:unknown_and_moving},
the robots slow down and wait for the obstacle to pass,
while following the \emph{same} hybrid plan.}

\begin{figure}[t!]
  \centering
  \includegraphics[width=0.95\linewidth]{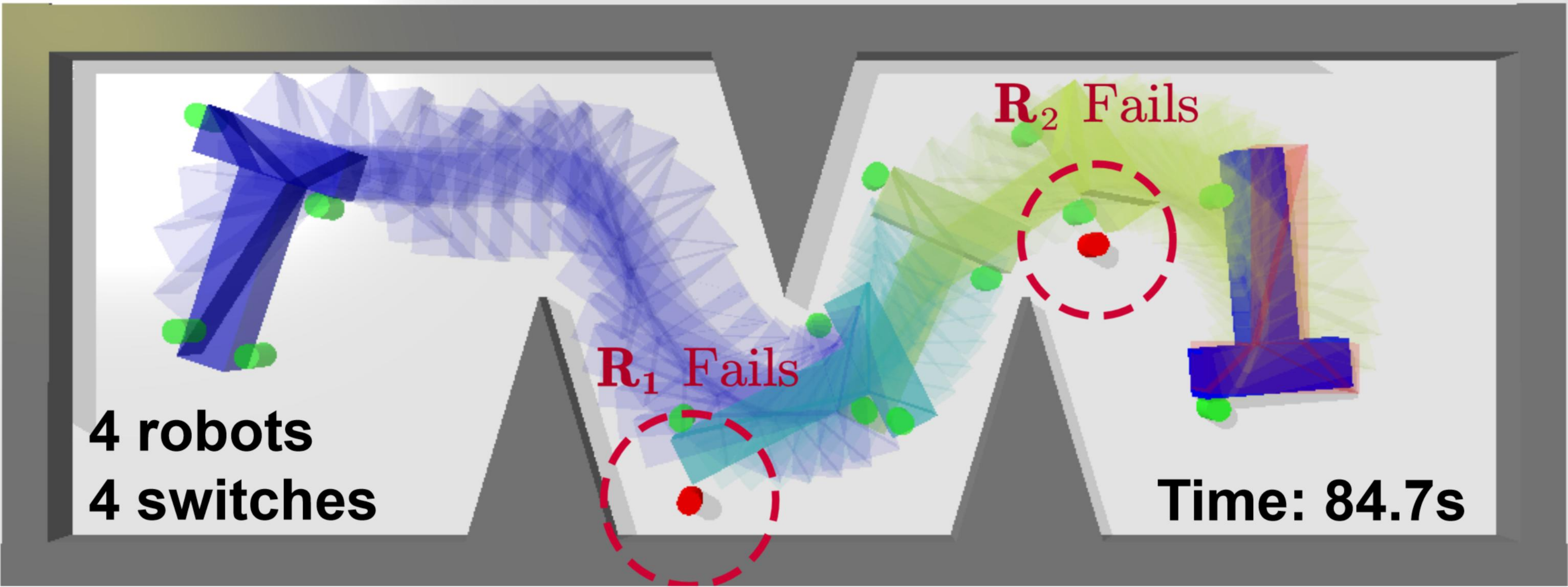}
  \vspace{-0mm}
  \caption{\change{Plan adaptation when two robots (in red) fail consecutively
  during execution}.}
  \label{fig:breakdown}
  \vspace{-4mm}
\end{figure}

\change{{(II) \textbf{Robot failures}}.
As the robots fail, the proposed online adaptation is essential for the system to recover.
Fig.\ref{fig:breakdown} shows the same pushing task as in Fig.~\ref{fig:sim_results}
with~$4$ robots, but $2$ of them fail consecutively at $25.2s$ and $58.3s$.
Via the replanning scheme, a new hybrid plan is generated each time a robot fails.
The task is successfully completed with the remaining $2$ robots at time~$84.7s$,
which however requires $3$ mode switches (compared with~$2$ switches in the nominal case).
It is worth noting that since the failed robots are treated as obstacles during re-planning,
the resulting hybrid plan avoids collision with these robots.}
\begin{figure}[t!]
  \centering
  \includegraphics[width=1.0\linewidth]{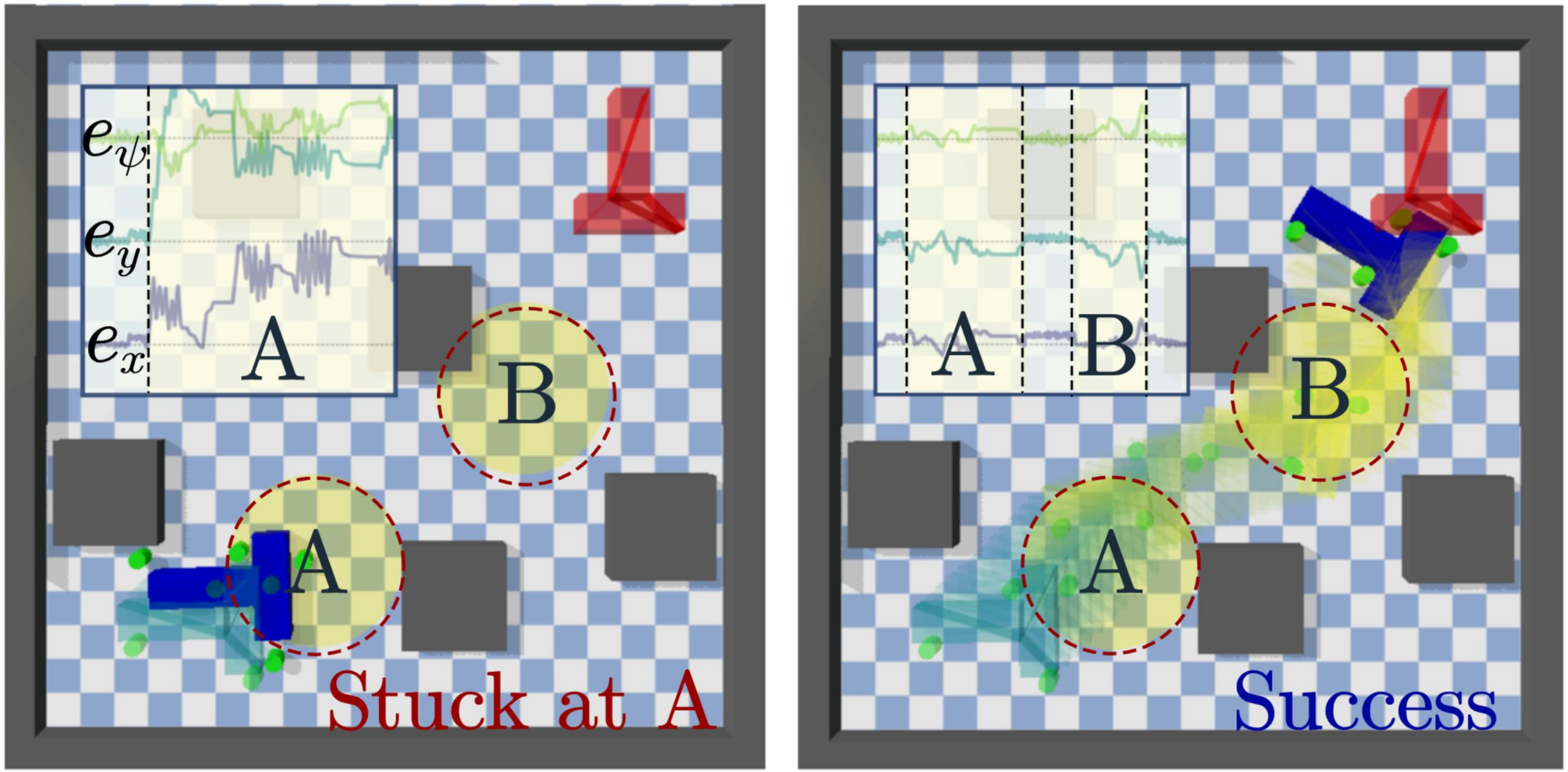}
  \vspace{-2mm}
  \caption{\change{Robustness to temporary ``blackout'' (in Regions~$A$ and $B$),
      where no measurements of the object state are obtained.
Final trajectories and state estimation error are shown
  by linear extrapolation (\textbf{Left}) and constrained EKF (\textbf{Right}).}}
  \label{fig:observation}
  \vspace{-4mm}
\end{figure}
\begin{figure*}[t!]
  \centering
  \includegraphics[width=0.975\linewidth]{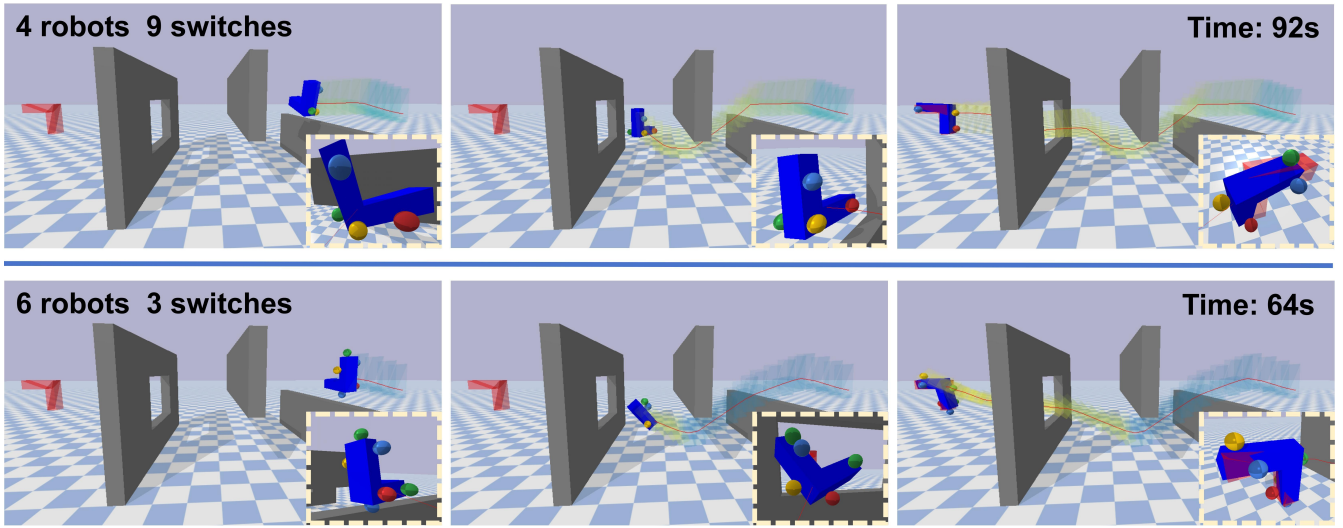}
  \vspace{-0mm}
  \caption{\change{Snapshots of 3D pushing tasks, where~$4$ robots (\textbf{Top})
      and~$6$ robots (\textbf{Bottom}) are deployed.}}
  \label{fig:3d_scenarios}
  \vspace{-4mm}
\end{figure*}

\change{{(III) \textbf{Temporary ``blackout''}}.
  Accurate observation of the state of the target object might not be always possible,
  e.g., the localization system for the object might be occluded temporarily thus no measurement
  of the object state is available.
  As described in Sec.~\ref{subsubsec:adaptation},
  an EKF is adopted to estimate the object state, given the states of the robots
  and the current pushing mode.
  Fig.\ref{fig:observation} shows the scenario where the objects can not be measured
  within two circular areas on the guided path.
  The proposed estimator has an accuracy of~$0.05m$ in position and $0.05rad$ in orientation,
  which leads to a successful task under $26\%$ blackout in time.
  In contrast, significant mismatch is generated via simple linear extrapolation,
  yielding a trajectory that is completely off in~\eqref{eq:nmpc}.
  Consequently, the robots deviate quickly from the correct path.}

\begin{figure}[t!]
  \centering
  \includegraphics[width=0.95\linewidth]{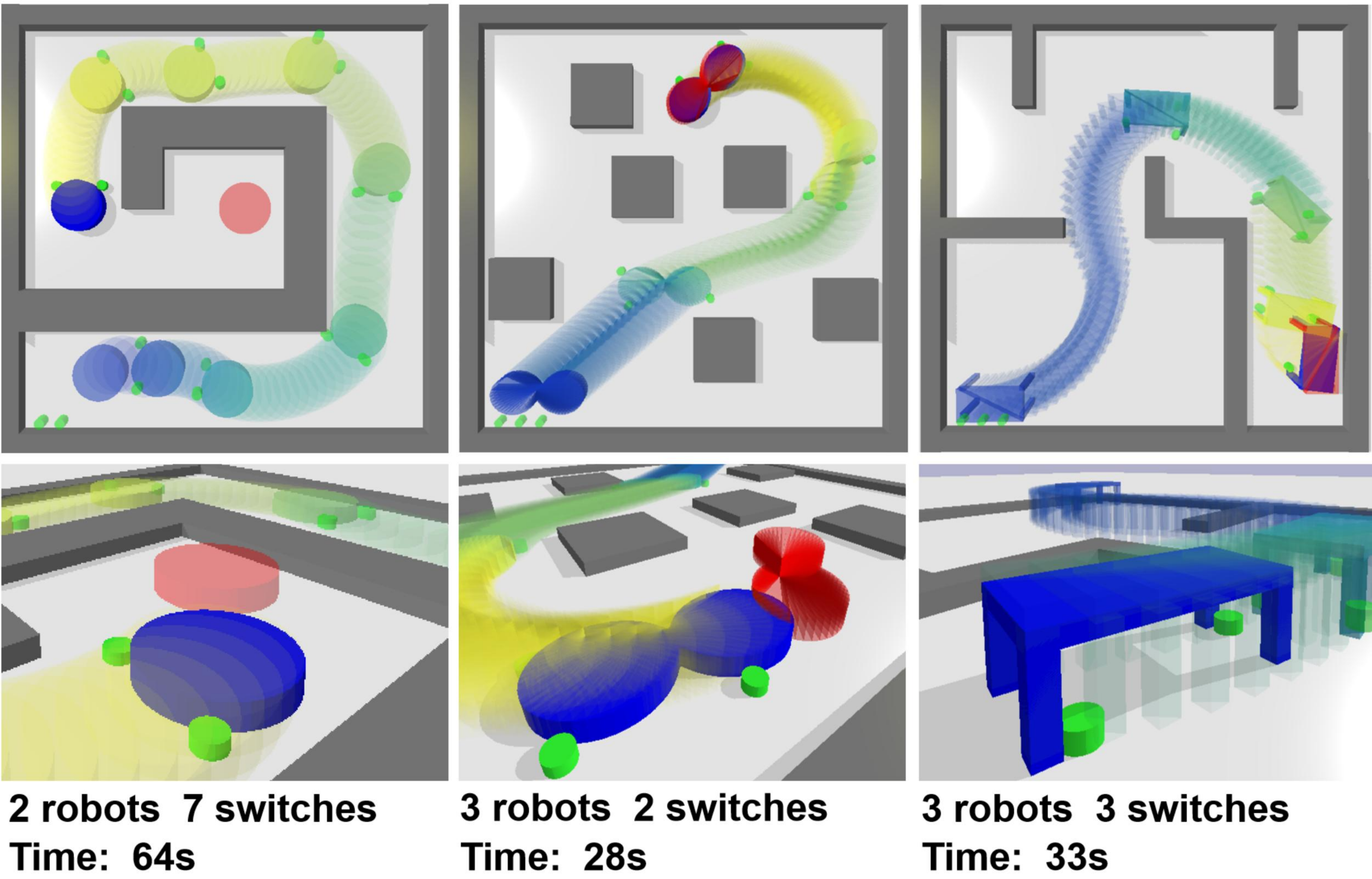}
  \vspace{-0mm}
  \caption{\change{Extension to non-polytopic objects:
  cylinder (\textbf{Left}), 8-shape (\textbf{Middle}),
  and table (\textbf{Right}).}}
  \label{fig:general_object}
  \vspace{-7mm}
\end{figure}

\subsubsection{\change{Generalization}}
\change{As discussed in Sec.~\ref{subsubsec:general},
several notable generalizations of the proposed method
are evaluated.}

\change{{(I) \textbf{Non-polytopic objects}}.
As shown in the Fig.\ref{fig:general_object},
the proposed approach is tested with three objects with general shape:
cylinder, 8-shape, and a table, i.e., non-polytopic objects.
To obtain all possible contact points for general shapes,
we traverse all possible positions of robots near the 3D boundaries of the object
with a certain discretization precision,
and further refine these points to ensure close contact with the object
via 3D collision checking.
Then, the same planning algorithm is applied for~$2,3,3$ robots.
The resulting hybrid plan and trajectories
are shown in~Fig.\ref{fig:general_object}:
the cylinder is pushed to the goal with~$8$ modes and~$64.0s$,
while the 8-shape reaches the goal with~$3$ modes and~$28.4s$.
It is interesting to note that although the table is only
partially in contact with the ground, the algorithm can find
the optimal pushing modes around four legs of the table
with $3$ mode switches.
}

\change{{(II) \textbf{Heterogeneous robots}}.
Three heterogeneous robots are deployed for the same pushing task
as in Fig.~\ref{fig:robotnum}, i.e.,
the maximum pushing forces~$f_{n,\max}$ in~\eqref{eq:force-limit}
are $60N$, $40N$ and $20N$, respectively.
The assignment and planning strategy
as described in Sec.~\ref{subsubsec:general} is adopted,
completing the task with~$4$ modes and~$60.8s$.
Compared with the nominal case where all robots are identical,
the assignments of robots to contact points are significantly
different:
(i) The robot with the largest force is always
assigned to the contact point that requires the largest force
(the top-left corner);
(ii) the robot with the smallest force is assigned to the
assistive contact points (the bottom-right corner).
Lastly, if this heterogeneity is neglected,
the task can still be completed but with a much longer time of~$82.2s$.
}

\begin{figure}[t!]
  \centering
  \includegraphics[width=0.95\linewidth]{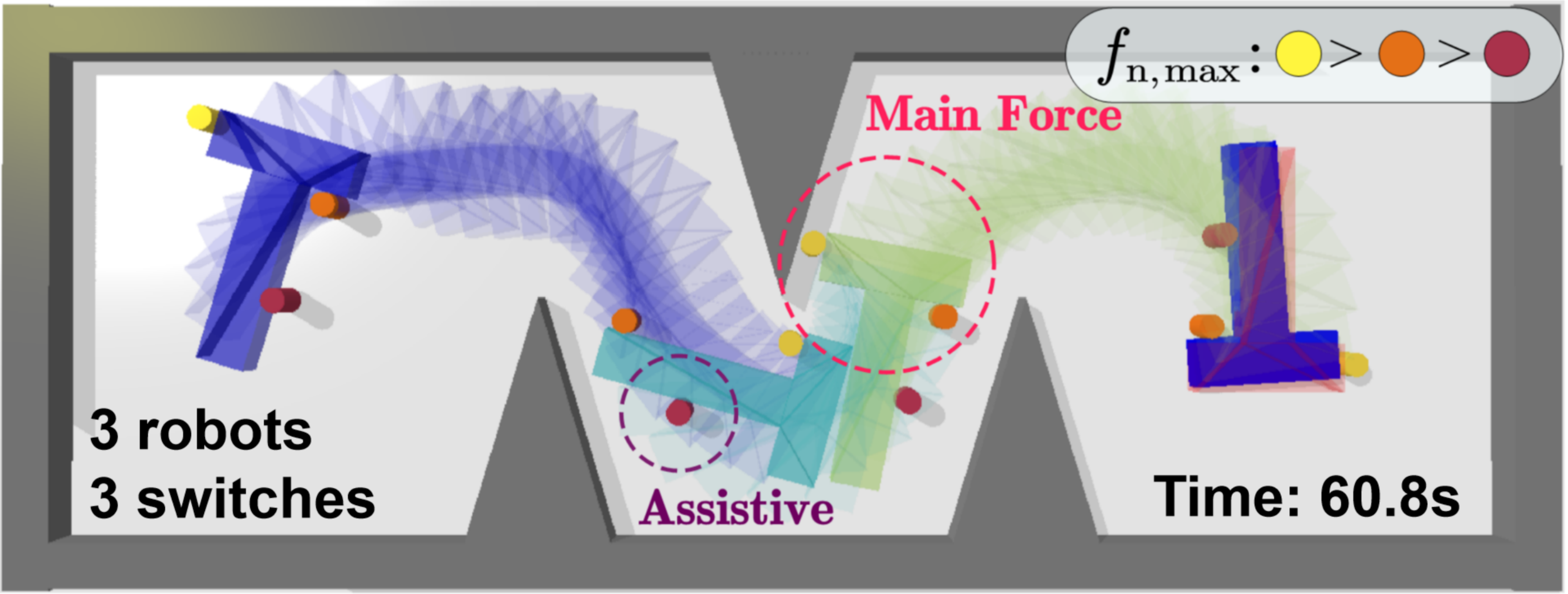}
  \vspace{-0mm}
  \caption{\change{Pushing task with $3$ heterogeneous robots with
  different maximum forces.}}
  \label{fig:hetergeneous}
  \vspace{-7mm}
\end{figure}

{(III) \textbf{6D pushing in 3D workspace}}.
The extension to pushing tasks in complex 3D workspace is demonstrated
here with a L-shape object and different number of robots.
The object has a mass of $5kg$ with the resistance coefficient~$\mathbf{K}=100$,
while the robots have a radius of $0.15m$ and a maximum pushing force of $20N$.
As shown in Fig.~\ref{fig:3d_scenarios},
the workspace has a size of~$10m\times 20m\times 5m$ with two walls and a small window.
The first row shows the snapshots of~$4$ robots performing the task,
which requires $10$ modes and a total mission time of~$92.2s$.
In contrast, the second row shows the results for $6$ robots,
which requires $4$ modes and a total mission time of~$64.3s$.
Thus, similar to the 2D case, a larger fleet often requires
less number of mode switches and faster completion.

\subsubsection{Comparisons}
The following \textbf{four} baselines are compared with the
proposed method (\textbf{KG-HS}):

(I) \textbf{POCB}, which follows the occlusion-based method
proposed in~\cite{chen2015occlusion},
i.e., assuming that a light source is placed at the goal point,
each robot randomly selects a contact point on the occluded surface
and applies a certain amount of pushing force.
Since the original algorithm only applies to star-shaped objects
and mainly for free-space
(intermediate goal points are set manually for cluttered environments),
two modifications are made for the comparison:
(i) the occlusion condition is replaced with an angular condition,
i.e., $\Delta \theta_{\texttt{G},\texttt{n}}\triangleq
\theta_{\texttt{G}}-\theta_{\texttt{n}}\in [-\frac{\pi}{2},\frac{\pi}{2}]$,
where $\theta_{\texttt{G}}$ is angle between the target center
and the goal;
and $\theta_{\texttt{n}}$ is the angle of the normal vector
at the contact point;
and (ii) the~$\texttt{A}^\star$ algorithm is applied to generate
a guiding path consisting of intermediate goal points;

\begin{table}[t!]
  \begin{center}
  \caption{Comparison with four baselines.}\label{tab:comparison}
  \begin{threeparttable}
  \begin{tabular}{cccccccc}
    \toprule[1pt]
    \textbf{Algorithm} & \textbf{SR}\tnote{1} & \textbf{TE}\tnote{2}& \textbf{CC}\tnote{3}
    & \textbf{SM}\tnote{4} & \textbf{PT}\tnote{5} & \textbf{ET}\tnote{6} & \textbf{EE}\tnote{7}\\
    \midrule
    \textbf{KG-HS} & 1.00 & 0.03 & 2241 & 0.31 & 13.21 & 44.31 & 0.14\\
    \midrule
    \textbf{AUS} & 1.00 & 0.03 & 2521 & 0.52 & 12.28 & 51.29 & 0.14\\
    \midrule
    \textbf{SDF} & 0.98 & 0.05 & 3369 & 0.58 & 20.88 & 60.18 & 0.14\\
    \midrule
    \textbf{POCB} & 0.57 & 1.94 & 6066 & 1.86 & 0.15 & 128.47 & 1.61\\
    \midrule
    \textbf{IAB} & 0.65 & 1.82 & 5130 & 0.73 & 3.56 & 117.31 & 1.70\\
    \bottomrule[1pt]
    \end{tabular}
  \begin{tablenotes}
  \item[1] Success rate; \item[2] Tracking error; \item[3] Control cost;
  \item[4] Smoothness; \item[5] Planning time; \item[6] Execution time; \item[7] End Error.
  \end{tablenotes}
  \end{threeparttable}
  \end{center}
  \vspace{-4mm}
  \end{table}

(II) \textbf{IAB}, which enhances {POCB} by
introducing a probability of choosing a contact point,
which is inversely correlated with the angle~$\Delta\theta_{\texttt{G},\texttt{n}}$.
Thus, the robot can prioritize pushing directions
that are more aligned towards the target,
which is particularly significant for long rectangular objects
in Fig.~\ref{fig:baseline}.

(III) \textbf{SDF}, which replaces MDF in KGHS with single directional feasibility
in both mode generation and cost estimation;

(IV) \textbf{AUS}, which does not follow the hybrid search framework,
but divides the guiding path equally into $L$ segments until
the resulting segmented arc trajectories are collision-free with obstacles,
as described in Theorem~\ref{thm:feasibility}.

The tests are repeated~$50$ times
for each method in all three scenarios,
by choosing random initial and goal positions for the target object.
Sample trajectories are shown in Fig.~\ref{fig:baseline}
for the first scenario.
As shown in Table~\ref{tab:comparison},
\textbf{seven} metrics are compared.
It can be seen that our {KG-HS} algorithm achieves the best performance
in all metrics except for the planning time,
as both {POCB} and {IAB} requires no optimization,
while {AUS} performs less times of mode generations.
Nonetheless, the KG-HS algorithm has the shortest runtime $57.5s$
by summing up the planning and execution time.

\begin{figure}[t!]
  \centering
  \includegraphics[width=0.95\linewidth]{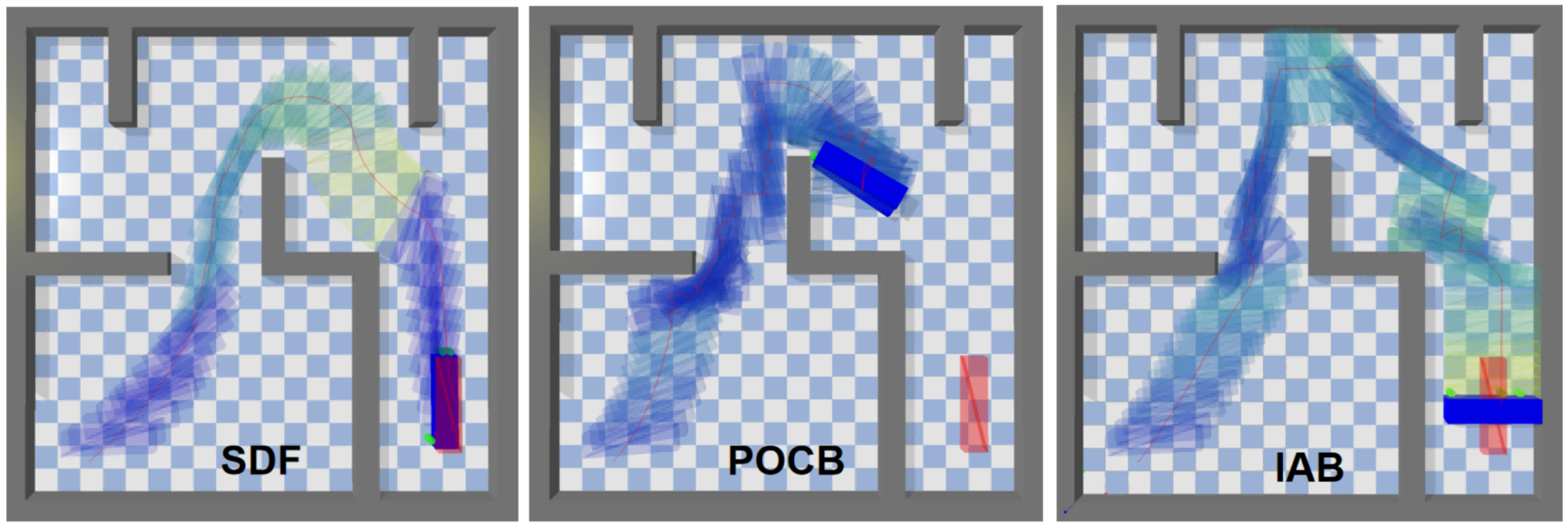}
  \vspace{-0mm}
  \caption{Trajectories of baselines
  for the first scenario.}
  \label{fig:baseline}
  \vspace{-6mm}
\end{figure}

More specifically, consider the following three aspects:
(I) \textbf{Importance of mode generation}.
It is worth noting that both AUS and SDF still outperform the POCB and IAB,
mainly due to the hybrid search framework
and the proposed feasibility analysis.
Particularly,
both POCB and {IAB} only take into account the difference between
the pushing directions and the goal direction,
thus completely neglecting the different torques generated at the contact points.
Thus, the target object can only be probabilistically moved to the target position,
rather than a continuous smooth trajectory,
especially for the first scenario where precise orientations are crucial.
In contrast, both {SDF} and KG-HS
consider the expected rotation of the object as determined by the arc transition.
(II) \textbf{Effectiveness of the hybrid search framework}.
The difference in tracking error between {AUS} and {KG-HS} algorithms
is small as they share the same modules for mode optimization and trajectory optimization.
However, compared with the equal segmentation in {AUS},
the hybrid search process in {KG-HS} can reduce the execution time by $13\%$ and the control cost by $11\%$,
while improving the smoothness by $42\%$.
(III) \textbf{Robustness against disturbances}.
For the first scenario,
random disturbances are added to the target velocity at a frequency of $10Hz$
with a variance proportional to its actual velocity at each time step.
The execution results are summarized in Fig.~\ref{fig:noise},
which shows that a significantly smaller variance in the tracking error
is achieved by {KG-HS} compared to the other methods.
\begin{figure}[t!]
  \centering
  \includegraphics[width=0.95\linewidth]{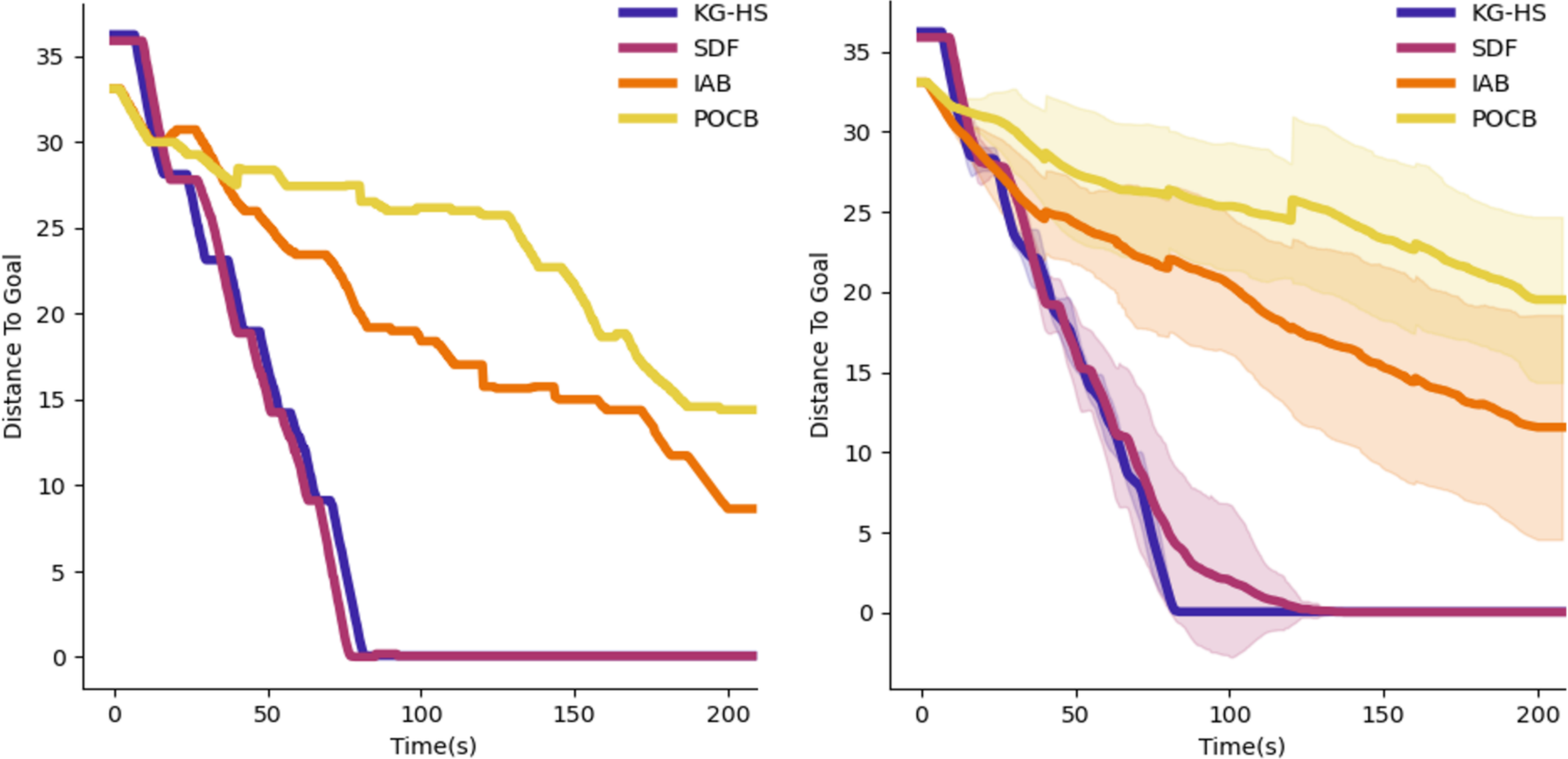}
  \vspace{-2mm}
  \caption{Distance to goal during execution without (\textbf{Left})
  and with (\textbf{Right}) disturbances,
  where the shaded area indicates the standard deviation over~$50$ runs.}
  \label{fig:noise}
  \vspace{-6mm}
\end{figure}

\subsection{Hardware Experiments}\label{subsec:hardware}

\subsubsection{System Description}\label{subsec:exp-description}
\change{
As shown in Fig.~\ref{fig:hardware-exp},
the hardware experiments are conducted in a $4m\times 5m$ environment,
with the motion capture system \texttt{OptiTrack}
providing global positioning of the target and the robots.
Three homogeneous ground robots \texttt{LIMO} have a rectangular shape
of~$0.2m \times 0.3m$,
with the maximum pushing force of about~$10N$.
Each robot is equipped with a NVIDIA Jetson Nano
that runs the \texttt{ROS1} system for communication and control.
The centralized planning and control algorithm is implemented on a laptop
with an Intel Core i7-1280P CPU,
which runs the master node of \texttt{ROS1} and
sends the velocity commands to all robots via wireless.
}
Moreover, a L-shape object with a mass of $1.5Kg$
is adopted for the second scenario,
while a rectangular object with a mass of $1Kg$ for the first scenario.
\change{
Both objects have a ground friction coefficient of $0.5$,
and a side friction coefficient of $0.3$.
}
The obstacles are made of white cardboards at known positions.
The main challenge for the second scenario
lies in the non-convex shape of the target with a relatively larger mass.
The object shape in the first scenario is simpler
but with tighter geometric constraints such as narrow passages.

 \begin{figure}[t!]
  \centering
  \includegraphics[width=1.0\linewidth]{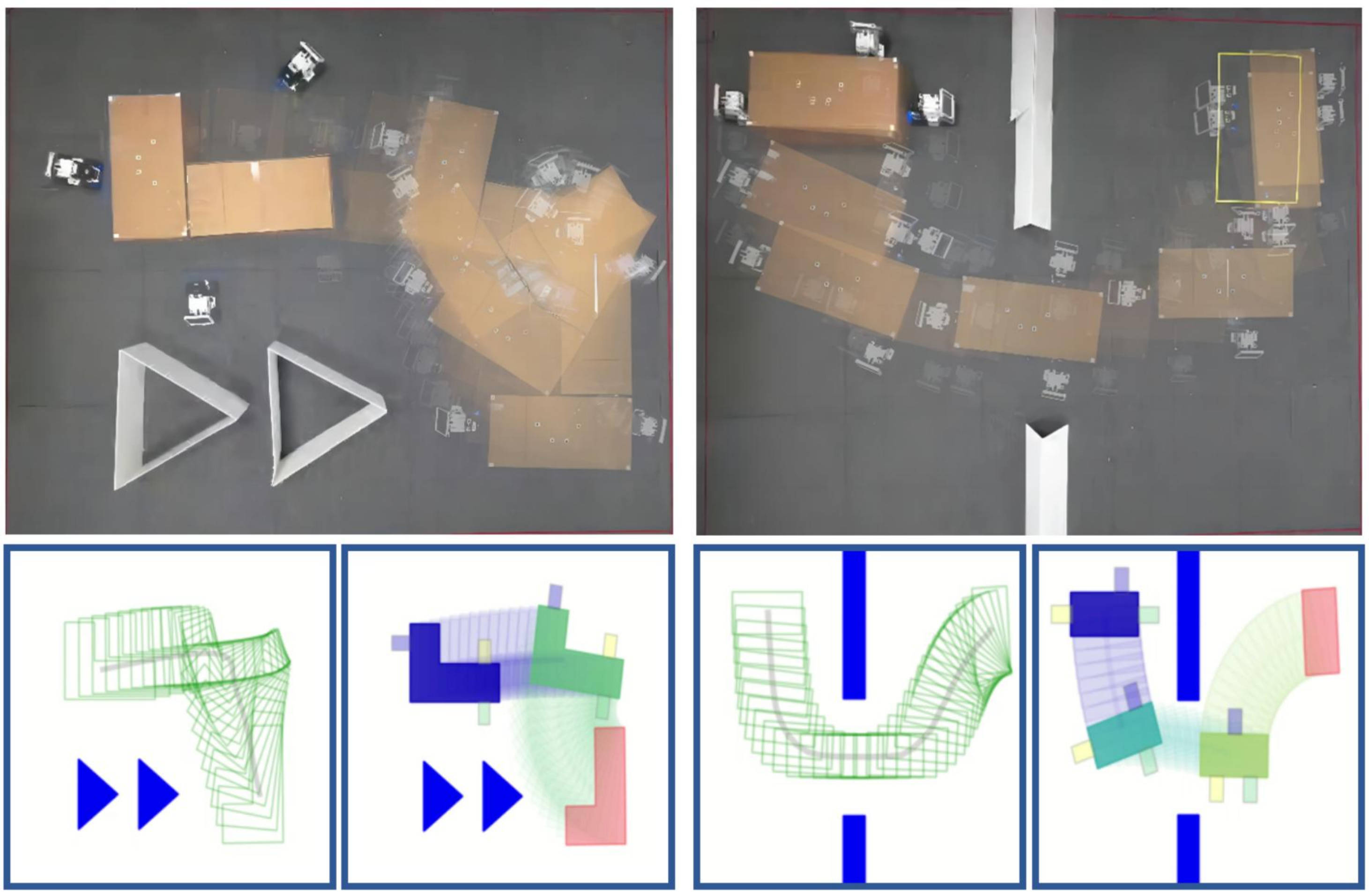}
  \vspace{-3mm}
  \caption{Two scenarios of the hardware experiments (\textbf{Top}),
  the guiding path and the hybrid plan (\textbf{Bottom}).}
  \label{fig:hardware-exp}
  \vspace{-6mm}
\end{figure}

\subsubsection{Results}\label{subsec:exp-results}
Overall, the experiment results are consistent with the simulations
for both free and cluttered environments.
Namely, via the proposed method, both tasks are completed successfully
within $50s$ and $55s$, which require $2$ and $3$ modes, respectively.
The average tracking errors are $0.023m$ and $0.052m$.
\change{Compared with simulations,
  the model uncertainties caused by communication delays
  and actuation noises or slipping are much more apparent in the hardware experiments.
  Thus, the replanning mechanism is necessary during online execution,
  as described in Sec.~\ref{subsubsec:adaptation}.
Once the object deviates from the planned trajectory by more than $0.2m$,
or remains static for~$5s$,
the KG-HS algorithm is re-executed to generate a new hybrid plan,}
which is then tracked by the NMPC controller.
As shown in Fig.~\ref{fig:hardware-replanning},
the system in some cases can not recover from the large deviation
from the reference trajectory in the current interaction mode.
Thus, a replanning is triggered,
after which another mode is adopted.

\section{Conclusion} \label{sec:conclusion}
This work proposes a collaborative planar pushing strategy of polytopic objects
with multiple robots in complex scenes.
It utilizes a hybrid search algorithm to generate intermediate keyframes
and a sparse optimization to generate sufficient pushing modes.
It is shown to be scalable and effective for any polytopic object
and an arbitrary number of robots in obstacle-cluttered scenes.
The completeness and feasibility of our algorithm have been demonstrated
both theoretically and experimentally.
Extensive numerical simulation and hardware experiments validate its
effectiveness for collaborative pushing tasks,
with numerous extensions to more general scenarios.

\change{
There are some limitations of the proposed approach that are worth mentioning:
(i) The quasi-static analyses in Sec.~\ref{subsubsec:quasi-static} are only
valid for slow movements of the object and the robots.
In high-speed scenarios,
the object may experience large instantaneous accelerations,
which could require the robotic fleet to react
with larger forces and torques that may not be allowed
in the current pushing mode.
Although this can be partially mitigated by
the multi-directional feasibility estimation
proposed in~\eqref{eq:multi-directional-feasibility}
and the inertia term to NMPC in~\eqref{eq:nmpc},
a comprehensive solution for the high-speed scenarios
remains part of our future work;
(ii) The intrinsics of the object such as mass, center of gravity
and friction coefficient
are all assumed to be known for the evaluation of MG-SO
in~\eqref{eq:multi-directional-feasibility}.
Although slight deviations can be handled by the NMPC in~\eqref{eq:nmpc}
and the re-planning process as described in Sec.~\ref{subsubsec:adaptation},
large uncertainties might require other techniques,
e.g., model-free and learning-based methods that adapt pushing strategies online.
This is part of our ongoing work;
(iii) The workspace is assumed to be feasible in the sense that
a guiding path~$\overline{\mathbf{S}}$ always exists for the given
initial and goal states.
Despite of being rather mild, it can be further relaxed,
e.g., by allowing movable obstacles or by temporarily
pulling out some robots to make room and re-join later,
which is part of future work;
(iv) Lastly, a centralized perception system is assumed that has a full knowledge
of the system state and the environment.
A decentralized method for perception and state estimation
would further improve the applicability.
}
\begin{figure}[t!]
    \centering
    \includegraphics[width=1.0\linewidth]{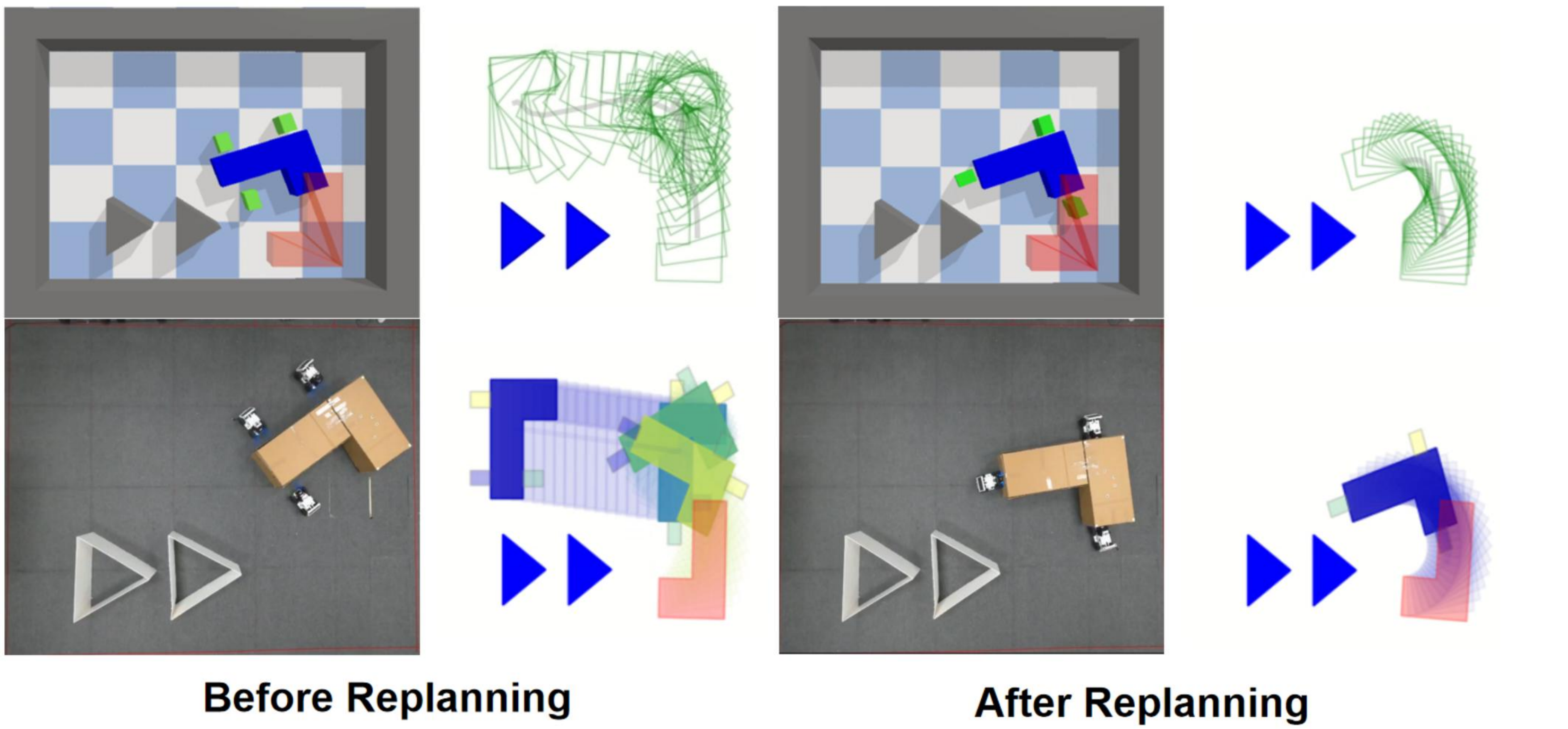}
    \vspace{-4mm}
    \caption{The hybrid plan and the resulting trajectory before (\textbf{Left})
    and after (\textbf{Right}) replanning.}
    \label{fig:hardware-replanning}
    \vspace{-6mm}
\end{figure}
  
\section*{Acknowledgement}
This work was supported by the National Key Research and Development Program of China under grant 2023YFB4706500;
the National Natural Science Foundation of China (NSFC) under grants 62203017, U2241214, T2121002;
and the Fundamental Research Funds for the central universities.

\bibliographystyle{plainnat}
\bibliography{contents/references}

\end{document}